\newcommand{\methodname}{FedPFT}
\title{Tackling Feature-Classifier Mismatch in Federated Learning via Prompt-Driven Feature Transformation}
\author{%
Xinghao Wu$^{1,*}$, Jianwei Niu$^{1,2}$, Xuefeng Liu$^{1,2,\text{\dag}}$, Mingjia Shi$^{3}$, \\ \textbf{Guogang Zhu}$^{1}$, \textbf{and Shaojie Tang}$^{4}$\\
$^1$School of Computer Science and Engineering, Beihang University, Beijing, China \\$^2$Zhongguancun Laboratory, Beijing, China \\
$^3$ Sichuan University, Sichuan, China \\
$^4$Department of Management Science and Systems, University at Buffalo
}
\begin{document}

\maketitle

\begin{abstract}
  In traditional Federated Learning approaches like FedAvg, the global model underperforms when faced with data heterogeneity. Personalized Federated Learning (PFL) enables clients to train personalized models to fit their local data distribution better. However, we surprisingly find that the feature extractor in FedAvg is superior to those in most PFL methods. More interestingly, by applying a linear transformation on local features extracted by the feature extractor to align with the classifier, FedAvg can surpass the majority of PFL methods. This suggests that the primary cause of FedAvg's inadequate performance stems from the mismatch between the locally extracted features and the classifier. While current PFL methods mitigate this issue to some extent, their designs compromise the quality of the feature extractor, thus limiting the full potential of PFL.
In this paper, we propose a new PFL framework called \methodname{} to address the mismatch problem while enhancing the quality of the feature extractor. \methodname{} integrates a feature transformation module, driven by personalized prompts, between the global feature extractor and classifier. In each round, clients first train prompts to transform local features to match the global classifier, followed by training model parameters. This approach can also align the training objectives of clients, reducing the impact of data heterogeneity on model collaboration. Moreover, \methodname's feature transformation module is highly scalable, allowing for the use of different prompts to tailor local features to various tasks. Leveraging this, we introduce a collaborative contrastive learning task to further refine feature extractor quality. Our experiments demonstrate that \methodname{} outperforms state-of-the-art methods by up to 7.08\%.
\end{abstract}

\renewcommand{\thefootnote}{}
\footnotetext{$^*$wuxinghao@buaa.edu.cn}
\footnotetext{$^\text{\dag}$Corresponding author}

\section{Introduction}

Federated Learning (FL) allows all clients to train a global model collaboratively without sharing their raw data. A key challenge in FL is data heterogeneity, meaning the data across different clients is not independently and identically distributed (non-IID). This issue results in degraded performance of the global model trained in conventional FL methods such as FedAvg \cite{mcmahan2017communication}.

To address this issue, Personalized Federated Learning (PFL) has been proposed, which allows clients to train personalized models to fit their local data distribution better. Many current PFL methods achieve personalization by personalizing some parameters of the global model. For example, FedPer \cite{arivazhagan2019federated} personalizes classifiers, FedBN \cite{li2021fedbn} personalizes BN layers, AlignFed \cite{zhuAlignFed} personalizes feature extractors, and FedCAC \cite{wu2023bold} selects parameters susceptible to non-IID effect for personalization.

\begin{table}[t]
\setlength{\abovecaptionskip}{0.cm}
\setlength{\belowcaptionskip}{-0.cm}
\caption{Comparison of different methods. Probe Acc. refers to the accuracy achieved by retraining the classifier with local data. Origin Acc. indicates the accuracy of the original model. Match Acc. represents the accuracy after applying a linear transformation to the features to adapt them to the classifier. The disparity between Origin Acc. and Match Acc. indicates the degree of mismatch.}
	\vskip 0in
	\begin{center}
				\begin{tabular}{ccccccc}
					\toprule
                     & \multicolumn{3}{c}{CIFAR-10, $\alpha=0.5$} & \multicolumn{3}{c}{CIFAR-10, $\alpha=1.0$} \\
                    \midrule
                    Methods & Probe Acc. & Origin Acc. & Match Acc. & Probe Acc. & Origin Acc. & Match Acc. \\
                    \midrule
                    FedAvg & 72.52\% & 59.66\% & 72.60\% & 68.38\% & 60.33\% & 68.37\%\\
                    \midrule
                    FedPer & 71.07\% & 68.86\% & 71.03\% & 66.51\% & 64.83\% & 66.75\% \\
                    FedBN & 70.15\% & 66.20\% & 70.60\% & 66.51\% & 62.97\% & 66.80\% \\
                    FedCAC & 71.56\% & 68.71\% & 71.63\% & 66.98\% & 64.90\% & 67.11\% \\
                    \midrule
                    Ours & 77.25\% & 77.06\% & 77.68\% & 74.02\% & 73.88\% & 74.75\% \\

                    \bottomrule
				\end{tabular}
	\end{center}
 \label{table:prexeperiment}
	\vskip -0.0in
        \vspace{-0.25in}
\end{table}

Although the above methods demonstrate significant performance improvements over the global model, an interesting observation emerged from our experiments: \textit{the feature extractor derived from FedAvg outperforms those in most PFL methods}. Specifically, we conduct linear probe experiments in which each client employs a randomly initialized linear classifier (probe) behind the feature extractor, and this classifier is subsequently retrained. As evident from Table~\ref{table:prexeperiment}, the Probe Acc. of FedAvg exceeds that of the PFL methods, indicating that the features extracted by FedAvg exhibit superior linear separability. This suggests that FedAvg has greater potential to outperform PFL methods.

These findings prompt us to further explore why FedAvg underperforms on client-local data compared to PFL methods. To unveil this puzzle, we introduce a linear layer between the global feature extractor and the classifier on each client, training this layer with local data to align the features with the classifier. According to the Match Acc. in Table~\ref{table:prexeperiment}, applying a linear transformation to local features significantly improves accuracy over the original model (Origin Acc.), even exceeding the Origin Acc. of current PFL methods. This indicates that the fundamental reason for the global model's inadequate performance lies in \textit{the mismatch between local features and the global classifier}. 

Further experiments with PFL methods demonstrate that while they somewhat mitigate the mismatch issue, their design inadvertently degrades the quality of the feature extractor, leading to a lower Match Acc. compared to FedAvg. More importantly, current PFL methods still face issues of mismatch. This problem not only diminishes model accuracy during inference but also affects the synergy between the feature extractor and the classifier during training, ultimately impacting the feature extractor's quality. These observations suggest that significant untapped potential remains within PFL.

In PFL, targeted designs are imperative to tackle the mismatch problem during training and improve the quality of the feature extractor. Hence, we introduce a novel PFL method called \methodname{}. Drawing inspiration from prompt technology \cite{jia2022visual}, which utilizes prompts as inputs to guide a model's behavior, \methodname{} integrates a vision-prompt-driven feature transformation module between the global feature extractor and classifier. In each iteration, \methodname{} initially trains prompts to guide local feature transformation to align with the global classifier. This process aligns the local features with the global feature space partitioned by the classifier, thereby achieving alignment of training objectives among clients. Subsequently, training the model parameters based on this alignment can alleviate the impact of non-IID data on client collaboration and enhance the quality of the feature extractor.

Furthermore, our proposed framework exhibits notable scalability. Clients' local features can be transformed by different task-specific prompts to accommodate various tasks. Leveraging this capability, we introduce a collaborative contrastive learning task among clients to further enhance the quality of the feature extractor. As evidenced in Table \ref{table:prexeperiment}, our method not only resolves the mismatch issue but also significantly improves the quality of the feature extractor.

Our main contributions can be summarized as follows:
\begin{itemize}
    \item We identify the root cause of the inadequate performance of the global model stemming from the mismatch between local features and the classifier. The reason personalizing some parameters can improve performance is that it alleviates the impact of this issue. This provides a new perspective for future PFL approaches to better address the non-IID problem.
    \item We propose a new PFL framework, which incorporates a feature transformation module to align local features with the global classifier. This approach not only resolves the mismatch problem but also significantly enhances the performance of the feature extractor.
    \item Our experiments on multiple datasets and non-IID scenarios demonstrate the superiority of \methodname{}, outperforming state-of-the-art methods by up to 7.08\%.
\end{itemize}


\section{Related Work}
PFL is a kind of effective approach to address the challenges of non-IID data in FL. There is a surge of methodologies within PFL, with parameter decoupling methods gaining significant attention due to their simplicity and effectiveness, thus becoming one of the mainstream research directions in PFL. For a more detailed summary of other categories of PFL methods, please refer to Appendix~\ref{app:related}.

\textbf{Parameter decoupling} methods aim to decouple a subset of parameters from the global model for personalization. Approaches such as FedPer \cite{arivazhagan2019federated}, FedRep \cite{collins2021exploiting}, and GPFL \cite{Zhang_2023_ICCV} focus on personalizing the classifier. In contrast, methods like LG-FedAvg \cite{liang2020think}, and AlignFed \cite{zhuAlignFed} advocate for the personalization of the feature extractor. Additionally, FedBN \cite{li2021fedbn} and MTFL \cite{mills2021multi} propose personalizing batch normalization (BN) layers within the feature extractor. Techniques employing deep reinforcement learning \cite{sun2021partialfed} or hypernetworks \cite{ma2022layer} have been used to determine which specific layers to personalize. The recent FedCAC \cite{wu2023bold} method advances this by introducing a metric for parameter-wise selection. 

These decoupling methods help alleviate the mismatch issue within the global model by allowing local parameter adjustments. For instance, personalized classifiers involve local adjustments to the classifier to match it with the local features extracted by the global feature extractor. However, these methods do not completely resolve the mismatch issue during training. Personalizing parameters often reduce the extent of client information exchange, which can diminish the overall quality of the feature extractor, thus limiting the potential benefits of PFL.

\newcommand{\SamJ}[1]{\textcolor{blue}{#1}}
\section{Methodology}
\begin{figure}[b]
\setlength{\abovecaptionskip}{0.cm}
\setlength{\belowcaptionskip}{-0.cm}
		\centerline{\includegraphics[width=\linewidth, bb=0 0 1306 498]{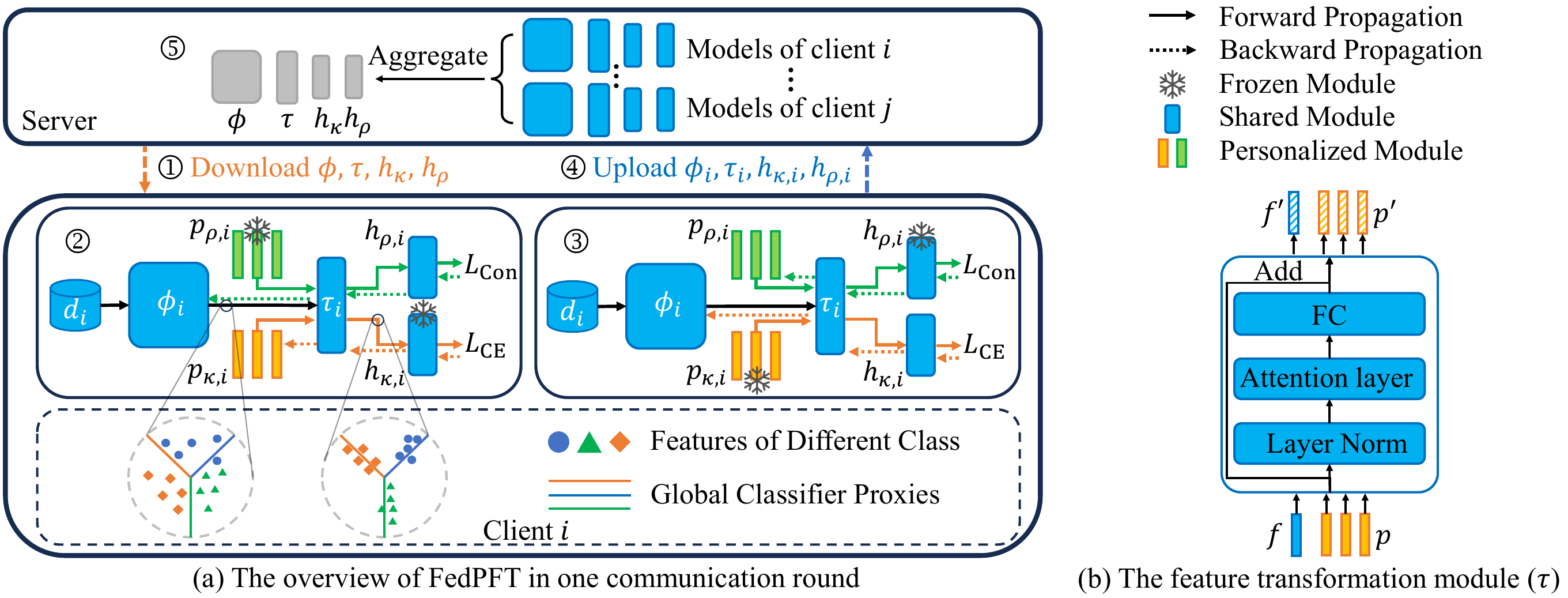}}
        \caption{Overview of \methodname{}. (a) The training process of each client $i$ in one communication round. (b) The feature transformation module in \methodname{}.}
		\label{fig:overview}
\end{figure}
\subsection{Overview of \methodname{}}
Before delving into the details of \methodname{}, we first provide an overview, as illustrated in Figure~\ref{fig:overview}(a). Each training round in \methodname{} includes five key steps:

(1) \ Clients download the global models, which include the feature extractor $\phi$, feature transformation module $\tau$, classifier $h_{\kappa}$, and feature projection head $h_{\rho}$. These models serve to initialize the corresponding local models $\{\phi_i, \tau_i, h_{\kappa, i}, h_{\rho, i}\}$.

(2) \ Each client $i$ updates the $\phi_i$, $\tau_i$ and $h_{\rho, i}$ to minimize contrastive learning loss $L_{\text{Con}}$, aiming to enhance the generalization of the feature extractor. It also updates $\tau_i$ and classification prompts $p_{\kappa, i}$ with the cross-entropy loss $L_{\text{CE}}$ to align local features with the global classifier. 

(3) \ Each client $i$ freezes the prompts $p_{\kappa, i}$ and trains $\phi_i$, $\tau_i$, and $h_{\kappa, i}$ using $L_{\text{CE}}$ to adapt the model to the classification task. It also makes the contrastive learning prompts $p_{\rho, i}$ trainable to align features with the contrastive learning task. 

(4) \ Clients upload $\{\phi_i, \tau_i, h_{\kappa, i}, h_{\rho, i} \}$ to the server while retaining $\{p_{\kappa, i}, p_{\rho, i} \}$ locally. 

(5) \ The server aggregates the models uploaded by the clients.

\subsection{Problem Formulation}
In PFL, $N$ clients train their personalized models $w_i, i\in [N]$ under the coordination of a server, aiming for each $w_i$ to perform well on client data distribution $\mathbb{D}_i$. This objective can be formalized as $\min_{\{w_i \}_{i \in [N]} } \frac{1}{N}\sum_{i=1}^N L_i(w_i; \mathbb{D}_i)$, where $L_i$ represents the loss function of the $i$-th client. 

In this paper, our goal is to enhance personalized models by addressing the mismatch problem between local features and the classifier in the global model and improving the quality of the feature extractor. Thus, the training objective of \methodname{} can be formulated as:
\begin{equation}
   \label{equ:main_problem} \min_{\phi,\tau,h_{\kappa}}
\min_{\{p_{\kappa,i}\}_{i\in[N]}}
\mathbf{E}_{i}
\{
L_{i}(\phi, \tau, h_{\kappa}, p_{\kappa,i};d_i) := \mathbf{E}_{d_i}
[L_{\text{CE}}(\phi, \tau, h_{\kappa}, p_{\kappa,i};d_{i}) + L_{\text{Con}}(\phi, \tau; d_{i})
]
\},
\end{equation}

where $\phi$ and $h_{\kappa}$ represent the feature extractor and classifier of the global model, respectively. $\tau$ is the newly introduced global feature transformation module. This module, along with the classification prompt $p_{\kappa, i}$, transforms local features to align with the global classifier. $L_{\text{CE}}$ denotes the cross-entropy loss for classification tasks, while $L_{\text{Con}}$ represents the contrastive learning loss designed to enhance the feature extractor's quality. $d_i$ represents the local data of the client.

\subsection{Feature Transformation Module}

In \methodname{}, we introduce a global feature transformation module $\tau$, along with a set of prompts $p_{\kappa, i}$ for each client $i$, to align the features extracted by the feature extractor $\phi$ with the global classifier.

Formally, given a sample $x_j \in d_i$, extracted by the feature extractor $\phi$, the obtained feature is $f_j \in \mathbb{R}^{m}$, where $m$ is the feature dimension. A collection of $n$ prompts is denoted as $p=\{\boldsymbol{p}^k \in \mathbb{R}^m | k \in \mathbb{N}, 1 \le k \le n \}$. The operation of the feature transformation module is formulated as
\begin{equation}
    [f_j', p'] = \tau([f_j, p]),
\end{equation}
where $[\cdot, \cdot]$ signifies stacking and concatenation along the sequence length dimension, yielding $[f_j', p'] \in \mathbb{R}^{(1+n) \times m}$. The $f_j'$ represents the transformed feature. An example of the feature transform module is illustrated in Figure~\ref{fig:overview}(b).

The feature transformation module essentially adapts local features for downstream tasks, providing good scalability. We can introduce tasks beneficial for client collaboration using different task-specific prompts $p$. Leveraging this, \methodname{} additionally introduces a contrastive learning task and utilizes contrastive learning prompts $p_{\rho, i}$ for feature transformation. We denote $n_{\kappa}$ and $n_{\rho}$ as the number of prompts contained in $p_{\kappa, i}$ and $p_{\rho, i}$, respectively.

\subsection{Classification Task with Personalized Prompts}
The classifier is highly susceptible to the influence of non-IID data, leading to a mismatch between the global classifier and local features. Different from the previous methods, which personalize the classifier to match local features, we find that using a global classifier provides clients with a unified feature partition space. Clients aligning features with this space not only solves the mismatch problem but also aligns training objectives among clients, reducing the impact of non-IID on collaboration.

To implement this, we retain the global feature extractor and classifier while employing a set of personalized classification prompts $p_{\kappa, i}$ to transform each client $i$'s local features to better align with the global classifier. Specifically, the classification loss in each client $i$ is defined as:
\begin{equation}
    L_{\text{CE}}(\phi, \tau, p_{\kappa,i}, h_{\kappa}; x, y) = -\log\sum_{c=1}^{C} y_c \log(o_{i,c}), \text{where } x, y \sim d_i.
\end{equation}
$C$ is the number of classes, and $o_i = \text{Softmax}(h_{\kappa} \circ \tau([\phi(x), p_{\kappa, i}]))$ represents the predicted probabilities, with $o_{i, c}$ being the ones of class $c$. Details on coordinating the training of the model and prompts to achieve feature and classifier alignment are discussed in Section~\ref{sec:alter}.

\subsection{Contrastive Learning Task}
Contrastive learning tasks have shown robustness to the challenges posed by non-IID data distributions \cite{wang2023does}. To further enhance the quality of the model's feature extractor, we introduce a contrastive learning task using the Momentum Contrast (MoCo) \cite{he2020momentum} framework. The associated contrastive loss function is defined as:
\begin{equation}
    L_{\text{Con}}(\phi, \tau, p_{\rho,i}, h_{\rho}; x) = - \log \frac{\exp \left( q \cdot k_{+} / \beta \right)}{\sum_{j=0}^{K} \exp \left( q \cdot k_j / \beta \right)}, \text{where } x \sim d_i.
\end{equation}
$h_{\rho}$ is the projection head used for contrastive learning. In this formula, $q = h_{\rho} \circ \tau([\phi(x'), p_{\rho, i}])$ represents the query vector, and $k_{+}=\tilde{h}_{\rho} \circ \tau([\tilde{\phi}(x''), p_{\rho, i}])$ denotes the positive key vector. Here, $x'$ and $x''$ are augmented versions of the sample $x$, $\tilde{\phi}$ and $\tilde{h}_{\rho}$ refer to the momentum-updated encoder and projection head, respectively. $\beta$ is a temperature hyperparameter, and $K$ is the number of negative samples drawn from MoCo’s queue, comprising the set $\{k_j\}_{j=0}^K$.

\subsection{Alternating Training Strategy}\label{sec:alter}
To effectively coordinate the training of different modules in \methodname{}, we propose an alternating training strategy, which partitions each local training round into two distinct phases: the feature learning phase and the task adaptation phase.

\paragraph{Feature Learning Phase.} In this phase, the training objective can be formulated as
\begin{equation}\label{eq:phase 1}
 \mathop{\min}\limits_{p_{\kappa, i}, \phi_i, \tau_i, h_{\rho, i}} \left\{ L_{\text{CE}}(\phi_i, \tau_i, p_{\kappa, i}, h_{\kappa}; d_i) + L_{\text{Con}}(\phi_i, \tau_i, p_{\rho, i}, h_{\rho, i}; d_i) \right\}.
\end{equation}
$L_{\text{CE}}$ trains the classification prompts $p_{\kappa, i}$, aligning local features with the global classifier $h_{\kappa}$, while $L_{\text{Con}}$ is aimed at training the feature extractor $\phi_i$ to derive general feature representations and mitigate the impact of non-IID data. Notably, during this phase, $\phi_i$ is exclusively updated by $L_{\text{Con}}$.

\paragraph{Task Adaptation Phase.} Following the above phase, this phase refines the previously learned features for the classification task and further enhances the classifier. The training objective is
\begin{equation}\label{eq:phase 2}
 \mathop{\min}\limits_{\phi_i, \tau_i, h_{\kappa, i}, p_{\rho, i}} \left\{ L_{\text{CE}}(\phi_i, \tau_i, p_{\kappa, i}, h_{\kappa, i}; d_i) + L_{\text{Con}}(\phi_i, \tau_i, p_{\rho, i}, h_{\rho, i}; d_i) \right\}.
\end{equation}
The second item in Eq.~\eqref{eq:phase 2} focuses on training the contrastive learning prompts $p_{\rho, i}$, aiming to align the feature with the contrastive learning task and mitigate the impact of the classification task on the contrastive learning task. In this phase, $\phi_i$ is updated solely by $L_{\text{CE}}$.

Let $R$ represent the total number of local epochs in one training round. We divide into $R_f$ epochs for the feature learning phase and $R_a$ epochs for the task adaptation phase, where $R_f + R_a = R$. It is crucial that $R_f$ is always larger than $R_a$ to ensure: 1) $\phi_i$ is predominantly trained by the contrastive learning task, reducing the impact of the non-IID problem on collaboration in feature extraction; 2) Improved alignment of features utilized for classification at the client side with the global classifier, thereby achieving better alignment of training objectives across clients.

Upon completing local training, the parameters $\phi_i$, $\tau_i$, $h_{\kappa, i}$, and $h_{\rho, i}$ are aggregated at the server to foster client collaboration, while $p_{\kappa, i}$ and $p_{\rho, i}$ remain locally. We simply adopt the aggregation method used in FedAvg. The pseudo-code of \methodname{} is summarized in Algorithm~\ref{alg:\methodname{}}.

\begin{algorithm}[htb]
	\caption{\methodname{}}
	\label{alg:\methodname{}}
	{\small
		\begin{algorithmic}
			\STATE {\bfseries Input:} Each client's initial personalized prompts $p_{\kappa, i}^{(0)}$ and $p_{\rho, i}^{(0)}$; The initial global models $\{\phi^{(0)}, \tau^{(0)}, h_{\kappa}^{(0)}, h_{\rho}^{(0)} \}$;
			Client Number $N$;
			Total round $T$; Epochs of two learning phases $R_f$ and $R_a$.
			\STATE {\bfseries Output:} Personalized model $\{\phi^{(T)}, \tau^{(T)}, h_{\kappa}^{(T)}, p_{\kappa, i}^{(T)} \}$ for each client. \\
			\FOR {$t = 0$ to $T-1$}
			\STATE \textbf{Client-side:}
			\FOR{$i=1$ to $N$ \textbf{in parallel}}
			\STATE Initializing $\{\phi_i^{(t)}, \tau_i^{(t)}, h_{\kappa, i}^{(t)}, h_{\rho, i}^{(t)} \}$ with $\{\phi^{(t)}, \tau^{(t)}, h_{\kappa}^{(t)}, h_{\rho}^{(t)} \}$.
			\STATE Updating $\{\phi_i^{(t)}, \tau_i^{(t)}, p_{\kappa, i}^{(t)}, h_{\rho, i}^{(t)} \}$ by Eq.\eqref{eq:phase 1} for $R_f$ epochs to obtain $\{\phi_i^{(t')}, \tau_i^{(t')}, p_{\kappa, i}^{(t+1)}, h_{\rho, i}^{(t+1)} \}$.
			\STATE Updating $\{\phi_i^{(t')}, \tau_i^{(t')}, p_{\rho, i}^{(t)}, h_{\kappa, i}^{(t)} \}$ by Eq.\eqref{eq:phase 2} for $R_a$ epochs to obtain $\{\phi_i^{(t+1)}, \tau_i^{(t+1)}, p_{\rho, i}^{(t+1)}, h_{\kappa, i}^{(t+1)} \}$.
			\STATE Sending $\{\phi_i^{(t+1)}, \tau_i^{(t+1)}, h_{\kappa, i}^{(t+1)}, h_{\rho, i}^{(t+1)} \}$ to the server.
			\ENDFOR
			\STATE \textbf{Server-side:}
			\STATE Aggregating a set of global model $\{\phi^{(t+1)}, \tau^{(t+1)}, h_{\kappa}^{(t+1)}, h_{\rho}^{(t+1)} \}$.
			
			\STATE Sending $\{\phi^{(t+1)}, \tau^{(t+1)}, h_{\kappa}^{(t+1)}, h_{\rho}^{(t+1)} \}$ to each client $i$.
			\ENDFOR
	\end{algorithmic}}
 \vspace{-3pt}
\end{algorithm}


\section{Experiments}
\subsection{Experimental Setup}\label{sec:experiment setup}
\paragraph{Datasets.} We employ three datasets for experimental validation: CIFAR-10 \cite{krizhevsky2010cifar}, CIFAR-100 \cite{krizhevsky2009learning}, and Tiny ImageNet \cite{le2015tiny}. We utilize two scenarios: Dirichlet non-IID and Pathological non-IID. 

In our experiments, each client is assigned 500 training samples. For CIFAR-10 and CIFAR-100 datasets, each client has 100 test samples; for the Tiny ImageNet dataset, each client has 200 test samples. Both training and test data have the same label distribution.

\textbf{Baseline Methods.} We compare our method against nine state-of-the-art (SOTA) methods: FedAMP \cite{huang2021personalized}, Fedper \cite{arivazhagan2019federated}, FedRep \cite{collins2021exploiting}, FedBN \cite{li2021fedbn}, FedRoD \cite{chen2022on}, pFedSD \cite{jin2022personalized}, pFedGate \cite{chen2023efficient}, FedCAC \cite{wu2023bold}, and pFedPT \cite{li2023visual}.  These methods cover the advancements in mainstream PFL research directions.

\textbf{Hyperparameter Settings.}  For the general hyperparameters of FL, we set the number of clients $N=40$, Batch Size $B=100$, and local update rounds $R=5$. Across all datasets, we set the total rounds $T=1000$ in each experiment to ensure convergence and select the highest average accuracy achieved by all clients across all rounds as the result. Each experiment is repeated with three random seeds, and the mean and standard deviation are reported. We employ the ResNet \cite{he2016deep} model architecture, specifically ResNet-8 for CIFAR-10 and ResNet-10 for CIFAR-100 and Tiny ImageNet. 

For more details on the experimental setup, please refer to Appendix~\ref{app:experiment setup}.

\subsection{Comparison with State-of-the-art Methods}
\begin{table*}[htb]
	\setlength{\abovecaptionskip}{0.cm}
\setlength{\belowcaptionskip}{-0.cm}
 \caption{Test accuracy (\%) of different methods under Dirichlet non-IID on CIFAR-100 and Tiny ImageNet.}
 \label{expe:dirichlet noniid}
	\begin{center}
 \small
		\begin{tabular}{@{}ccccccc@{}}
			\toprule
			&  \multicolumn{3}{c}{CIFAR-100}              & \multicolumn{3}{c}{Tiny ImageNet}           \\ \midrule
			Methods &  $\alpha=0.1$ & $\alpha=0.5$ & $\alpha=1.0$ & $\alpha=0.1$ & $\alpha=0.5$ & $\alpha=1.0$ \\ \midrule
			\makecell{FedAvg} &  \makecell{34.91\small$\pm$0.86} & \makecell{32.78\small$\pm$0.23} & \makecell{33.94\small$\pm$0.39} & \makecell{21.26\small$\pm$1.28} & \makecell{20.32\small$\pm$0.91} & \makecell{17.20\small$\pm$0.54} \\
			Local &  \makecell{47.61\small$\pm$0.96} & \makecell{22.65\small$\pm$0.51} & \makecell{18.76\small$\pm$0.63} & \makecell{24.07\small$\pm$0.62} &  \makecell{\; 8.75\small$\pm$0.30} &  \makecell{\; 6.87\small$\pm$0.28} \\
			\midrule
			\makecell{FedAMP} &  \makecell{46.68\small$\pm$1.06} & \makecell{24.74\small$\pm$0.58} & \makecell{18.22\small$\pm$0.41} & \makecell{27.85\small$\pm$0.71} & \makecell{10.70\small$\pm$0.32} &  \makecell{\; 7.13\small$\pm$0.21} \\
   FedPer &  \makecell{51.38\small$\pm$0.94} & \makecell{28.25\small$\pm$1.03} & \makecell{21.53\small$\pm$0.50} & \makecell{32.33\small$\pm$0.31} & \makecell{12.69\small$\pm$0.42} &  \makecell{\; 8.67\small$\pm$0.40} \\
			\makecell{FedRep } &  \makecell{51.25\small$\pm$1.37} & \makecell{26.97\small$\pm$0.33} & \makecell{20.63\small$\pm$0.42} & \makecell{30.83\small$\pm$1.05} & \makecell{12.14\small$\pm$0.28} & \makecell{\; 8.37\small$\pm$0.25} \\
			\makecell{FedBN} &  \makecell{54.35\small$\pm$0.63} & \makecell{36.94\small$\pm$0.94} & \makecell{33.67\small$\pm$0.12} & \makecell{33.34\small$\pm$0.71} & \makecell{19.61\small$\pm$0.35} & \makecell{16.57\small$\pm$0.44} \\
			\makecell{FedRoD} &  \makecell{60.17\small$\pm$0.48} & \makecell{39.88\small$\pm$1.18} & \makecell{36.80\small$\pm$0.56} & \makecell{41.06\small$\pm$0.77} & \makecell{25.63\small$\pm$1.11} & \makecell{22.32\small$\pm$1.13} \\
			\makecell{pFedSD} &  \makecell{54.14\small$\pm$0.77} & \makecell{41.06\small$\pm$0.83} & \makecell{38.27\small$\pm$0.20} & \makecell{39.31\small$\pm$0.19} & \makecell{19.25\small$\pm$1.80} & \makecell{15.91\small$\pm$0.33} \\
			\makecell{pFedGate} &   \makecell{48.54\small$\pm$0.39} & \makecell{27.47\small$\pm$0.79} & \makecell{22.98\small$\pm$0.03} & \makecell{37.59\small$\pm$0.39} & \makecell{24.09\small$\pm$0.67} & \makecell{19.69\small$\pm$0.14} \\
			\makecell{FedCAC} &  \makecell{57.22\small$\pm$1.52} & \makecell{38.64\small$\pm$0.63} & \makecell{32.59\small$\pm$0.32} & \makecell{40.19\small$\pm$1.20} & \makecell{23.70\small$\pm$0.28} & \makecell{18.58\small$\pm$0.62} \\
			\makecell{pFedPT} &  \makecell{43.21\small$\pm$1.66} & \makecell{35.23\small$\pm$0.87} & \makecell{36.25\small$\pm$0.37} & \makecell{23.55\small$\pm$0.68} & \makecell{22.35\small$\pm$0.49} & \makecell{21.69\small$\pm$0.24} \\
			\midrule
               \methodname{} w/o $L_{\text{Con}}$ &  \makecell{60.98\small$\pm$0.39} & \makecell{44.87\small$\pm$0.76} & \makecell{41.83\small$\pm$0.37} & \makecell{41.49\small$\pm$0.10} & \makecell{28.61\small$\pm$0.40} & \makecell{25.10\small$\pm$0.59} \\
			\methodname{} &  \textbf{\makecell{62.03\small$\pm$1.41}} & \textbf{\makecell{47.98\small$\pm$0.78}} & \textbf{\makecell{44.29\small$\pm$0.74}} & \textbf{\makecell{43.42\small$\pm$1.62}} & \textbf{\makecell{32.44\small$\pm$0.58}} & \textbf{\makecell{27.84\small$\pm$0.41}} \\
			
			\bottomrule
		\end{tabular}
	\end{center}
 \vspace{-3pt}
\end{table*}
In this section, we compare our proposed \methodname{} with two baseline methods and nine SOTA methods across three datasets and two non-IID scenarios. We also introduce `\methodname{} w/o $L_{\text{Con}}$,' which solely addresses the mismatch problem without contrastive learning. The experimental results on CIFAR-100 and Tiny ImageNet in Dirichlet non-IID scenario are presented in Table~\ref{expe:dirichlet noniid}. 
Please refer to the Appendix~\ref{app:compare with sota} for experimental results in Pathological non-IID scenarios and the CIFAR-10 dataset.

\paragraph{Results in Dirichlet non-IID scenario.} In this setting, by varying $\alpha$, we can evaluate the performance of methods under different non-IID degrees. The results, as detailed in Table~\ref{expe:dirichlet noniid}, demonstrate that performance varies significantly depending on the underlying design principles of each method. Among all methods, FedRoD demonstrates robust performance across all datasets and non-IID degrees. This is attributed to its design of two classifiers: a personalized classifier for local feature alignment and a global classifier for assistance from other clients to improve generalization. `\methodname{} w/o $L_{\text{Con}}$' addresses the mismatch issue specifically and achieves competitive or superior results across all scenarios. \methodname{} further improves feature extractor quality and outperforms SOTA methods significantly across all scenarios, achieving up to a 7.08\% improvement.

\subsection{Ablation Study}\label{sec:ablation}
In this section, we validate the effectiveness of each component of \methodname{} on the CIFAR-100 dataset under two non-IID degrees. The experimental results are illustrated in Table~\ref{expe:ablation}.

\begin{table}[htb]
\setlength{\abovecaptionskip}{0.cm}
\setlength{\belowcaptionskip}{-0.cm}
\vspace{-3pt}
\caption{Experiments on the CIFAR-100 to illustrate the effectiveness of different modules.}
\label{expe:ablation}
	\vskip 0in
 \small
	\begin{center}
				\begin{tabular}{ccccccccccc}
\toprule
& \multicolumn{5}{c}{$\alpha=0.1$} & \multicolumn{5}{c}{$\alpha=0.5$} \\ \midrule
Settings & $p_{\kappa}$ & \textit{Alter.} & $L_{\text{Con}}$ & \multicolumn{1}{c}{ $p_{\rho}$} & Accuracy (\%) &  $p_{\kappa}$ & \textit{Alter.} & $L_{\text{Con}}$ & \multicolumn{1}{c}{ $p_{\rho}$} & Accuracy (\%)  \\ \midrule
\uppercase\expandafter{\romannumeral 1} & & & & \multicolumn{1}{c}{} & 33.87$\pm$\small 1.35 &   & & & \multicolumn{1}{c}{} & 30.09$\pm$\small 0.31  \\
\uppercase\expandafter{\romannumeral 2}& \checkmark & & & \multicolumn{1}{c}{} & 40.97$\pm$\small1.28 & \checkmark & & & \multicolumn{1}{c}{} & 31.45$\pm$\small 1.35 \\
\uppercase\expandafter{\romannumeral 3} & \checkmark & \checkmark & & \multicolumn{1}{c}{} & 60.98$\pm$\small 0.39 & \checkmark & \checkmark & & \multicolumn{1}{c}{} & 44.87$\pm$\small 0.76 \\
\uppercase\expandafter{\romannumeral 4} & \checkmark & \checkmark & \checkmark & \multicolumn{1}{c}{} & 61.13$\pm$\small 0.50 & \checkmark & \checkmark & \checkmark & \multicolumn{1}{c}{} & 47.67$\pm$\small 1.42 \\
\uppercase\expandafter{\romannumeral 5} & \checkmark & \checkmark & \checkmark & \multicolumn{1}{c}{\checkmark} & 62.03$\pm$\small 1.41 & \checkmark & \checkmark & \checkmark & \multicolumn{1}{c}{\checkmark} & 47.98$\pm$\small 0.78 \\

\uppercase\expandafter{\romannumeral 6} &  & & \checkmark & \multicolumn{1}{c}{} & 36.24$\pm$\small 1.10 & & & \checkmark & \multicolumn{1}{c}{} & 34.70$\pm$\small 1.33 \\

\uppercase\expandafter{\romannumeral 7} & \checkmark & & \checkmark & \multicolumn{1}{c}{} & 53.17$\pm$\small 0.58 & \checkmark & & \checkmark & \multicolumn{1}{c}{} & 38.90$\pm$\small 0.91 \\

\uppercase\expandafter{\romannumeral 8} & \checkmark & & \checkmark & \multicolumn{1}{c}{\checkmark} & 53.76$\pm$\small 0.35 & \checkmark & & \checkmark & \multicolumn{1}{c}{\checkmark} & 39.29$\pm$\small 1.00 \\
\bottomrule
\end{tabular}
	\end{center}
	\vskip -0.0in
 \vspace{-3pt}
\end{table}

Setting \uppercase\expandafter{\romannumeral 1} represents FedAvg. Setting \uppercase\expandafter{\romannumeral 2} incorporates classification prompts $p_{\kappa}$ to allow each client to adjust the global model individually to obtain a personalized model, resulting in a performance improvement. Setting \uppercase\expandafter{\romannumeral 3} incorporates alternating training, where prompts are firstly updated to align local features with the global classifier, followed by training model parameters. This approach essentially aligns training objectives among clients. This effectively mitigates the impact of non-IID data on model collaboration, thus further enhancing the quality of the global model.

Setting \uppercase\expandafter{\romannumeral 4} adds contrastive learning loss to Setting \uppercase\expandafter{\romannumeral 3}, , focusing primarily on enhancing the feature extractor's performance through contrastive learning techniques. Setting \uppercase\expandafter{\romannumeral 5} incorporates specific prompts $p_{\rho}$ for the contrastive learning task. This reduces mutual interference between the two tasks during training, especially effective when non-IID is strong (e.g., $\alpha=0.1$).

Setting \uppercase\expandafter{\romannumeral 6} illustrates that adding contrastive learning alone brings very limited improvements. Settings \uppercase\expandafter{\romannumeral 7} and \uppercase\expandafter{\romannumeral 8} partially achieve feature-classifier alignment by introducing $p_{\kappa}$, greatly enhancing model performance. However, without using alternating training, local features cannot adapt well to the classifier. This leads to a significant performance gap between Settings \uppercase\expandafter{\romannumeral 7}, \uppercase\expandafter{\romannumeral 8}, and Setting \uppercase\expandafter{\romannumeral 5}.

This ablation study underlines the importance of each module in \methodname{}. It confirms that aligning local features with the global classifier and enhancing the feature extractor's quality are both crucial for optimizing model performance, aligning with the core motivations behind our methodology.

\subsection{Separability of Features}
In this section, we assess the effectiveness of \methodname{} in enhancing the quality of the feature extractor by conducting linear probing experiments on CIFAR-10 and CIFAR-100. The results are shown in Table~\ref{expe:separability of methods}. Higher accuracy means that the extracted features have better linear separability.

Compared to FedAvg, features extracted by `\methodname{} w/o $L_{\text{Con}}$' demonstrate superior linear separability. This improvement is attributable to the alignment of local features with the global classifier in the feature learning phase, which synchronizes client training objectives and mitigates the adverse effects of non-IID data. \methodname{} further improves the quality of the feature extractor by integrating a collaborative contrastive learning task. For more experimental results, please refer to Appendix~\ref{app sec:separability of different methods}.
\begin{table}[htb]
\setlength{\abovecaptionskip}{0.cm}
\setlength{\belowcaptionskip}{-0.cm}
\caption{Linear probe accuracy (\%) of FedAvg and our methods.}
\label{expe:separability of methods}
\centering
\begin{tabular}{ccccccc}
\toprule
     & \multicolumn{3}{c}{CIFAR-10}              & \multicolumn{3}{c}{CIFAR-100}              \\ \midrule
Methods & $\alpha=0.1$ & $\alpha=0.5$ & $\alpha=1.0$ & $\alpha=0.1$ & $\alpha=0.5$ & $\alpha=1.0$ \\ \midrule
FedAvg & 85.01\% & 72.52\% & 68.38\% & 59.50\% & 37.40\% & 32.33\% \\
\methodname{} w/o $L_{\text{Con}}$  & 85.52\% & 72.59\% & 69.57\% & 61.60\% & 43.14\% & 38.47\% \\
\methodname{}  & 87.83\% & 77.25\% & 74.02\% & 64.12\% & 46.43\% & 40.95\% \\ 
\bottomrule
\end{tabular}
\end{table}

\subsection{Learned Features of Different Methods}\label{sec:learned feature}
In this subsection, we visually compare the quality of features extracted by different methods and highlight the impact of different modules in \methodname{} on feature extraction. We conduct experiments on the CIFAR-10 dataset with 10 clients, each allocated 1000 training images and 500 testing images. The data distribution is shown in Figure~\ref{expe:learned features in different methods}(a). For each method, we visualize the feature vectors of testing data from different clients using t-SNE \cite{van2008visualizing}. The visualization results are depicted in Figure~\ref{expe:learned features in different methods}(b)-(h), where colors represent different data classes, and markers represent different clients, as detailed in Figure~\ref{expe:learned features in different methods}(a).
\begin{figure}[htb]
\setlength{\abovecaptionskip}{0.cm}
\setlength{\belowcaptionskip}{-0.cm}
        \centering  

	\subfigure[Data Distribution]{
		\includegraphics[width=0.23\linewidth]{./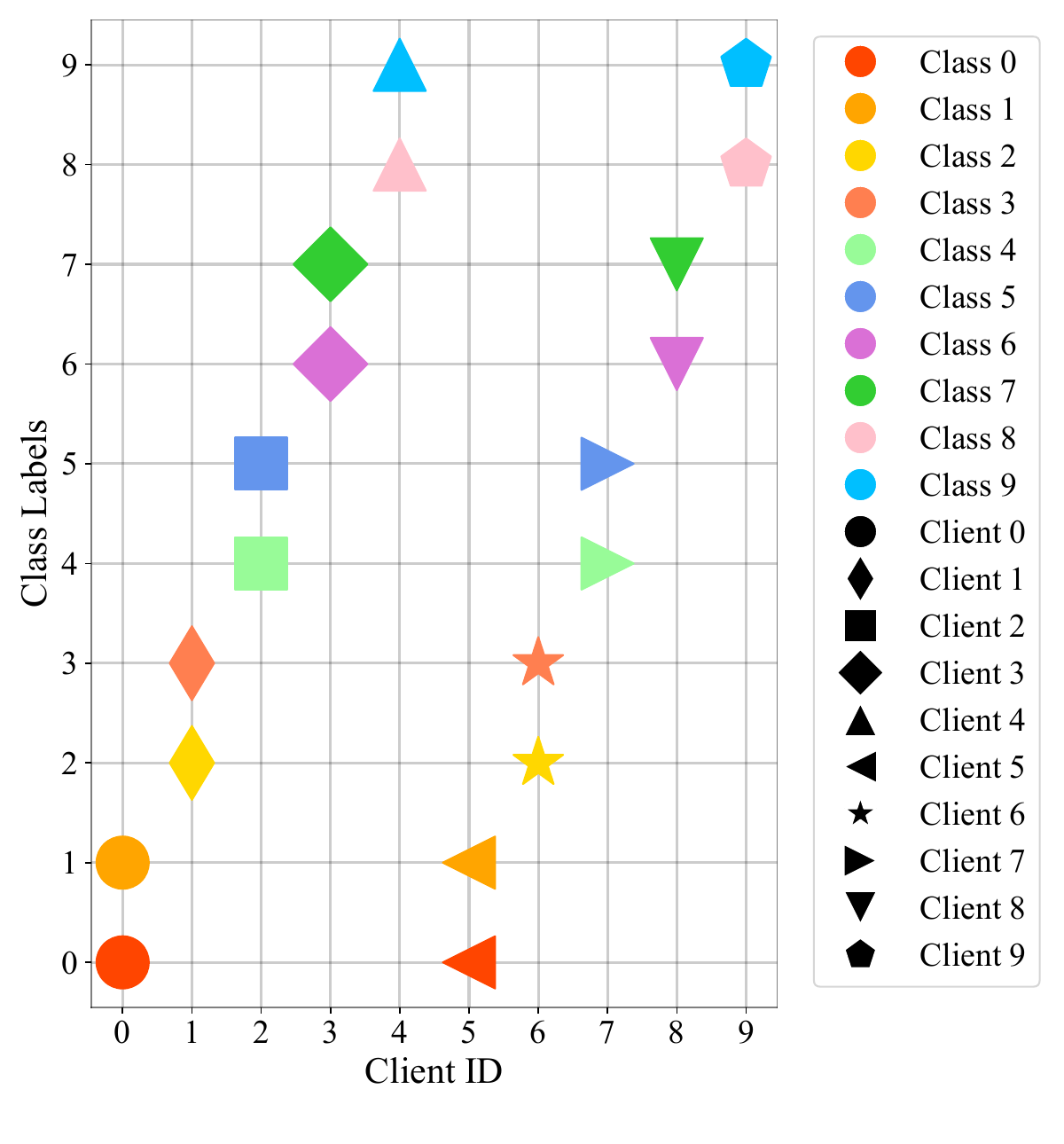}}
	\subfigure[FedAvg]{
		\includegraphics[width=0.23\linewidth]{./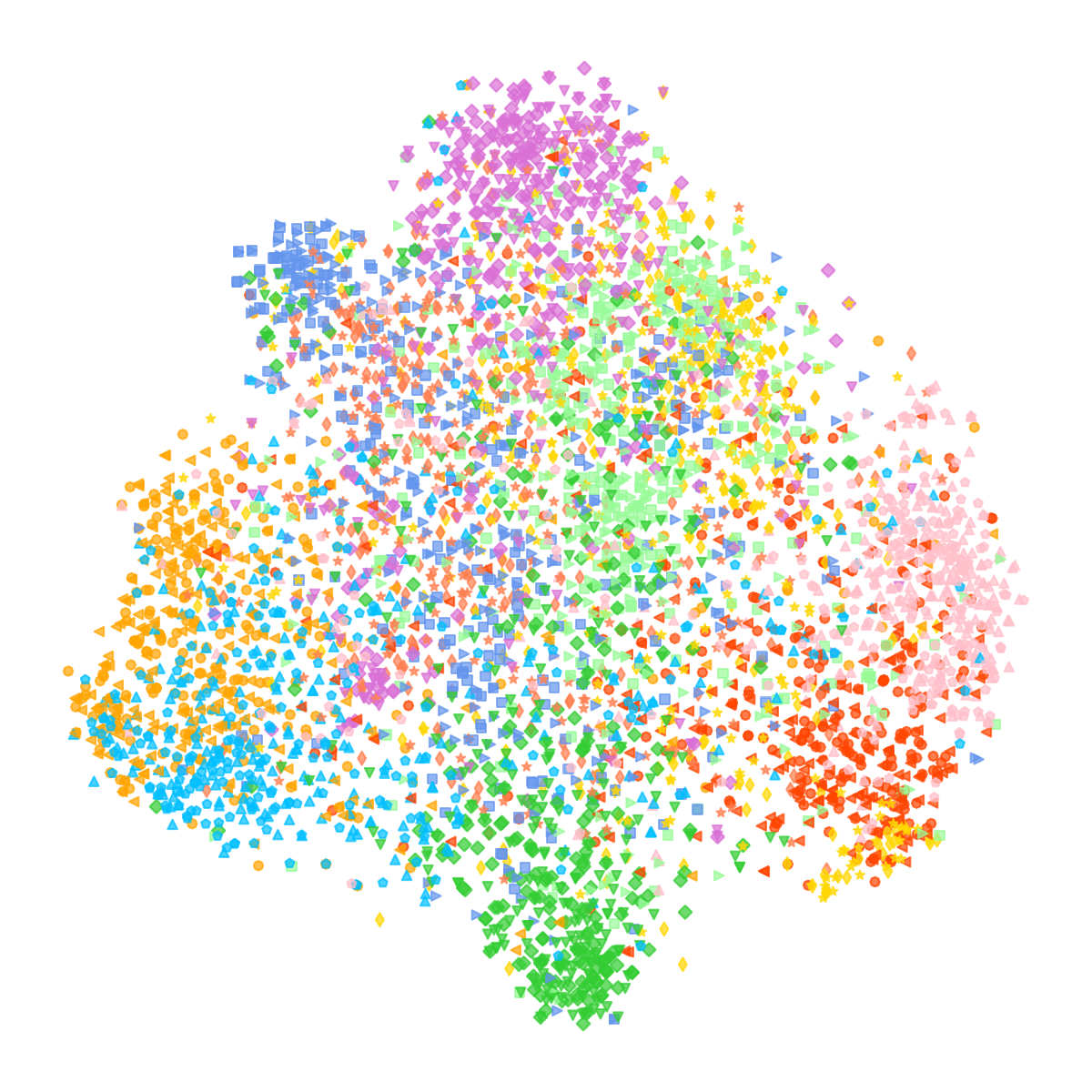}}
  \subfigure[FedCAC]{
		\includegraphics[width=0.23\linewidth]{./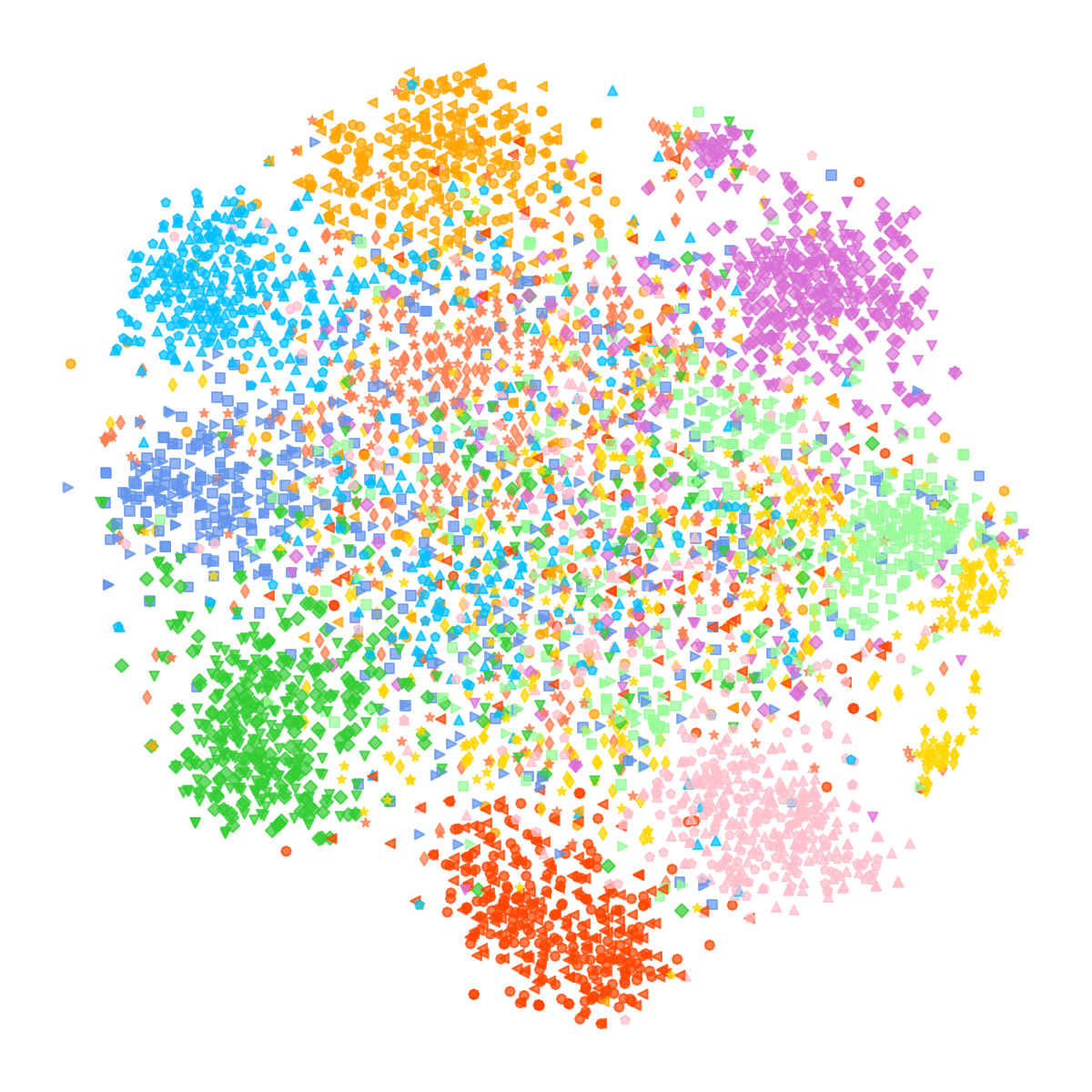}}
    \subfigure[FedPer]{
		\includegraphics[width=0.23\linewidth]{./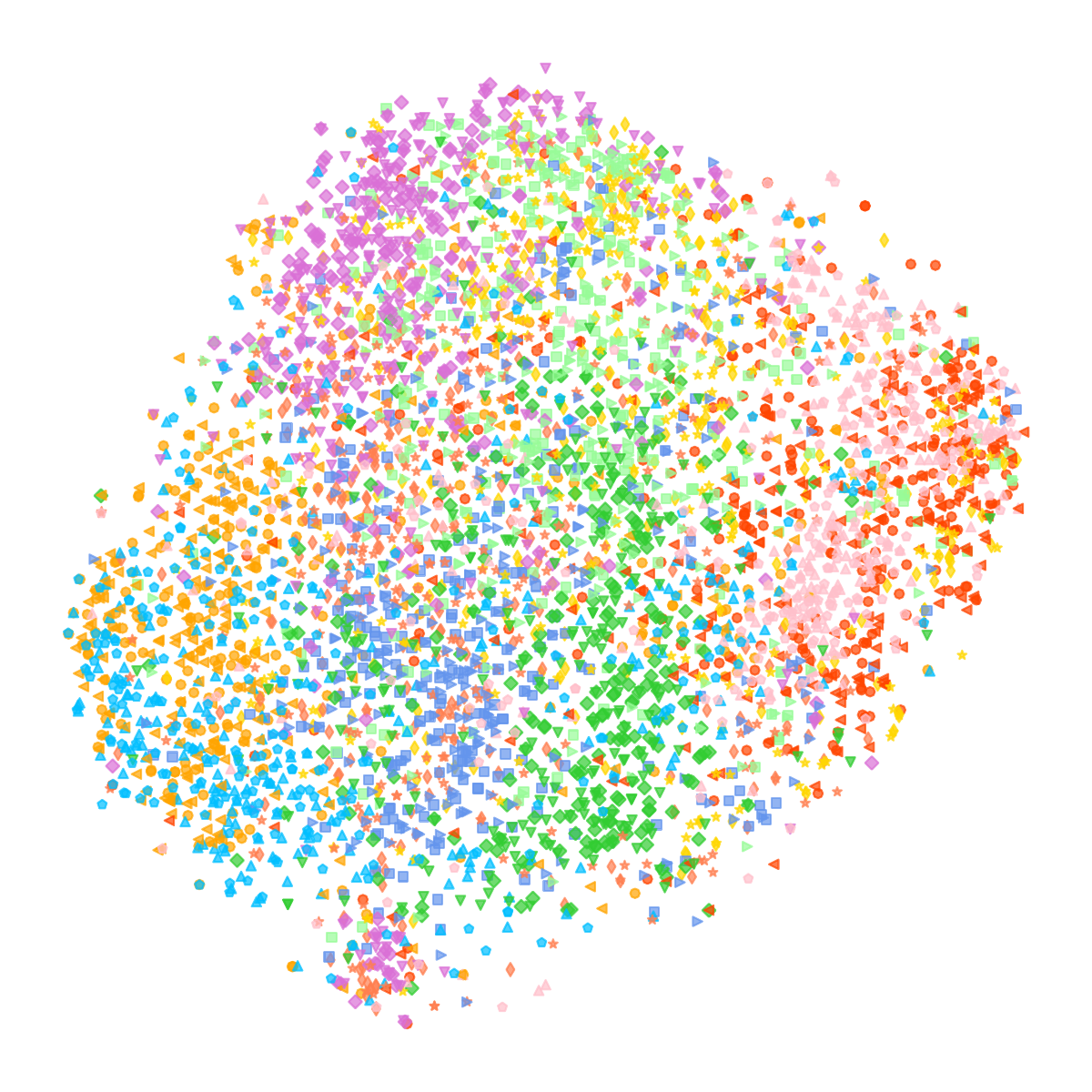}} \\
        \subfigure[\methodname{} w/o $L_{\text{Con}}$]{
		\includegraphics[width=0.23\linewidth]{./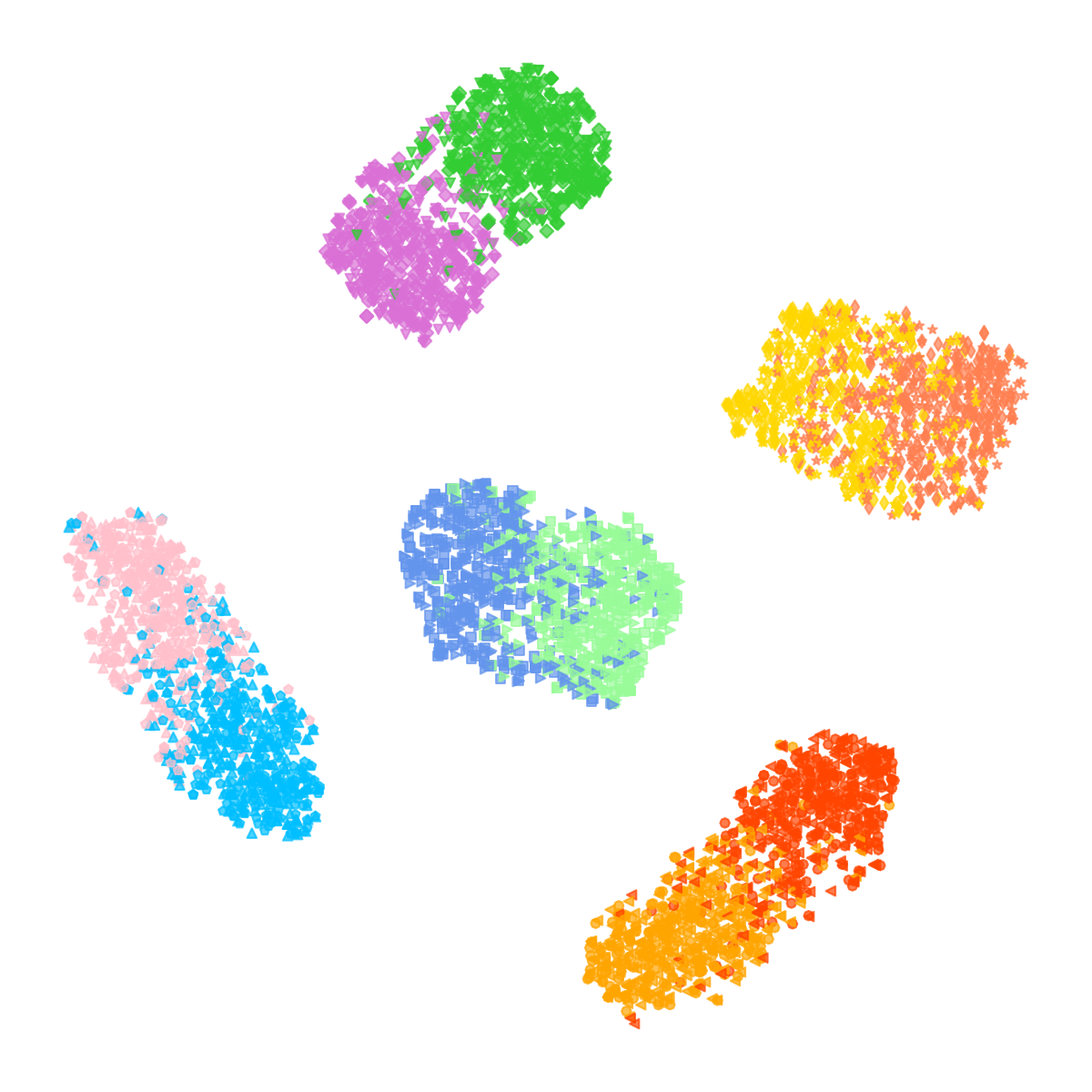}}
  \subfigure[\methodname{}]{
	\includegraphics[width=0.23\linewidth]{./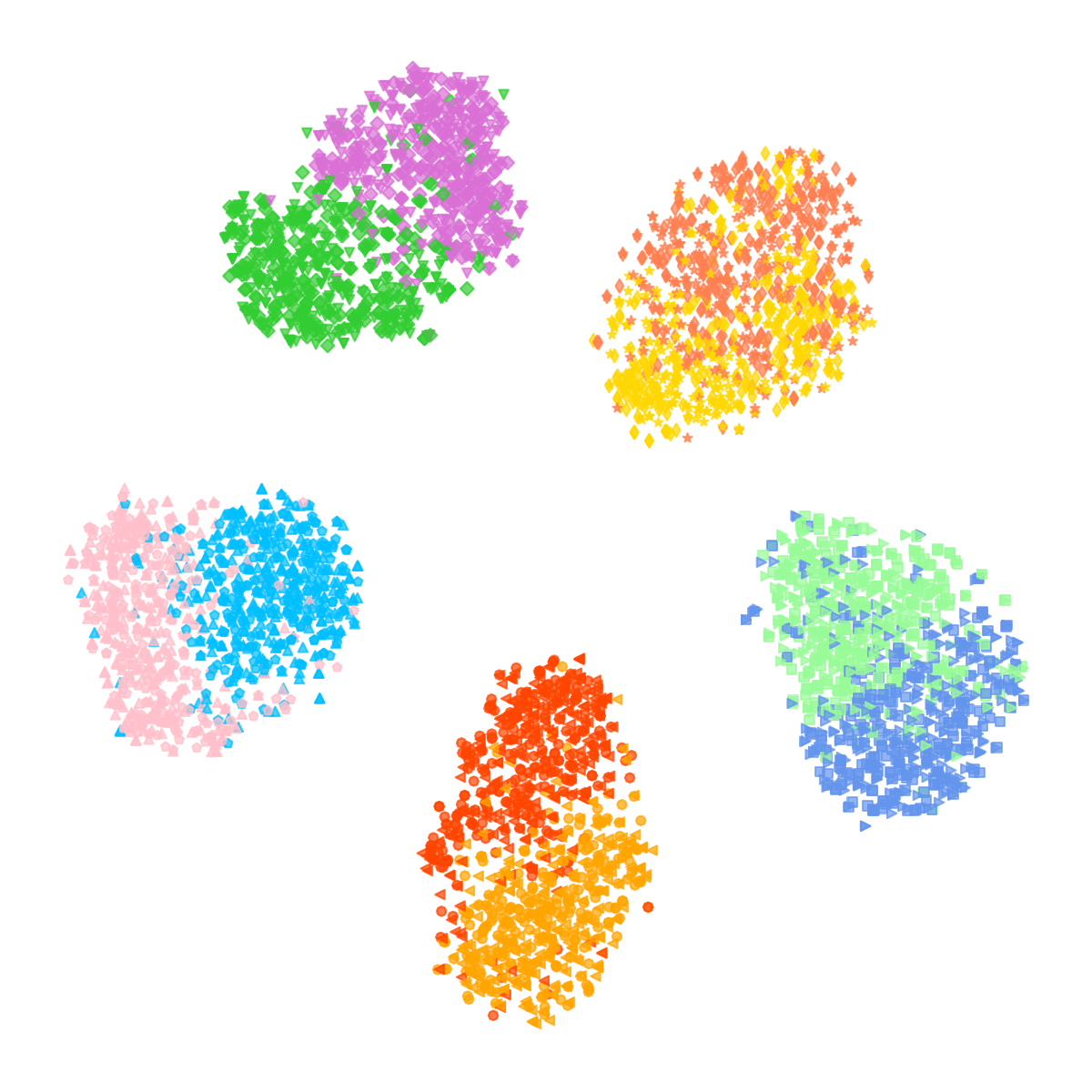}}
	\subfigure[\methodname{} w/o Alter]{
		\includegraphics[width=0.23\linewidth]{./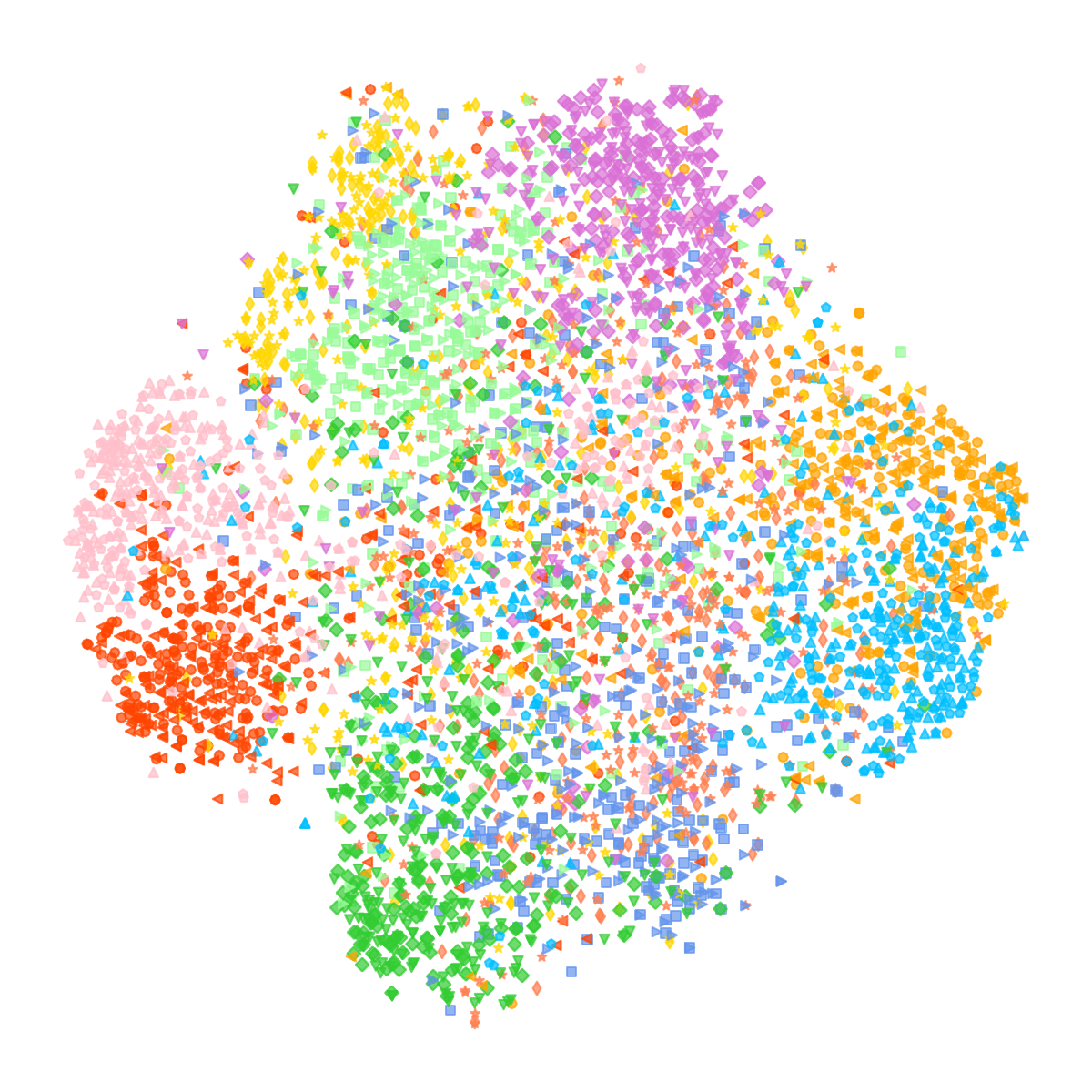}}
    \subfigure[\methodname{}\_p\_classifier]{
		\includegraphics[width=0.23\linewidth]{./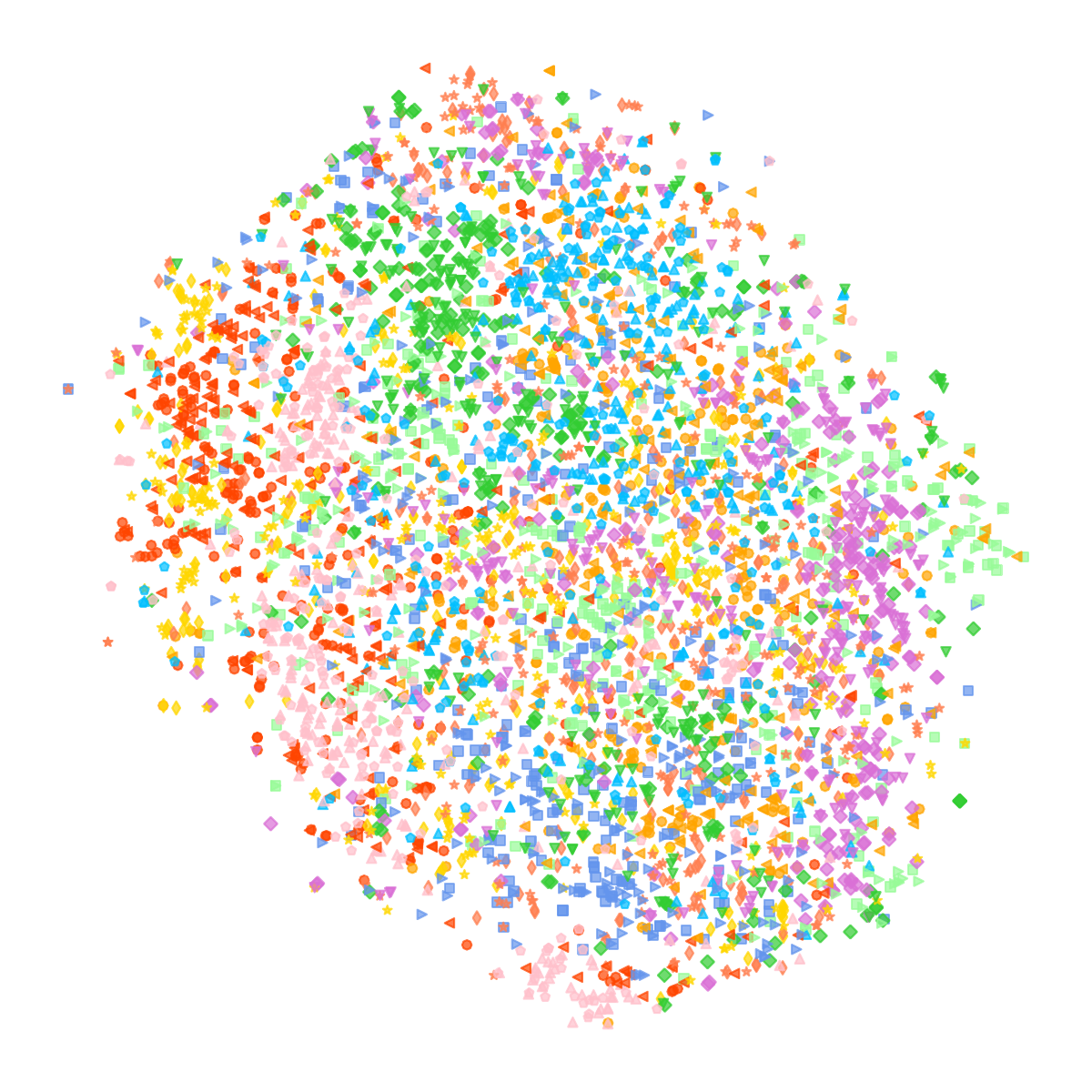}}
	\caption{t-SNE visualization of features extracted by different methods on the CIFAR-10 dataset.}
	\label{expe:learned features in different methods}
 \vspace{-0.2in}
\end{figure}

FedAvg and FedCAC exhibit noticeable cluster structures of features but lack strong discriminative boundaries. FedPer displays overlapping features across various classes, attributable to the use of personalized classifiers that create different local feature spaces for each client. Consequently, data from different classes across different clients are mapped to similar positions. This interference between clients reduces the quality of the global feature extractor.

`\methodname{} w/o $L_{\text{Con}}$' shows clearer discriminative boundaries, which is attributed to the alignment of local features with the global classifier achieved during local training. We also observe that data from the same class across different clients are mapped to the same positions in the feature space, indicating that the global classifier provides a unified feature space for all clients. Adapting local features to this space essentially aligns the training objectives among clients in non-IID scenarios, promoting collaboration among clients. \methodname{} further enhances feature separability by incorporating $L_{\text{Con}}$.

‘\methodname{} w/o Alter’ represents not using alternating training. While it shows better clustering than FedAvg, the discriminative quality of the boundaries is weaker compared to `\methodname{} w/o $L_{\text{Con}}$.' This configuration shows increased interference among client models, lacking alignment to the common global feature space. ‘\methodname{}\_p\_classifier’ indicates using personalized classifiers. In this case, the feature space becomes highly scattered, similar to FedPer's issue. Since we train prompt $p_{\kappa}$ to adapt to personalized classifiers first, this exacerbates the variability in feature spaces across clients

\subsection{Effect of Different Prompts}
In this section, we delve into the role of prompts in \methodname{}. We visualize the features transformed by different prompts using t-SNE. The experimental setup is consistent with Section~\ref{sec:learned feature}. The results are depicted in Figure~\ref{expe:prompt on feature}. Larger markers in the figures represent feature centroids of corresponding classes for each client. 
\begin{figure}[htb]
\setlength{\abovecaptionskip}{0.cm}
\setlength{\belowcaptionskip}{-0.cm}
\vspace{-0.1in}
        \centering  
    \subfigure[Features transformed by $p_{\kappa}$]{
		\includegraphics[width=0.35\linewidth]{./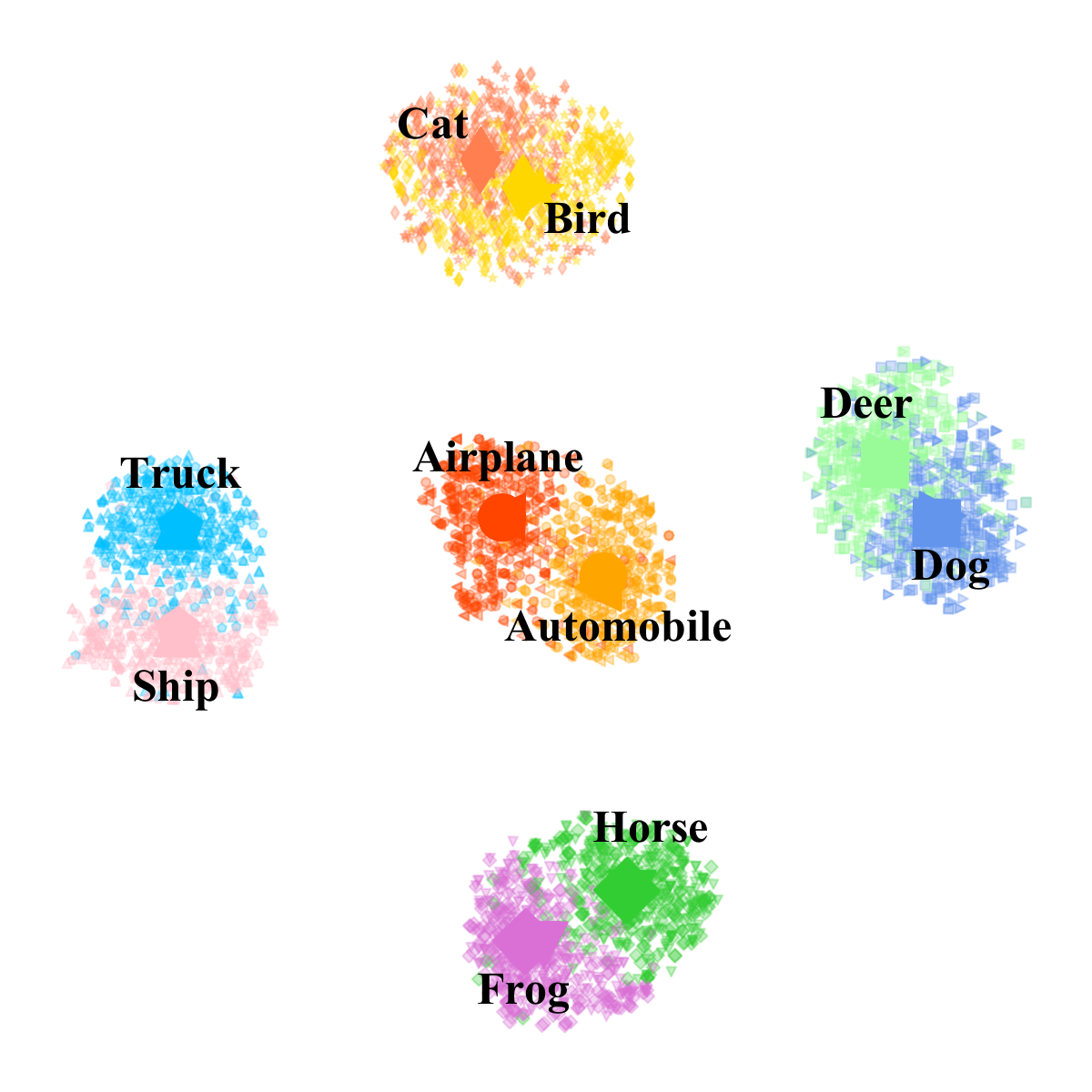}}
  \subfigure[Features transformed by $p_{\rho}$]{
		\includegraphics[width=0.35\linewidth]{./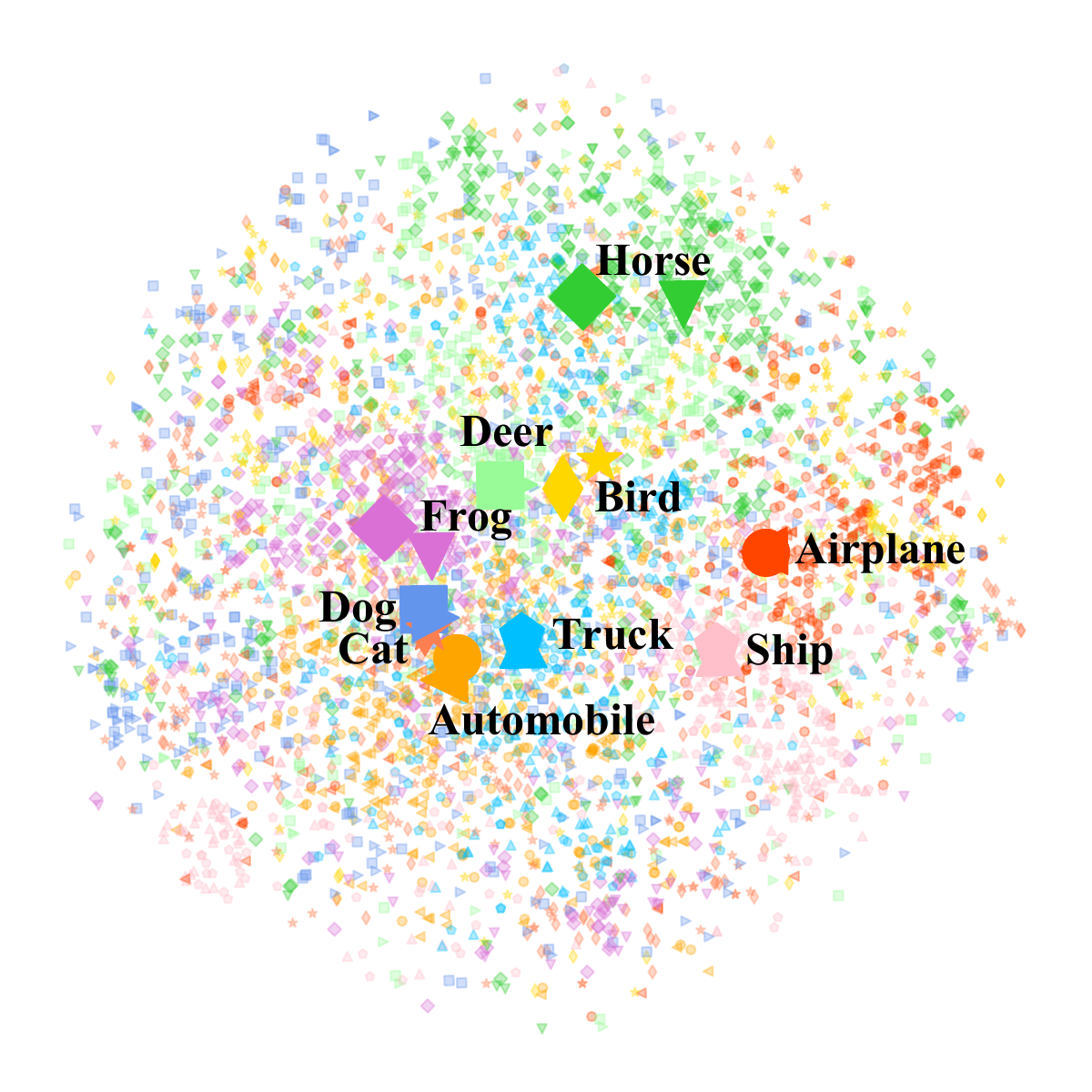}}
  
	\caption{The effect of different prompts on feature space.}
	\label{expe:prompt on feature}
 \vspace{-0.2in}
\end{figure}

It is evident that features obtained from classification prompts $p_{\kappa}$ are not significantly correlated with image similarity but rather with the distribution of client data. For example, two classes within a client are close together. Conversely, features transformed by contrastive learning prompts are more related to image similarity. For instance, in Figure~\ref{expe:prompt on feature}(b), the feature centroids of `cat' and `dog' are closer, as are `truck' and `automobile,' which aligns with the principles of contrastive learning.

\begin{table}[htb]
\setlength{\abovecaptionskip}{0.cm}
\setlength{\belowcaptionskip}{-0.cm}
\caption{The effect of prompts $p_{\kappa}$ and $p_{\rho}$ on linear probe accuracy (\%).}
\label{expe:prompt on separability}
\centering
\begin{tabular}{ccccccc}
\toprule
     & \multicolumn{3}{c}{CIFAR-10}              & \multicolumn{3}{c}{CIFAR-100}              \\ \midrule
Prompt Type & $\alpha=0.1$ & $\alpha=0.5$ & $\alpha=1.0$ & $\alpha=0.1$ & $\alpha=0.5$ & $\alpha=0.5$ \\ \midrule
None & 87.69\% & 77.12\% & 73.93\% & 64.08\% & 46.50\% & 40.79\% \\
$p_{\kappa}$  & 87.83\% & 77.25\% & 74.02\% & 64.12\% & 46.43\% & 40.95\% \\
$p_{\rho}$  & 87.82\% & 77.25\% & 74.02\% & 64.18\% & 46.40\% & 40.95\% \\ 
\bottomrule
\end{tabular}
\end{table}
We also investigate whether different types of prompts influence feature separability. We conduct linear probe experiments using the CIFAR-10 and CIFAR-100 datasets. The results are detailed in Table~\ref{expe:prompt on separability}. In these experiments, we compare three conditions: `None' (no prompts used), `$p_{\kappa}$' (using classification prompts), and `$p_{\rho}$' (using contrastive learning prompts). Interestingly, the accuracies across different prompt conditions are generally similar, suggesting that the use of either type of prompt does not significantly impact the overall quality of the features extracted.

The above experiments demonstrate that prompts work by transforming features into the required format for downstream tasks using task-specific prompts. This also indicates the scalability and adaptability of our designed feature transformation module. It can incorporate various client-collaborative tasks beneficial for enhancing the performance of personalized models through task-specific prompts.

\section{Conclusion and Discussion}
We observe that the feature extractor from FedAvg surpasses those in most PFL methods, yet it suffers from inadequate performance due to a mismatch between the local features and the classifier. This mismatch issue not only impacts the performance during model inference but also affects the synergy between the feature extractor and the classifier during training. We propose a new PFL method called \methodname{} with a prompt-driven feature transform module to address these issues during training. Our experiments demonstrate that \methodname{} not only resolves the mismatch issue but also significantly improves the quality of the feature extractor, achieving substantial performance gains compared to state-of-the-art methods. We discuss the limitations and our future work in Appendix~\ref{app sec:limitation}.

{
\small
\bibliography{main}
\bibliographystyle{abbrv}
}

\appendix
\section{Related Work}\label{app:related}
Current PFL methods can primarily be categorized into several major types: \textbf{meta-learning-based methods} \cite{fallah2020personalized,acar2021debiasing}, \textbf{model-regularization-based methods} \cite{t2020personalized,li2021ditto}, \textbf{fine-tuning-based methods} \cite{jin2022personalized,chen2023efficient,li2023no}, \textbf{personalized-weight-aggregation-based methods} \cite{huang2021personalized,ijcai2022p301}, and \textbf{parameter-decoupling-based methods}. This paper delves into the issues inherent in the global model of FedAvg and primarily discusses parameter-decoupling methods that rely on the global model.

In addition to the aforementioned methods, a new category based on prompts has recently emerged. 

\paragraph{Prompt-based methods.} Recently, prompt technology has garnered widespread attention in the fields of computer vision \cite{jia2022visual, liu2024visual} and natural language processing \cite{lester2021power,liu2021p}. This technology involves using prompts as inputs to guide the behavior or output of models, typically for fine-tuning purposes. The domain of PFL has also seen the emergence of prompt-based approaches. Most of these are based on pre-trained models, aiming to train prompts to fine-tune the pre-trained models to fit client-local data, as seen in pFedPG \cite{yang2023efficient}, SGPT \cite{deng2023unlocking}, FedOTP \cite{li2024global}, and FedAPT \cite{su2024federated}. pFedPT \cite{li2023visual} trains both the model and prompts, using prompts at the input level to learn personalized knowledge for fine-tuning the global model to adapt to the client's local distributions. Our \methodname{} fundamentally differs from these methods in its training objective. Rather than fine-tuning, we introduce prompts to guide feature transformations to align with the global classifier, thereby addressing the mismatch issue inherent in the global model during the training process.

\section{Experiment Setup}\label{app:experiment setup}
\subsection{Introduction to non-IID Scenarios}\label{app:non-IID description}
\paragraph{Pathological non-IID.} In this setting, each client is randomly assigned data from a subset of classes with equal data volume per class. For the CIFAR-10, CIFAR-100, and Tiny ImageNet datasets, we assign 2, 20, and 40 classes of data to each client, respectively.
\begin{figure*}[htb]
\setlength{\abovecaptionskip}{0.cm}
\setlength{\belowcaptionskip}{-0.cm}
	\centering
	\subfigure[$\alpha=0.1$, 10-class]{
		\includegraphics[width=0.32\linewidth]{./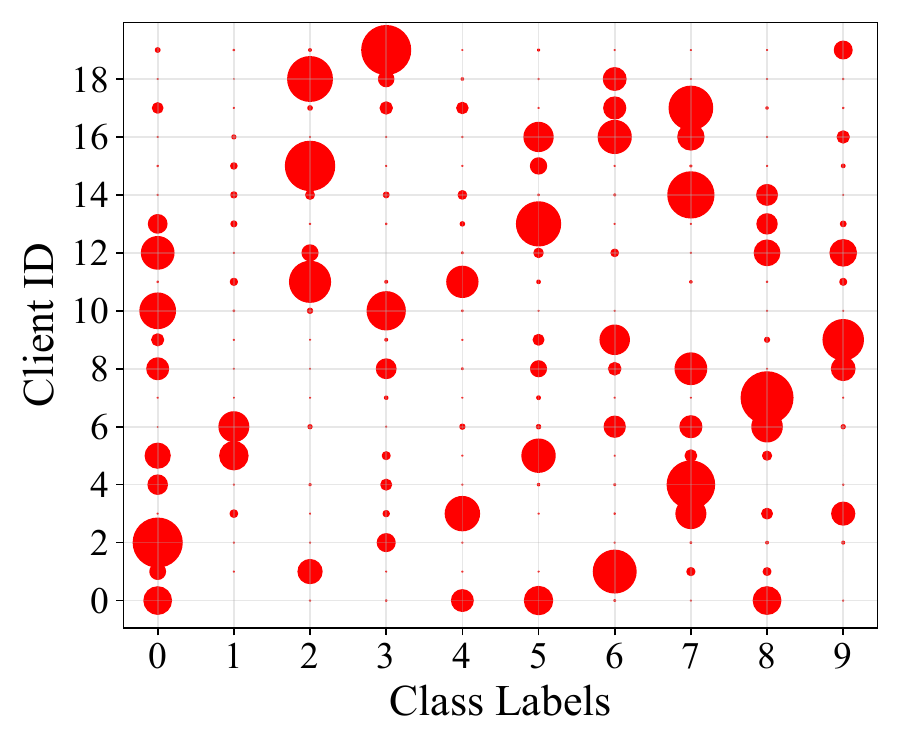}}
	\subfigure[$\alpha=0.5$, 10-class]{
		\includegraphics[width=0.32\linewidth]{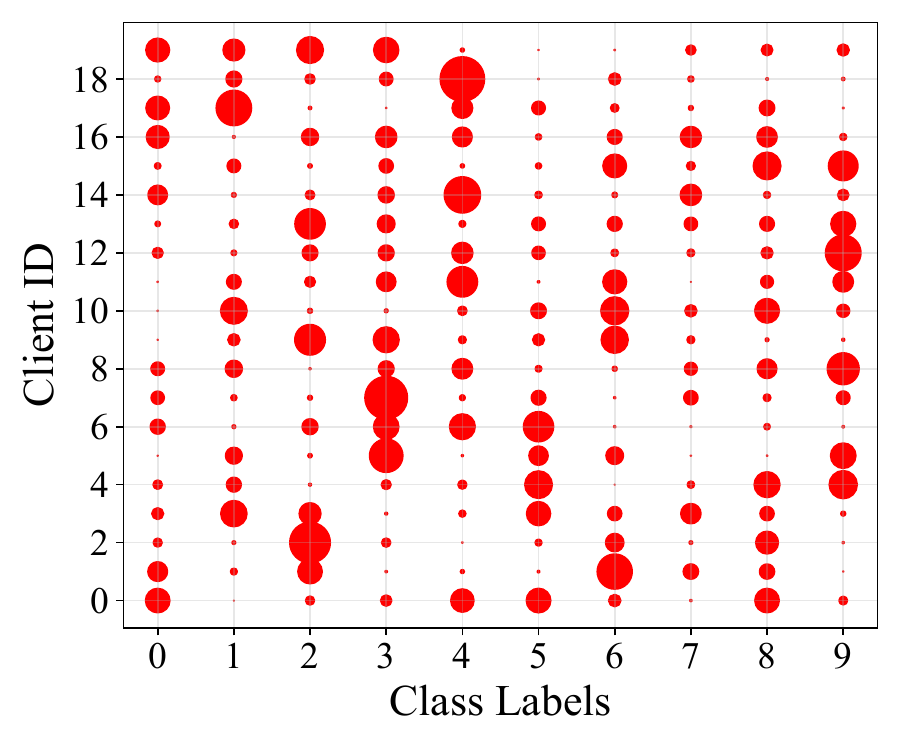}}
	\subfigure[$\alpha=1.0$, 10-class]{
		\includegraphics[width=0.32\linewidth]{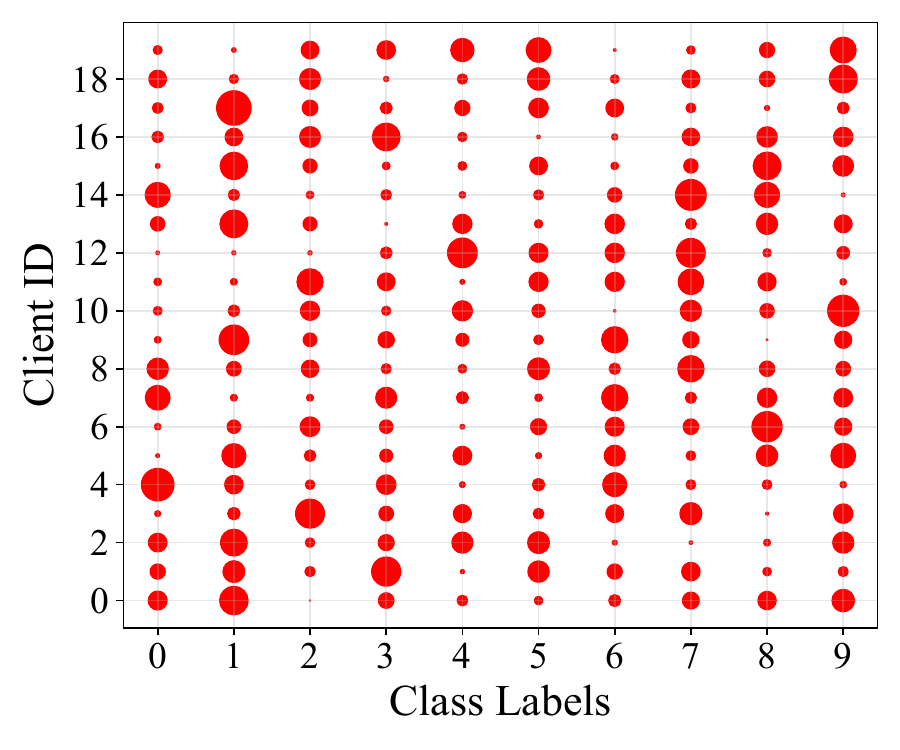}}
  \subfigure[$\alpha=0.1$, 50-class]{
		\includegraphics[width=0.49\linewidth]{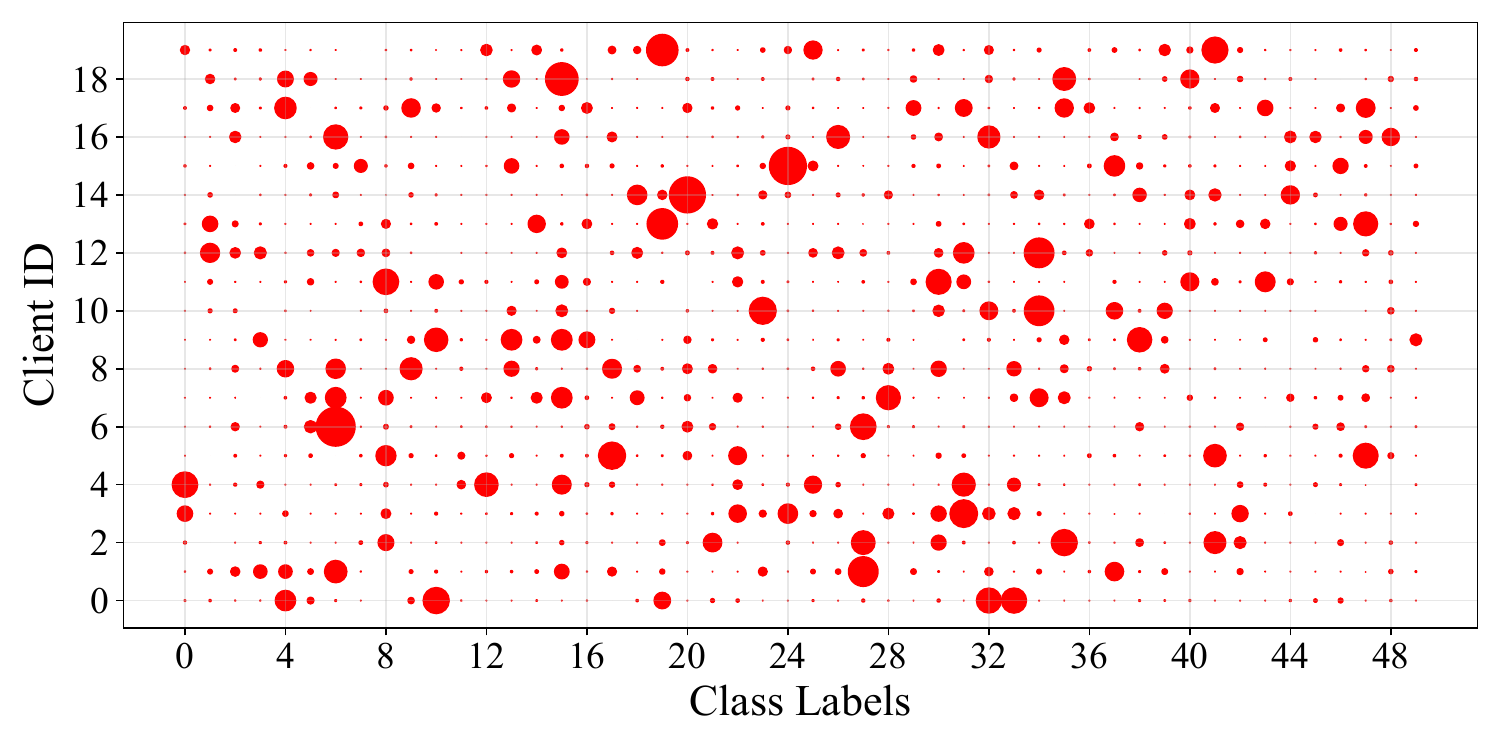}}
	\subfigure[$\alpha=0.5$, 50-class]{
		\includegraphics[width=0.49\linewidth]{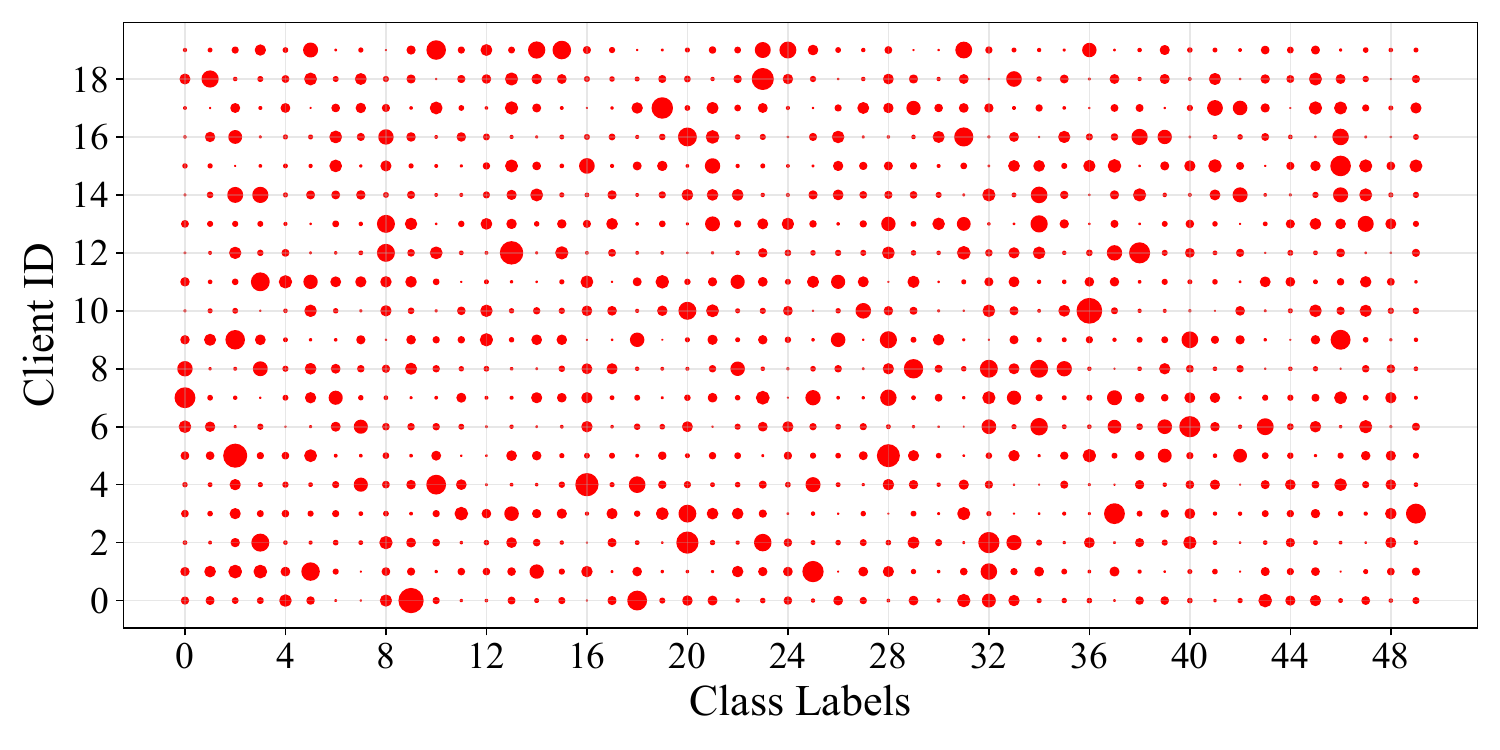}}
	
	\caption{Visualization of data partitioning in Dirichlet non-IID scenarios with different $\alpha$.}
	\label{fig:dirichlet example}
\end{figure*}
\paragraph{Dirichlet non-IID.} This is a commonly used setting in current FL research \cite{wu2022pfedgf,wu2023bold,shi2023prior} In this scenario, the data for each client is generated from a Dirichlet distribution denoted as $Dir(\alpha)$. As the value of $\alpha$ increases, the class imbalance within each client's dataset progressively decreases. This Dirichlet non-IID setting enables the evaluation of different methods across a broad spectrum of non-IID conditions, reflecting various degrees of data heterogeneity.

For a clearer, more intuitive understanding, we involve 20 clients with 10-class and 50-class datasets to visualize the data distribution among clients with varying $\alpha$ values. As depicted in Figure \ref{fig:dirichlet example}, the horizontal axis labels the data class indices, while the vertical axis lists the client IDs. Each red dot indicates the class data assigned to a client, with larger dots signifying a higher volume of data in that class.

\subsection{Introduction to Comparative Methods}\label{app:sota description}
FedAMP \cite{huang2021personalized} is a weighted-aggregation-based method where clients with similar data distributions are given higher aggregation weights during model aggregation. Because it mainly encourages the collaboration of clients with similar data distribution, it is a method that pays more attention to the local data distribution of clients from the design point of view. FedPer \cite{arivazhagan2019federated}, FedRep \cite{collins2021exploiting}, FedBN \cite{li2021fedbn}, FedRoD \cite{chen2022on}, and FedCAC \cite{wu2023bold} are parameter-decoupling-based methods, which personalize the global model by retaining certain parameters locally based on FedAvg. FedRoD additionally introduces a balanced global classifier to obtain assistance from other clients, alleviating the overfitting issue caused by personalized classifiers alone. pFedSD \cite{jin2022personalized} and pFedGate \cite{chen2023efficient} are fine-tuning-based methods that adapt the global model to local data through fine-tuning. pFedSD directly fine-tunes the global model by distilling local models, while pFedGate trains an additional gating network and applies it to the global model. pFedPT \cite{li2023visual}, a prompt-based method, can also be viewed as a fine-tuning approach, enhancing the global model's adaptation to local data distributions by adding prompts to images.

\subsection{Hyperparameter Settings in Different Methods}\label{app:settings}
For the unique hyperparameters of each baseline method, we utilize the optimal parameter combinations reported in their respective papers. For learning rates, we adjust within \{1e-1, 1e-2, 1e-3\}.

In \methodname{}, to simplify the hyperparameter tuning process and enhance the method's usability, we provide a default set of hyperparameters: for all scenarios, we set $(n_{\kappa}, n_{\rho}) = (10, 20)$. We use the SGD optimizer, with a learning rate of 0.01 for the feature transformation module and 0.1 for others. In the Dirichlet non-IID scenario with $\alpha=0.1$ scenario, we set $(R_f, R_a) = (3, 2)$, while in other scenarios, we set $(R_f, R_a) = (4, 1)$. For the contrastive learning algorithm, we adopt the default settings from MoCo. In `\methodname{} w/o $L_{\text{Con}}$,' we set the learning rate of the feature transformation module to 0.05 while keeping other hyperparameters the same as \methodname{}. Unless otherwise specified, our experiments use the above hyperparameter settings, although fine-tuning these parameters for different scenarios may yield better performance. 

\subsection{Compute Resources}\label{app:compute resources}
All the experiments are implemented using PyTorch and conducted on NVIDIA V100 GPUs. For the methods we compared, as well as `\methodname{} w/o $L_{\text{Con}}$,' a single training session requires 24-48 hours. For \methodname{}, the training process takes longer due to the use of the MoCo algorithm, which requires data augmentation that can only be executed on the CPU. Consequently, a single training session for \methodname{} requires 48-72 hours.

\section{Comparison with State-of-the-art Methods}\label{app:compare with sota}
We present the comparative results of \methodname{} against established methods on CIFAR-10, CIFAR-100, and Tiny ImageNet datasets under Pathological non-IID scenarios, as well as CIFAR-10 under Dirichlet non-IID scenarios in Tables~\ref{app expe:pathological noniid} and \ref{app expe:dirichlet noniid cifar10}.
\begin{table}[htb]
\caption{Test accuracy (\%) of different methods under Pathological non-IID setting on CIFAR-10, CIFAR-100, and Tiny Imagenet.}
 \label{app expe:pathological noniid}
	\vskip 0in
	\begin{center}
				\begin{tabular}{ccccc}
					\toprule
					Methods & CIFAR-10 & CIFAR-100 & Tiny ImageNet \\
					\midrule
					FedAvg & 54.33 $\pm$ 3.03 & 34.27 $\pm$ 0.44 & 18.05 $\pm$ 0.23 \\
					Local & 85.85 $\pm$ 0.93 & 38.40 $\pm$ 0.69 & 16.20 $\pm$ 0.30 \\
					\midrule
					FedAMP & 88.88 $\pm$ 0.83 & 38.36 $\pm$ 0.79 & 16.13 $\pm$ 0.55 \\
                    FedPer & 87.51 $\pm$ 0.95 & 41.54 $\pm$ 0.74 & 20.25 $\pm$ 0.65 \\
					FedRep & 87.10 $\pm$ 0.91 & 40.63 $\pm$ 0.74 & 19.24 $\pm$ 0.33 \\
					FedBN & 87.02 $\pm$ 1.41 & 47.75 $\pm$ 1.03 & 24.91 $\pm$ 0.48 \\
					FedRoD & 88.06 $\pm$ 1.70 & 52.55 $\pm$ 0.92 & 32.25 $\pm$ 0.80 \\
                    pFedSD & 89.97 $\pm$ 1.45 & 52.30 $\pm$ 1.18 & 30.27 $\pm$ 0.78 \\
                    pFedGate & 89.15 $\pm$ 0.76 & 43.73 $\pm$ 0.14 & 22.42 $\pm$ 0.83 \\
					FedCAC & 89.77 $\pm$ 1.14 & 49.07 $\pm$ 0.87 & 30.83 $\pm$ 0.42 \\
                    pFedPT & 86.29 $\pm$ 1.11 & 39.92 $\pm$ 0.33 & 21.38 $\pm$ 0.98 \\
                    \midrule
                    \methodname{} w/o $L^{Con}$ & 89.67 $\pm$ 1.96 & 57.62 $\pm$ 1.18 & 36.13 $\pm$ 1,32 \\
                    \methodname{} & \textbf{90.55 $\pm$ 1.35} & \textbf{58.14 $\pm$ 0.71} & \textbf{37.59 $\pm$ 0.39} \\
					\bottomrule
				\end{tabular}
	\end{center}
	\vskip -0.0in
\end{table}
\begin{table*}[htb]
 \caption{Test accuracy (\%) of different methods under Dirichlet non-IID setting on CIFAR-10.}
 \label{app expe:dirichlet noniid cifar10}
	\begin{center}
		\begin{tabular}{cccc}
			\toprule
			Methods & $\alpha=0.1$ & $\alpha=0.5$ & $\alpha=1.0$ \\ \midrule
			\makecell{FedAvg} & \makecell{60.39 $\pm$ 1.46} &    \makecell{60.41 $\pm$ 1.36} & \makecell{60.91 $\pm$ 0.72}  \\
			Local & \makecell{81.91 $\pm$ 3.09} & \makecell{60.15 $\pm$ 0.86} & \makecell{52.24 $\pm$ 0.41} \\
			\midrule
			\makecell{FedAMP} & \makecell{84.99 $\pm$ 1.82} & \makecell{68.26 $\pm$ 0.79} & \makecell{64.87 $\pm$ 0.95}  \\
   FedPer & \makecell{84.43 $\pm$ 0.47} & \makecell{68.80 $\pm$ 0.49} & \makecell{64.92 $\pm$ 0.66}  \\
			\makecell{FedRep} & \makecell{84.59 $\pm$ 1.58} & \makecell{67.69 $\pm$ 0.86} & \makecell{60.52 $\pm$ 0.72}  \\
			\makecell{FedBN} & \makecell{83.55 $\pm$ 2.32} & \makecell{66.79 $\pm$ 1.08} & \makecell{62.20 $\pm$ 0.67}  \\
			\makecell{FedRoD} & \makecell{86.23 $\pm$ 2.12} & \makecell{72.34 $\pm$ 1.77} & \makecell{68.45 $\pm$ 1.94}  \\
			\makecell{pFedSD} & \makecell{86.34 $\pm$ 2.61} & \makecell{71.97 $\pm$ 2.07} & \makecell{67.21 $\pm$ 1.89}  \\
			\makecell{pFedGate} & \makecell{87.25 $\pm$ 1.91} & \makecell{71.98 $\pm$ 1.61} & \makecell{67.85 $\pm$ 0.87} \\
			\makecell{FedCAC} & \makecell{86.82 $\pm$ 1.18} & \makecell{69.83 $\pm$ 0.46} & \makecell{65.39 $\pm$ 0.51} \\
			\makecell{pFedPT} & \makecell{82.38 $\pm$ 2.91} & \makecell{67.33 $\pm$ 1.33} & \makecell{64.37 $\pm$ 1.22} \\
			\midrule
               \methodname{} w/o $L^{Con}$ & \makecell{87.23 $\pm$ 2.69} & \makecell{74.10 $\pm$ 1.95} & \makecell{69.23 $\pm$ 0.76} \\
			\methodname{} & \textbf{\makecell{88.60 $\pm$ 2.19}} & \textbf{\makecell{77.54 $\pm$ 1.88}} & \textbf{\makecell{74.81 $\pm$ 0.77}} \\
			
			\bottomrule
		\end{tabular}
	\end{center}
\end{table*}
\paragraph{Results in Pathological non-IID scenario.} This is an extreme setting where each client has data from only a subset of classes. This scenario is particularly pronounced in the CIFAR-10 dataset, where each client essentially performs a simple binary classification task. Here, clients can achieve decent performance by solely focusing on their local tasks (`Local'), even without collaboration with other clients. As such, methods that prioritize local data distribution, such as FedAMP, pFedSD, and pFedGate, perform well. In contrast, on CIFAR-100 and Tiny ImageNet datasets, as clients have more local classes with fewer samples per class, local tasks become more challenging. Effective collaboration with other clients becomes crucial. Consequently, methods such as FedRoD, which emphasize client collaboration, exhibit increasingly significant performance. FedAMP and pFedGate show considerable performance degradation. FedPer, FedRep, FedBN, and FedCAC, by personalizing certain parameters of FedAvg, enhance local performance by indirectly aligning local features with classifiers to some extent. However, as they do not address the mismatch issue, they compromise the performance of feature extractors to some extent, thereby limiting their performance to a moderate level across the three datasets. `\methodname{} w/o $L_{\text{Con}}$' aligns local features with the global feature space using classification prompts, enhancing both local feature-classifier alignment and inter-client collaboration effectiveness. It achieves competitive performance on CIFAR-10 and surpasses existing SOTA methods on CIFAR-100 and Tiny ImageNet. \methodname{} further incorporates contrastive learning tasks to enhance feature extractor performance, outperforming SOTA methods significantly across all datasets.

\section{Feature Separability of Different Methods}\label{app sec:separability of different methods}
\begin{table}[htb]
\caption{Linear probe accuracy (\%) of different methods.}
\label{app:separability of methods}
\centering
\begin{tabular}{ccccccc}
\toprule
     & \multicolumn{3}{c}{CIFAR-10}              & \multicolumn{3}{c}{CIFAR-100}              \\ \midrule
Methods & $\alpha=0.1$ & $\alpha=0.5$ & $\alpha=1.0$ & $\alpha=0.1$ & $\alpha=0.5$ & $\alpha=1.0$ \\ \midrule
FedAvg & 85.01\% & 72.52\% & 68.38\% & 59.50\% & 37.40\% & 32.33\% \\
FedPer & 84.44\% & 71.07\% & 66.51\% & 52.09\% & 26.61\% & 20.51\% \\
FedBN & 84.52\% & 70.15\% & 66.51\% & 57.86\% & 35.24\% & 30.28\% \\
FedCAC & 85.22\% & 71.56\% & 66.98\% & 56.86\% & 34.64\% & 29.35\% \\
FedRoD & 82.79\% & 67.07\% & 63.12\% & 56.88\% & 33.99\% & 29.22\% \\
pFedSD & 85.86\% & 72.42\% & 68.12\% & 60.07\% & 37.33\% & 31.99\% \\
\midrule
\methodname{} w/o $L_{\text{Con}}$  & 85.52\% & 72.59\% & 69.57\% & 61.60\% & 43.14\% & 38.47\% \\
\methodname{}  & 87.83\% & 77.25\% & 74.02\% & 64.12\% & 46.43\% & 40.95\% \\ 
\bottomrule
\end{tabular}
\end{table}

In this section, we delve deeper into the linear separability of features extracted by various PFL methods. Linear separability is a critical measure of feature quality, indicating the ability of a model to distinguish between classes using simple linear classifiers. We conduct linear probing experiments on the CIFAR-10 and CIFAR-100 datasets to assess this metric, with results detailed in Table~\ref{app:separability of methods}.

It can be observed that the feature linear separability of most PFL methods is inferior to FedAvg. This indicates that although they partially alleviate the mismatch issue and achieve better model performance, the quality of the feature extractor is inevitably compromised due to their design, constraining the full potential of PFL. 

In stark contrast, \methodname{} significantly improves the linear separability of features compared to FedAvg. Our method accomplishes this by fundamentally addressing the mismatch issue during the training process rather than merely adapting the model post hoc. This proactive approach ensures that the feature extractor not only aligns more closely with the global classifier but also preserves its ability to generalize across diverse data distributions. Consequently, \methodname{} enhances both the performance and the utility of the feature extractor.

\section{Comparison with Two-stage Approach}
In \methodname{}, we propose using a feature transformation module to coordinate the joint training of contrastive learning and classification tasks. To illustrate the superiority of this design, we introduce a baseline called `Two-stage,' similar to \cite{wang2023does}, where contrastive learning training is conducted first, followed by classification task training after convergence. For fairness, in the two-stage method, we first perform 1000 rounds of contrastive learning training, followed by 1000 rounds of classification task training. The experimental results are depicted in Figure~\ref{expe:compare with two-stage}.
\begin{figure}[htb]
    \centering
    
    \begin{adjustbox}{valign=b}
        \begin{minipage}[b]{0.32\linewidth}
            \centering
            \includegraphics[width=\linewidth]{./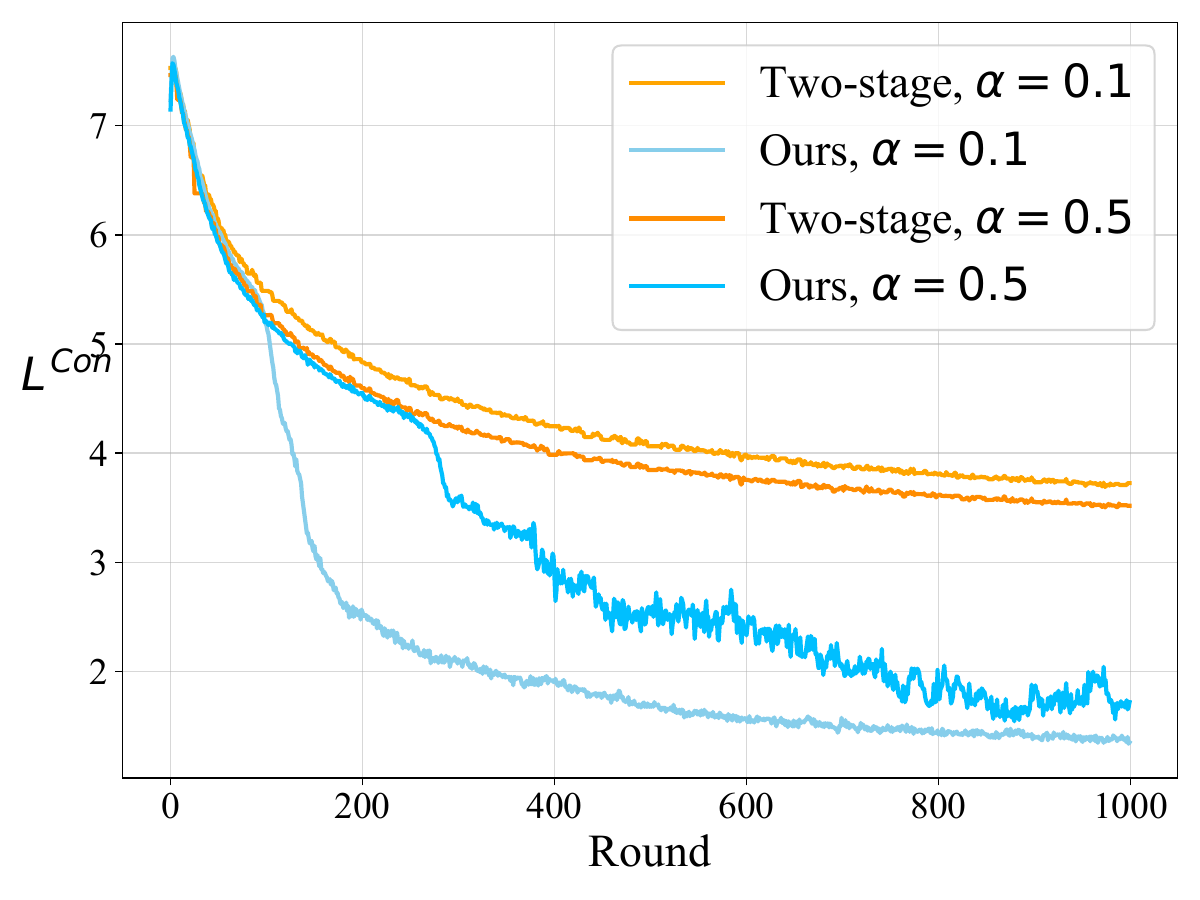}
            \caption*{(a) $L_{\text{Con}}$} 
        \end{minipage}
    \end{adjustbox}
    \hfill
    \begin{adjustbox}{valign=b}
        \begin{minipage}[b]{0.33\linewidth}
            \captionsetup{skip=35pt}
            \begin{tabular}{ccc}
                \toprule
                Methods & $\alpha=0.1$ & $\alpha=0.5$ \\
                \midrule
                Two-stage & \makecell{53.43} & \makecell{43.87} \\
                Ours & \makecell{62.03} & \makecell{47.98} \\
                \bottomrule
            \end{tabular}
            \caption*{(b) Accuracy (\%)} 
        \end{minipage}
    \end{adjustbox}
    \hfill
    \begin{adjustbox}{valign=b}
        \begin{minipage}[b]{0.32\linewidth}
            \centering
            \includegraphics[width=\linewidth]{./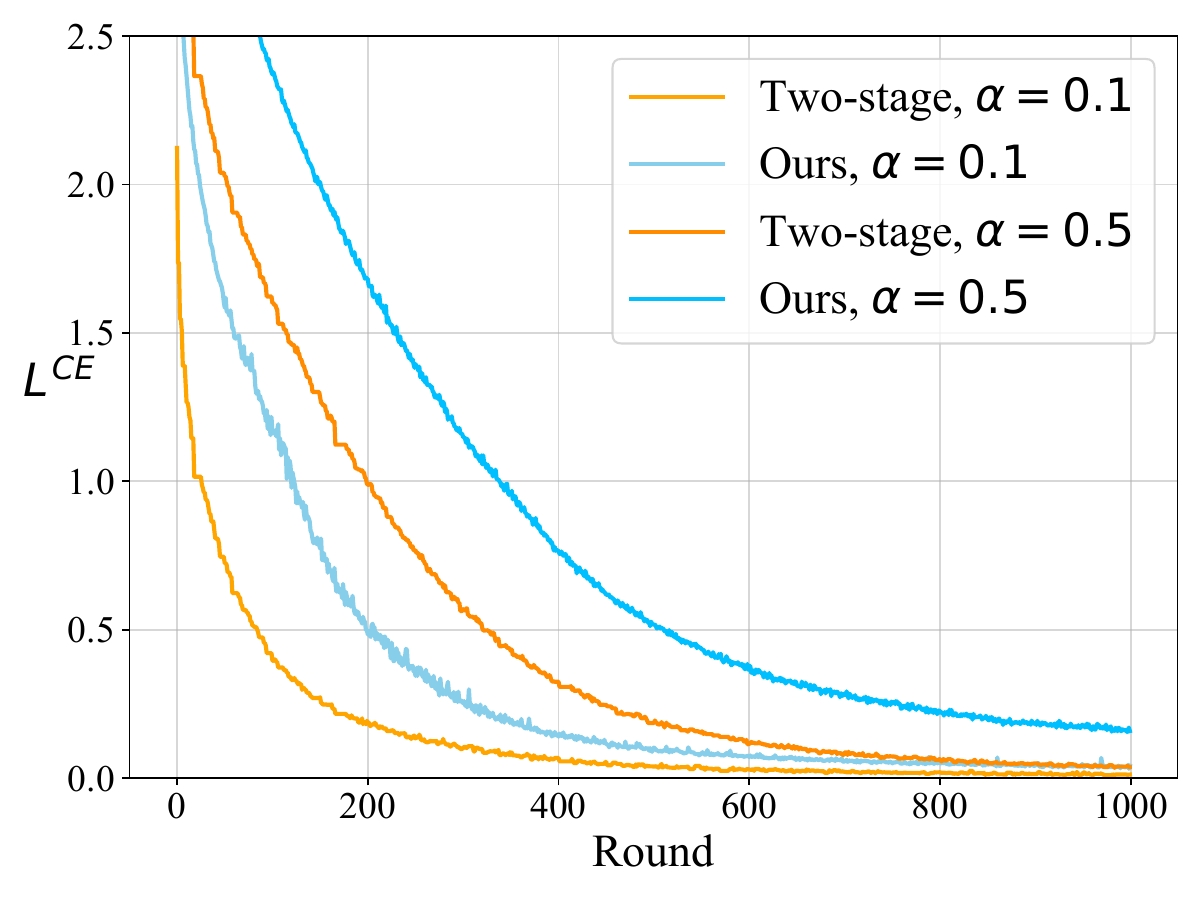}
            \caption*{(c) $L_{\text{CE}}$} 
        \end{minipage}
    \end{adjustbox}

    \caption{Comparison with two-stage approach on training $L_{\text{Con}}$, $L_{\text{CE}}$, and testing accuracy.}
    \label{expe:compare with two-stage}
\end{figure}

Firstly, from the perspective of the contrastive learning loss ($L_{\text{Con}}$), \methodname{} registers lower loss values compared to the Two-stage approach, suggesting that simultaneous training with the classification task enhances the efficacy of contrastive learning.  Secondly, considering both Figure~\ref{expe:compare with two-stage}(b) and Figure~\ref{expe:compare with two-stage}(c), our method exhibits significantly higher accuracy compared to the Two-stage approach. However, $L_{\text{CE}}$ converges to a higher training loss value, suggesting that in our design, contrastive learning tasks can alleviate overfitting issues in the classification task during training. These experiments demonstrate that our proposed approach can effectively coordinate both tasks, allowing them to assist each other. Importantly, these experiments also indicate that the significant performance improvement brought by contrastive learning in our method is largely attributed to the design of our feature transformation module and training approach.

\section{Attention Weight Visualization}
In the feature transformation module of \methodname{}, self-attention mechanisms are employed to facilitate the integration of prompts with sample features. This section visualizes the attention weights to reveal how prompts influence the transformation process. We analyze 20 test samples from a single client on the CIFAR-10 dataset, with results depicted in Figure~\ref{expe:attention weight}. Each row in the figure corresponds to the attention weights for the output feature $f'$ of a single sample. Columns represent the input dimensions of the transformation module: the first column corresponds to the original input feature $f$, while subsequent columns relate to different prompts from the sets $p_{\kappa, i}$ or $p_{\rho, i}$.
\begin{figure}[htb]
        \centering  

	\subfigure[$p_{\rho}, \alpha=0.1$ ]{		\includegraphics[width=0.32\linewidth]{./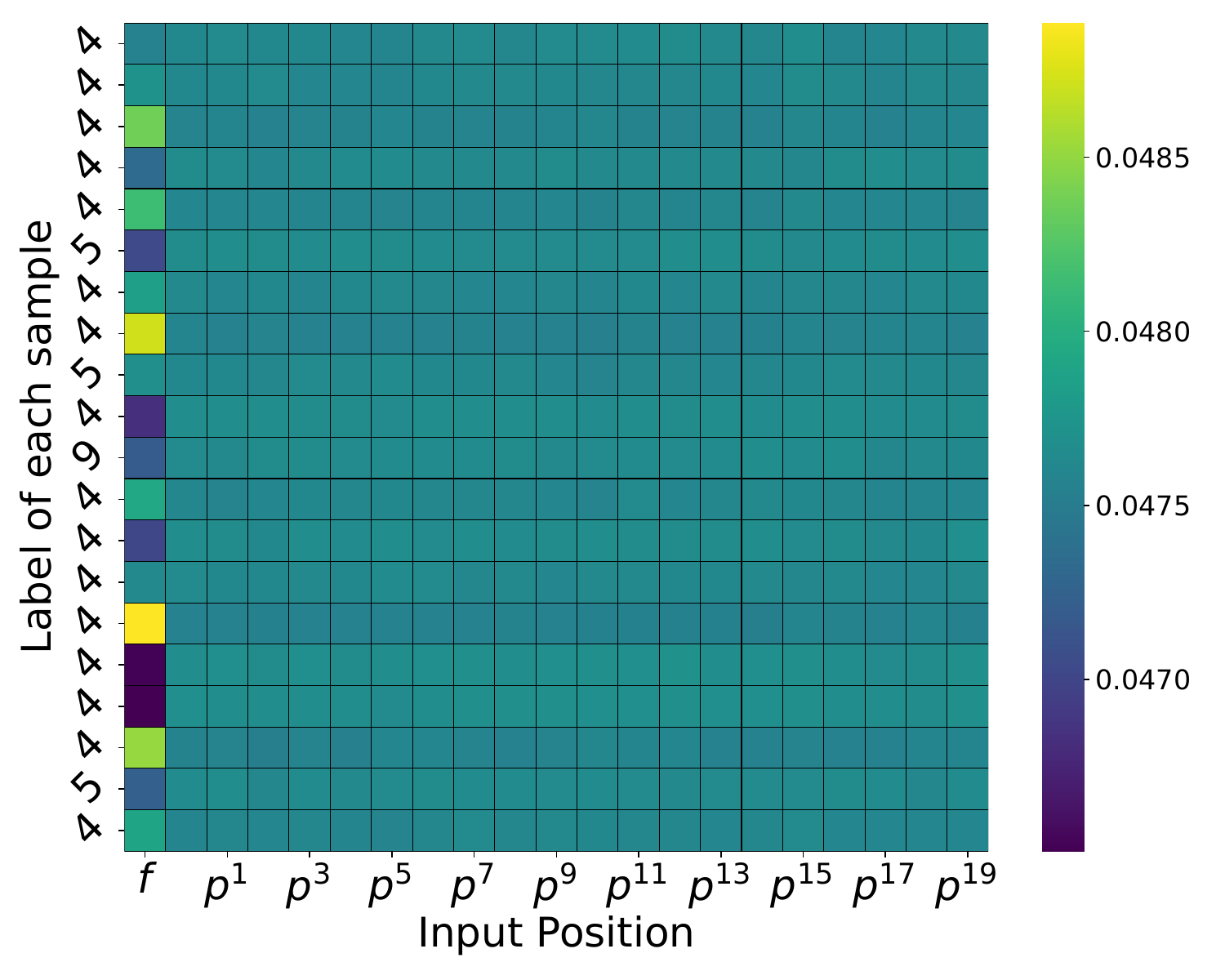}}
\subfigure[$p_{\rho}, \alpha=0.5$]{
	\includegraphics[width=0.32\linewidth]{./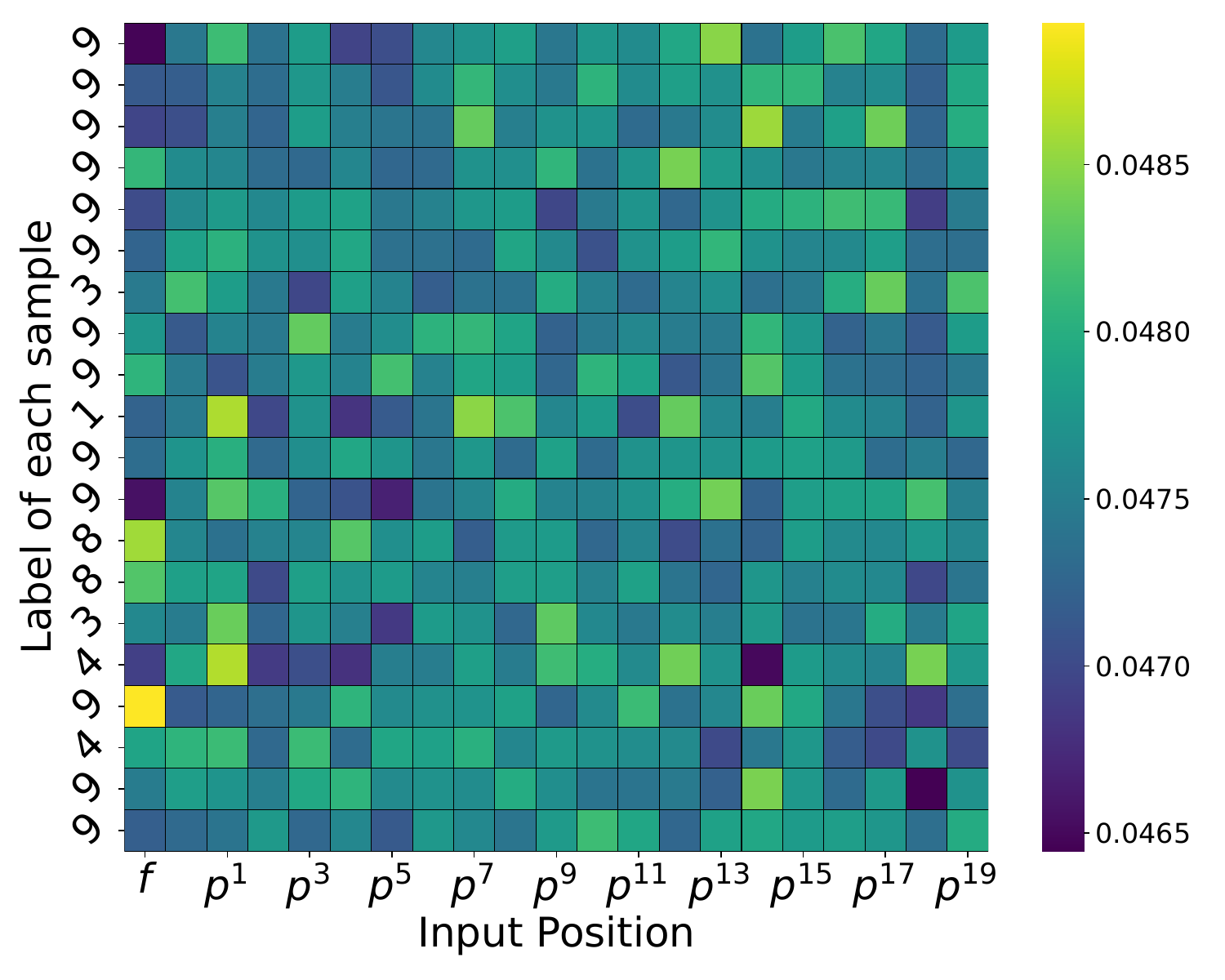}}
 \subfigure[$p_{\rho}, \alpha=1.0$]{
	\includegraphics[width=0.32\linewidth]{./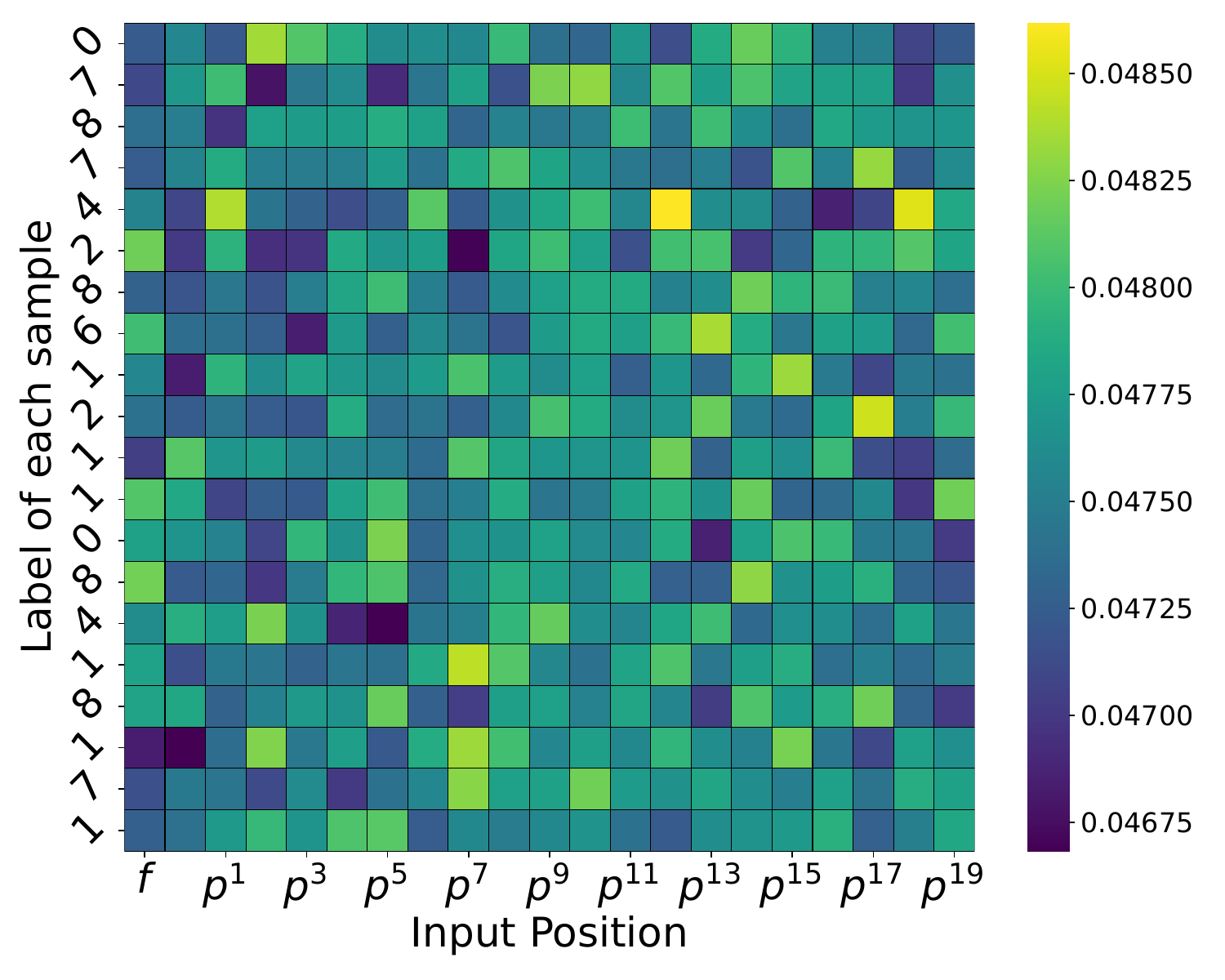}}
     \\
\subfigure[$p_{\kappa}, \alpha=0.1$]{
	\includegraphics[width=0.32\linewidth]{./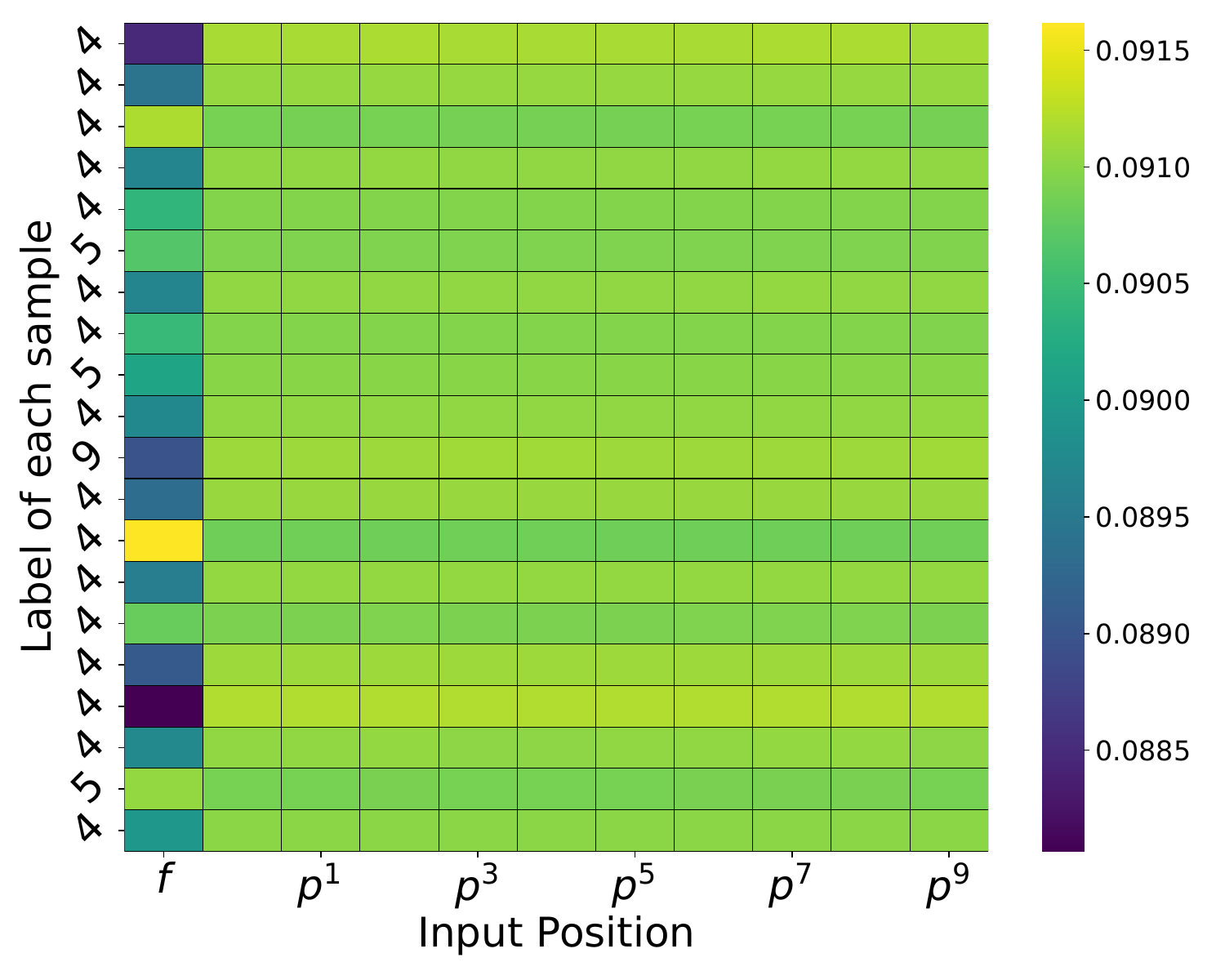}}
  \subfigure[$p_{\kappa}, \alpha=0.5$]{
	\includegraphics[width=0.32\linewidth]{./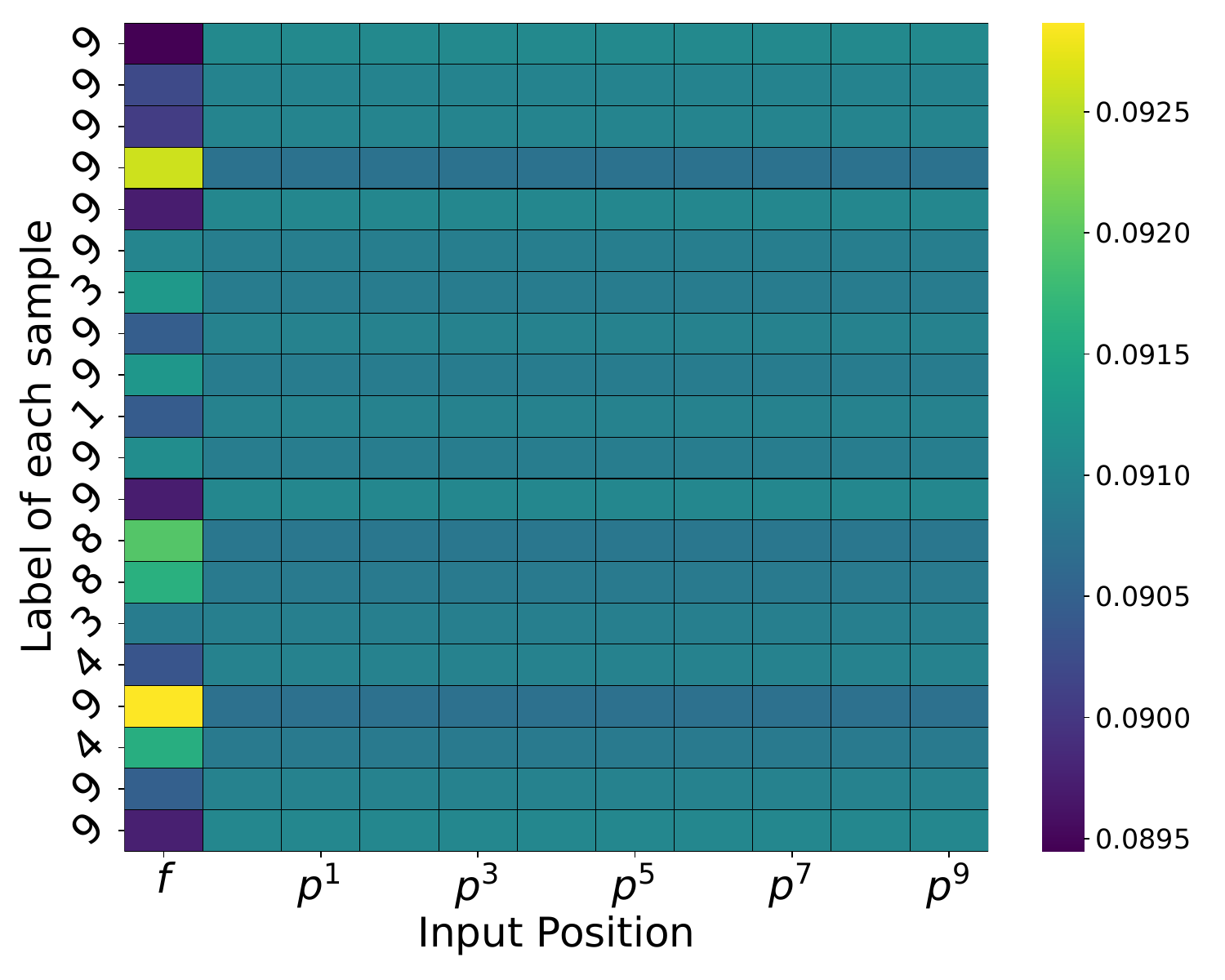}}
    \subfigure[$p_{\kappa}, \alpha=1.0$]{
	\includegraphics[width=0.32\linewidth]{./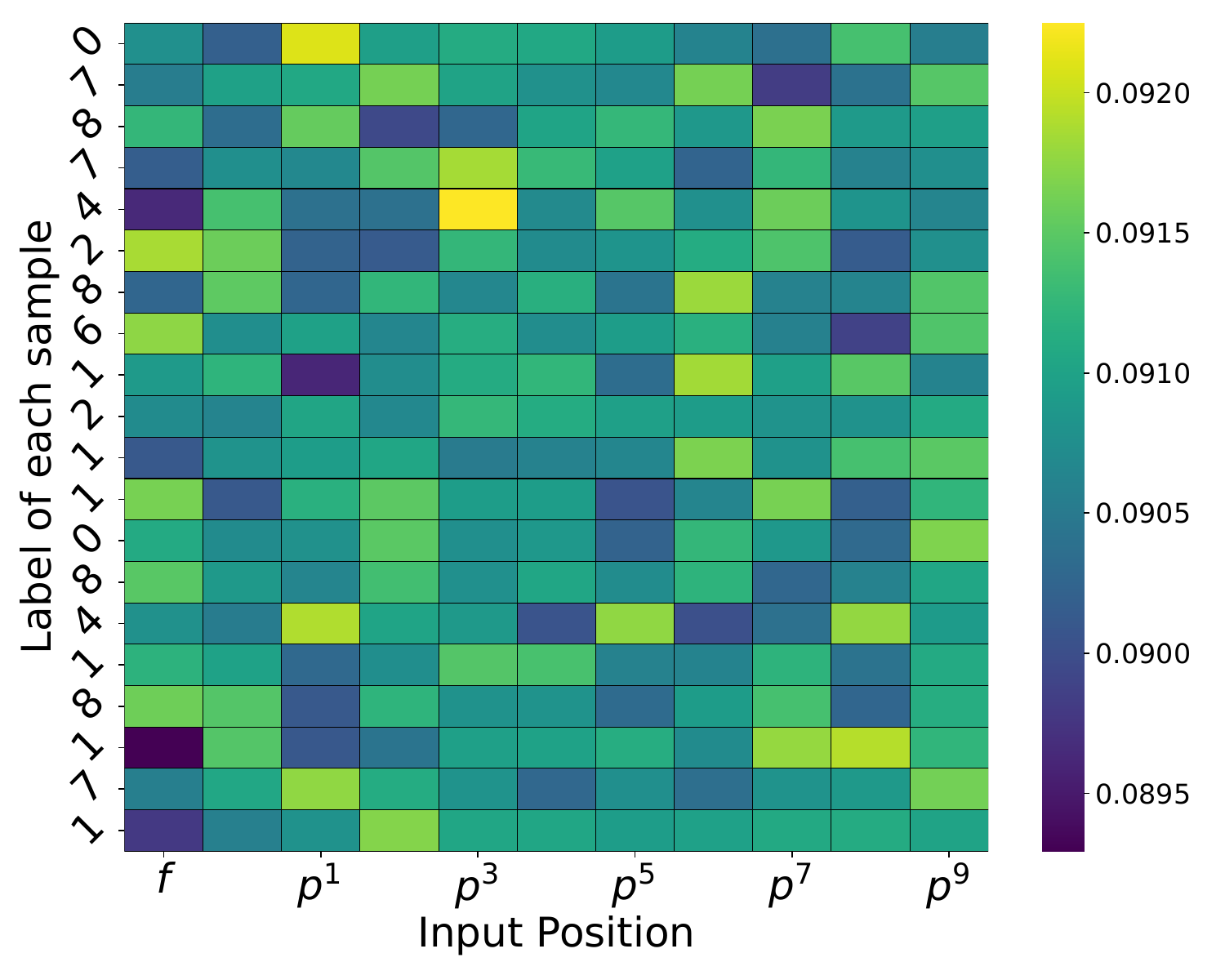}}
	\caption{Visualize attention weights for different prompts in a client in the CIFAR-10 dataset under the Dirichlet non-IID scenario.}
	\label{expe:attention weight}
\end{figure}

It can be observed that when $\alpha=0.1$, indicating severe local class imbalances, each client has data from only a few classes. In this case, the feature transformation task is relatively simple, and the influence of different prompts on a sample is similar. As $\alpha$ increases, indicating more complex local tasks, the influence of prompts becomes more intricate. Particularly at $\alpha=1.0$, it can be seen that each sample is affected differently by different prompts. This also indicates that our approach performs sample-level feature transformation.

\section{Partial Client Participation}
In FL, challenges such as offline clients and unstable communication may result in only a subset of clients participating in training each round, posing a challenge to the robustness of FL algorithms. In this section, we investigate whether \methodname{} is robust to this issue. We conduct experiments on CIFAR-10, CIFAR-100, and Tiny ImageNet, considering scenarios where only a random 50\%, 70\%, and 90\% of clients participate in training each round. The experimental results are presented in Table~\ref{expe:partial client}.
\begin{table}[htb]
	\vskip 0in
 \caption{Accuracy (\%) of \methodname{} when different proportions of clients participate in each round of training. The content in `()' represents the performance change compared to 100\% client participation.}
	\begin{center}
				\begin{tabular}{lcccc}
					\toprule
					Datasets & 100\% & 90\% & 70\% & 50\% \\
					\midrule
					CIFAR-10 & 88.60$\pm$2.19  & 88.50$\pm$2.01 {\small (-0.10)} & 88.57$\pm$2.53 {\small (-0.03)} & 88.69$\pm$1.83 {\small (+0.09)} \\
					CIFAR-100 & 62.03$\pm$1.41 & 61.65$\pm$0.41 {\small (-0.38)} & 63.54$\pm$0.55 {\small (+1.51)} & 63.97$\pm$0.10 {\small (+1.94)} \\
                    Tiny & 43.42$\pm$1.62 & 43.03$\pm$2.09 {\small (-0.39)} & 44.59$\pm$1.19 {\small (+1.17)} & 45.81$\pm$1.02 {\small (+2.39)} \\
					\bottomrule
				\end{tabular}
	\end{center}
 \label{expe:partial client}
	\vskip -0.0in
\end{table}

It can be observed that compared to scenarios where all clients participate in training, \methodname's accuracy is not significantly reduced when only a subset of clients participate. Furthermore, in CIFAR-100 and Tiny ImageNet, the performance of \methodname{} may even be improved. This is because reducing the number of participating clients each round may mitigate the impact of non-IID data distribution on the global model. These experiments demonstrate the robustness of \methodname{} to scenarios where only a subset of clients participate.

\section{Effect of Hyperparameters}
In the previous experiments, we utilize the default hyperparameter combination. In this section, we verify how variations in these hyperparameters influence the performance of \methodname{}.

\subsection{Effect of $n_{\kappa}$ and $n_{\rho}$}
\begin{figure}[htb]
        \centering  

	\subfigure[CIFAR-10, $\alpha=0.1$]{
		\includegraphics[width=0.23\linewidth]{./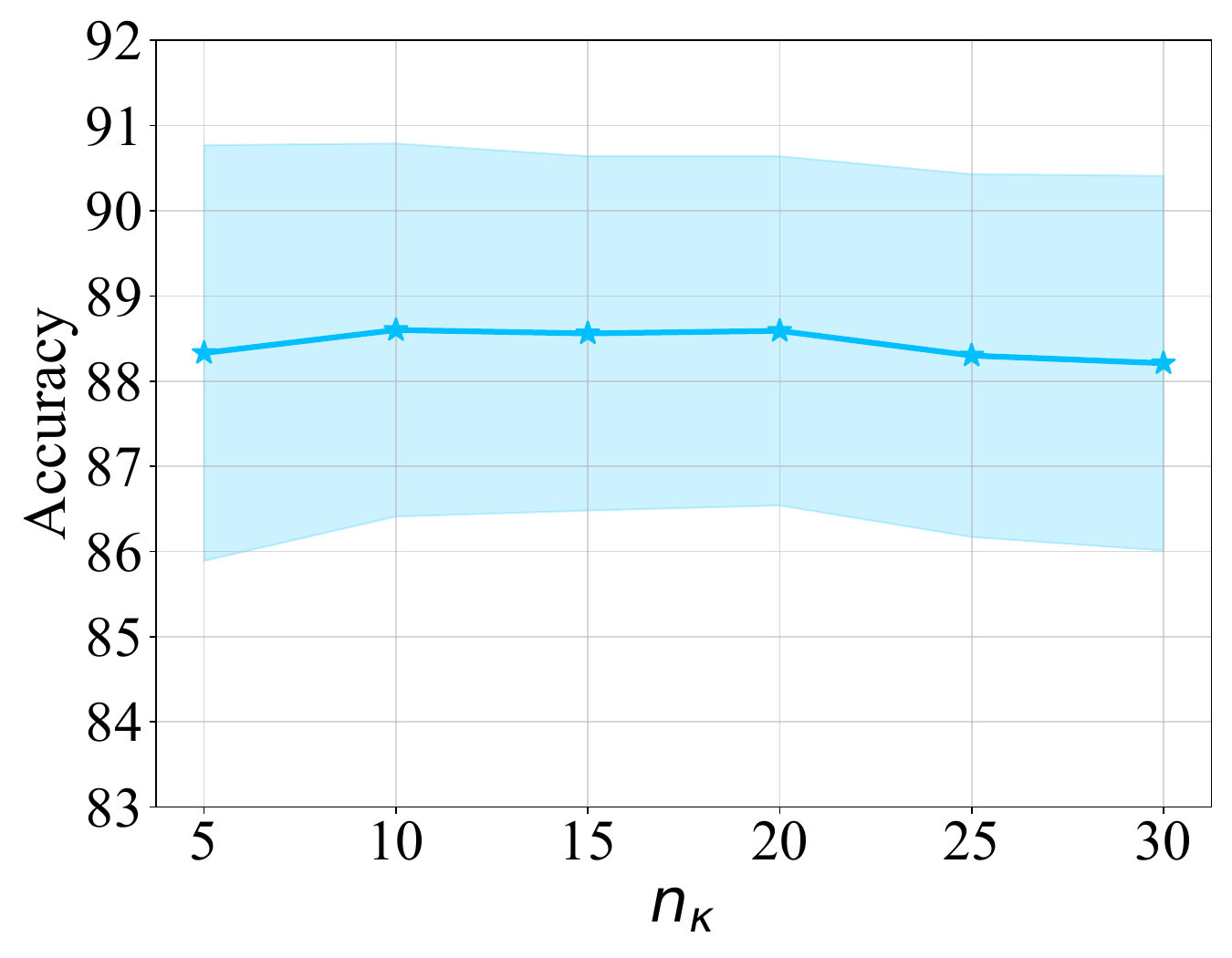}}
	\subfigure[CIFAR-10, $\alpha=0.5$]{
		\includegraphics[width=0.23\linewidth]{./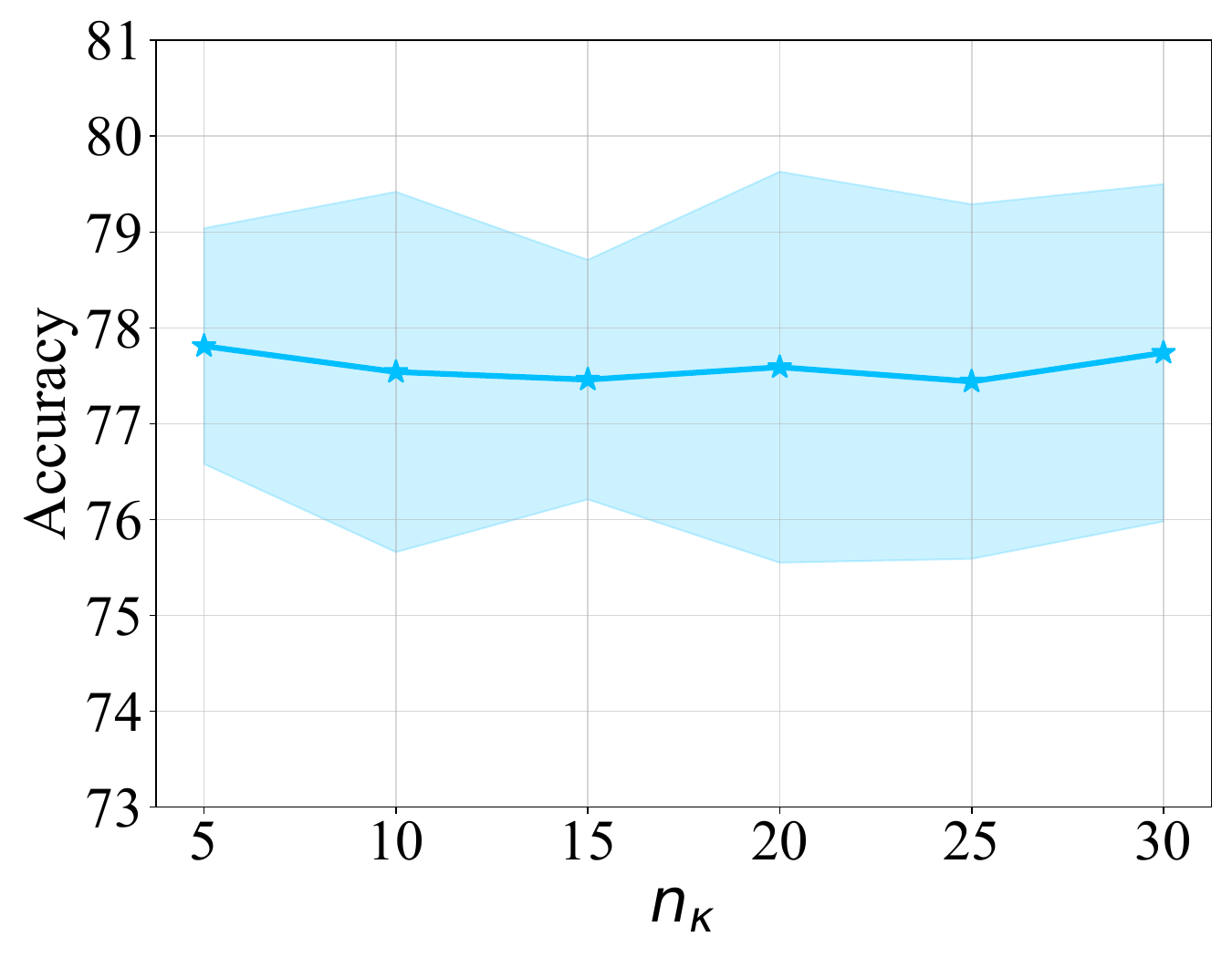}}
    \subfigure[CIFAR-100, $\alpha=0.1$]{
		\includegraphics[width=0.23\linewidth]{./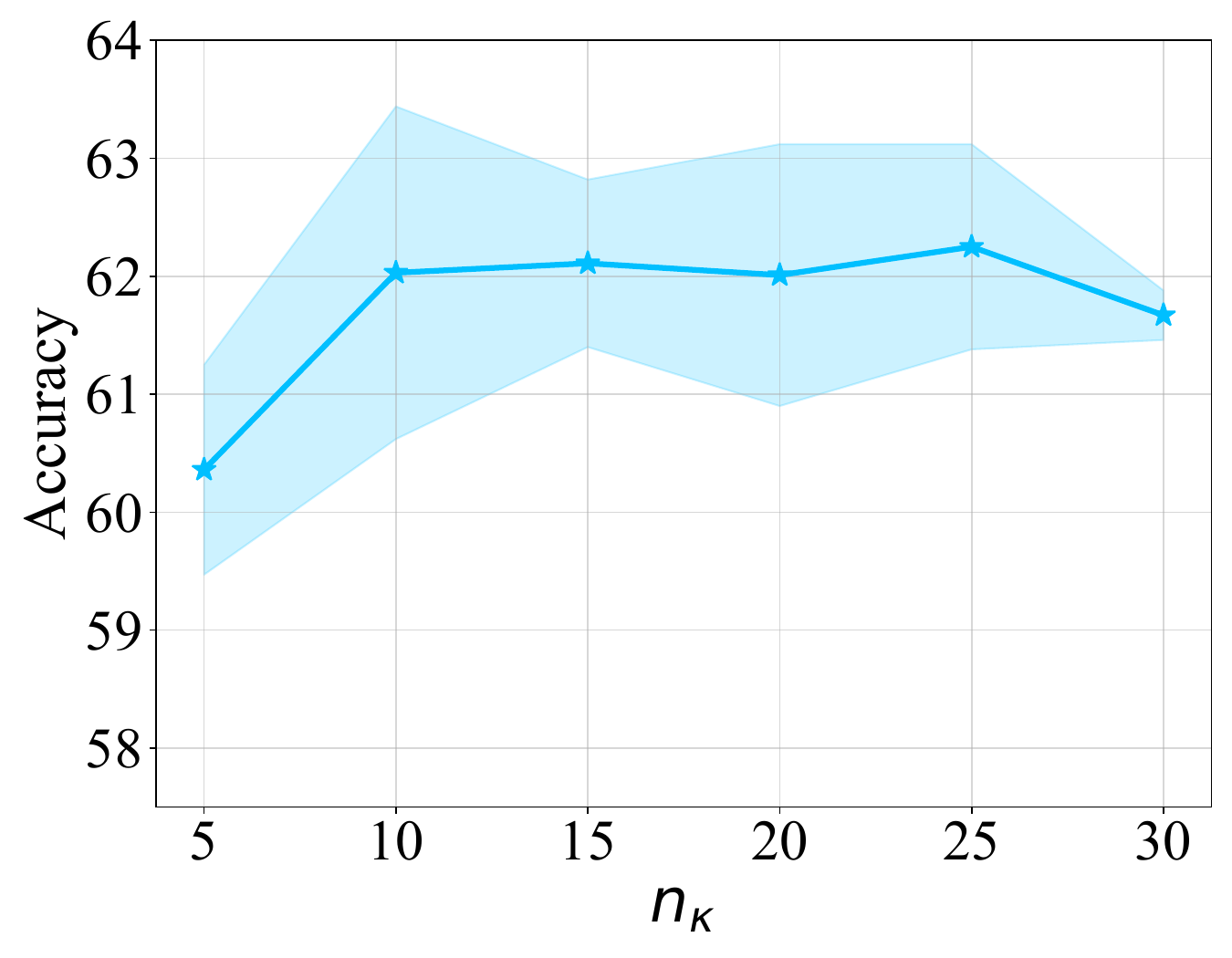}}
  \subfigure[CIFAR-100, $\alpha=0.5$]{
		\includegraphics[width=0.23\linewidth]{./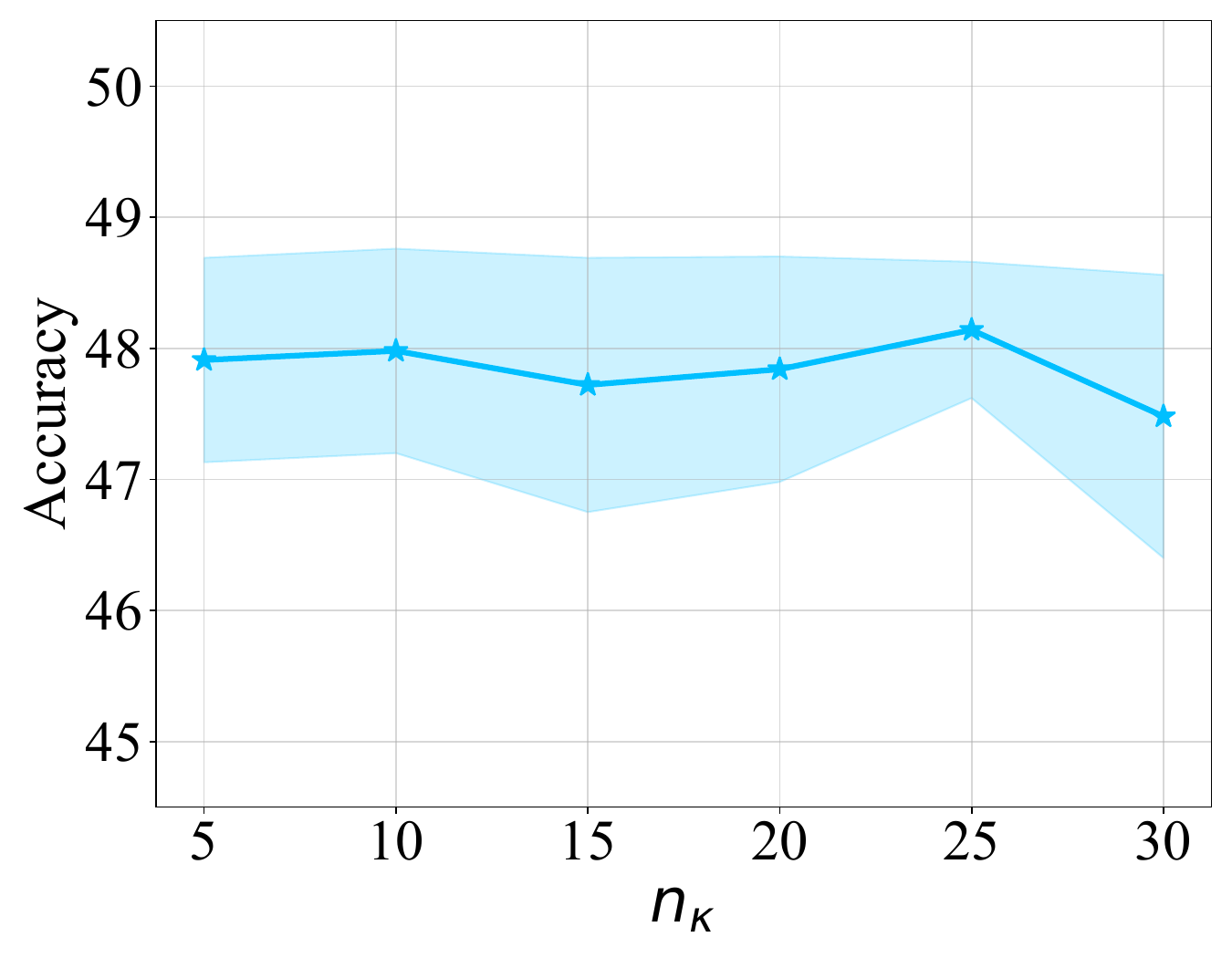}} \\
        
	\caption{The effect of hyperparameter $n_{\kappa}$ on CIFAR-10 and CIFAR-100 in the Dirichlet non-IID scenario.}
	\label{expe:effect of p1}
\end{figure}
\begin{figure}[htb]
        \centering  

	\subfigure[CIFAR-10, $\alpha=0.1$]{
		\includegraphics[width=0.23\linewidth]{./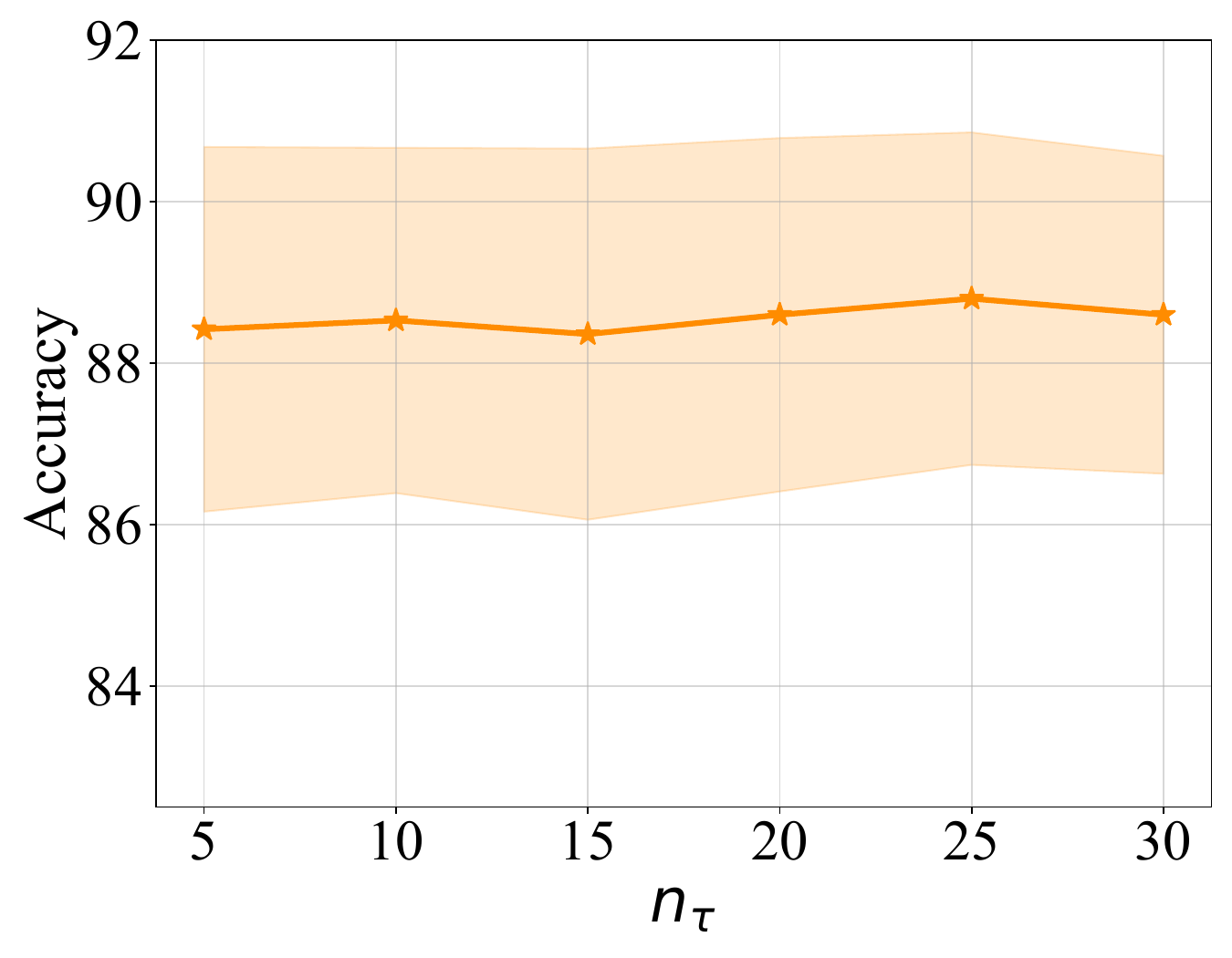}}
	\subfigure[CIFAR-10, $\alpha=0.5$]{
		\includegraphics[width=0.23\linewidth]{./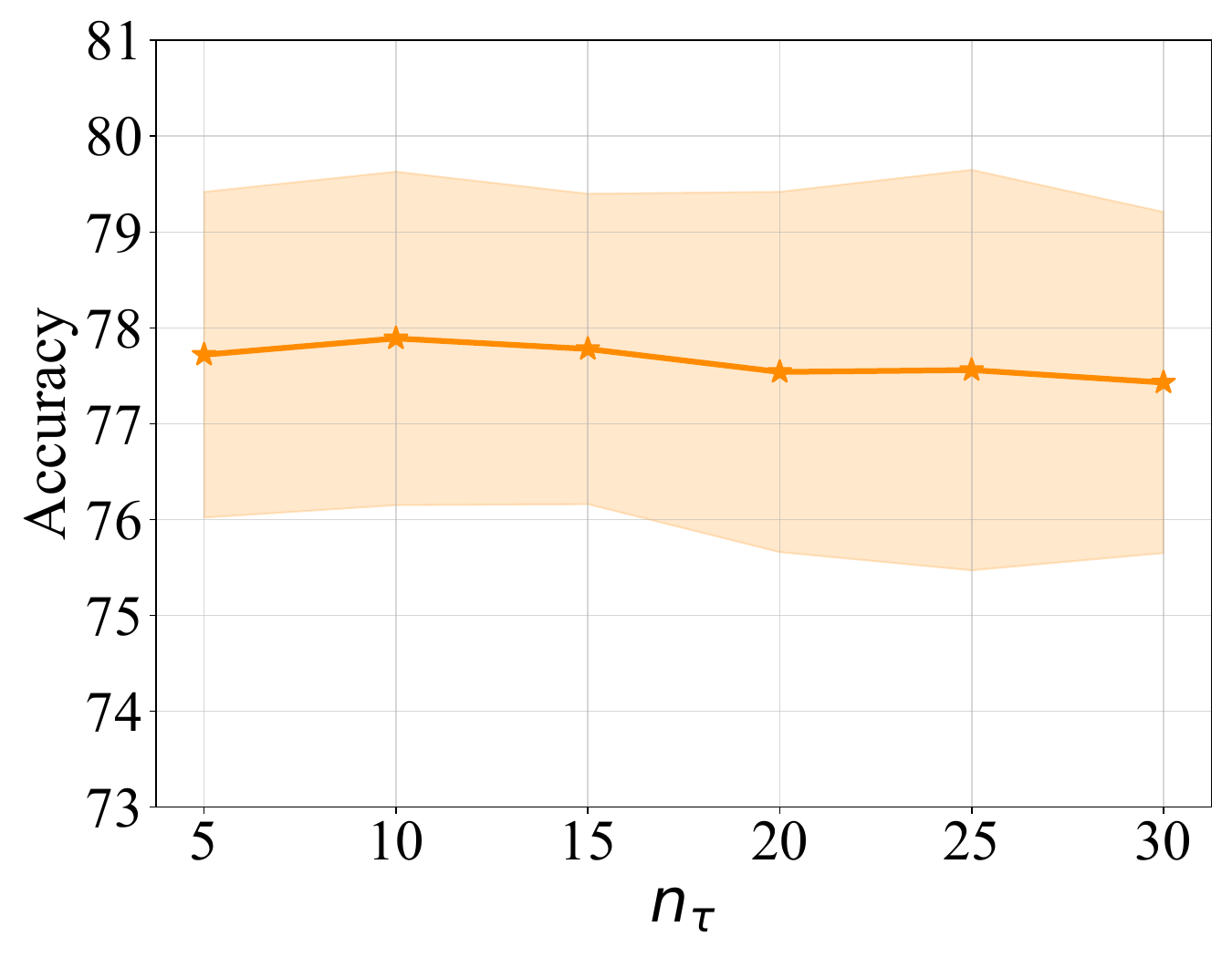}}
    \subfigure[CIFAR-100, $\alpha=0.1$]{
		\includegraphics[width=0.23\linewidth]{./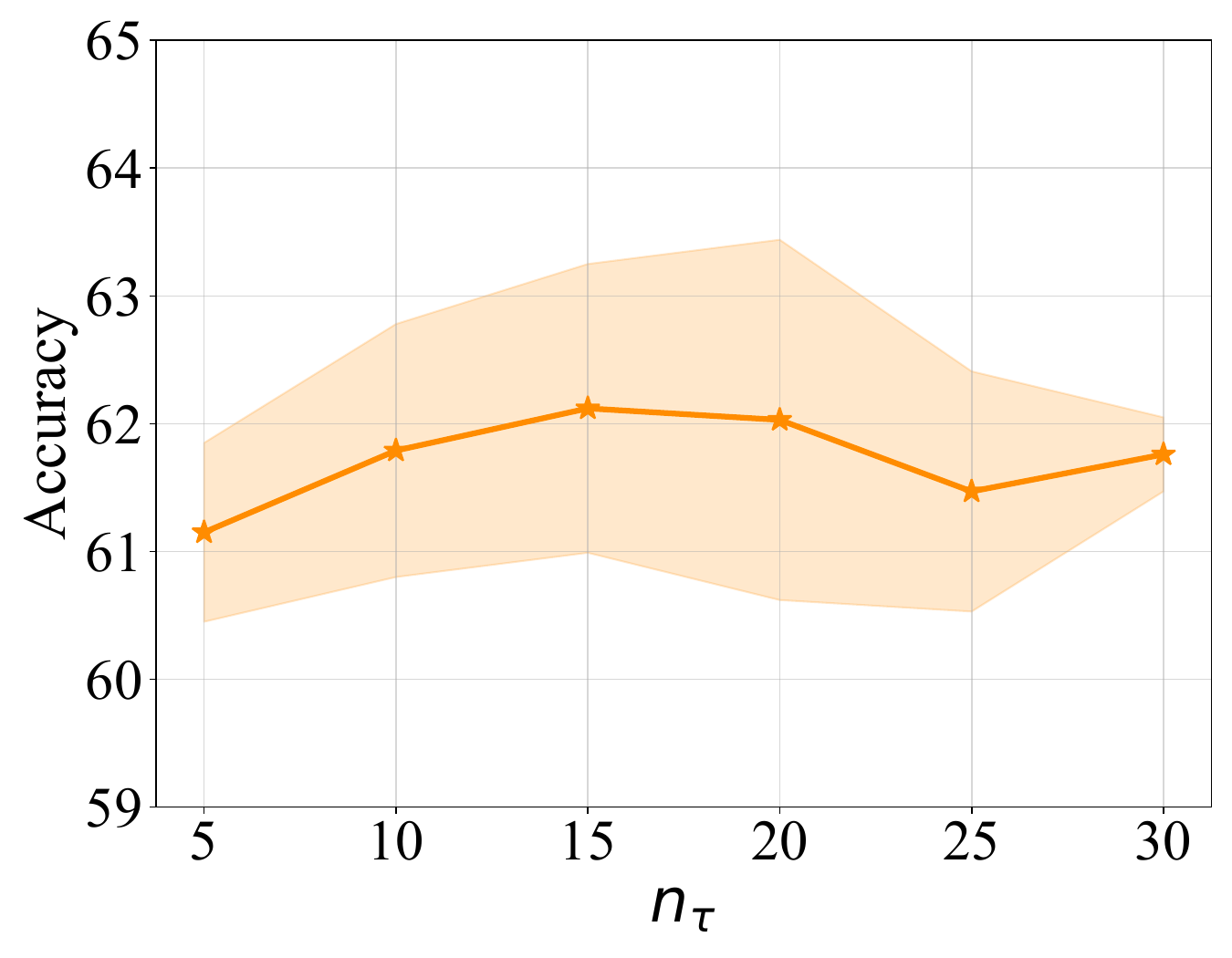}}
  \subfigure[CIFAR-100, $\alpha=0.5$]{
		\includegraphics[width=0.23\linewidth]{./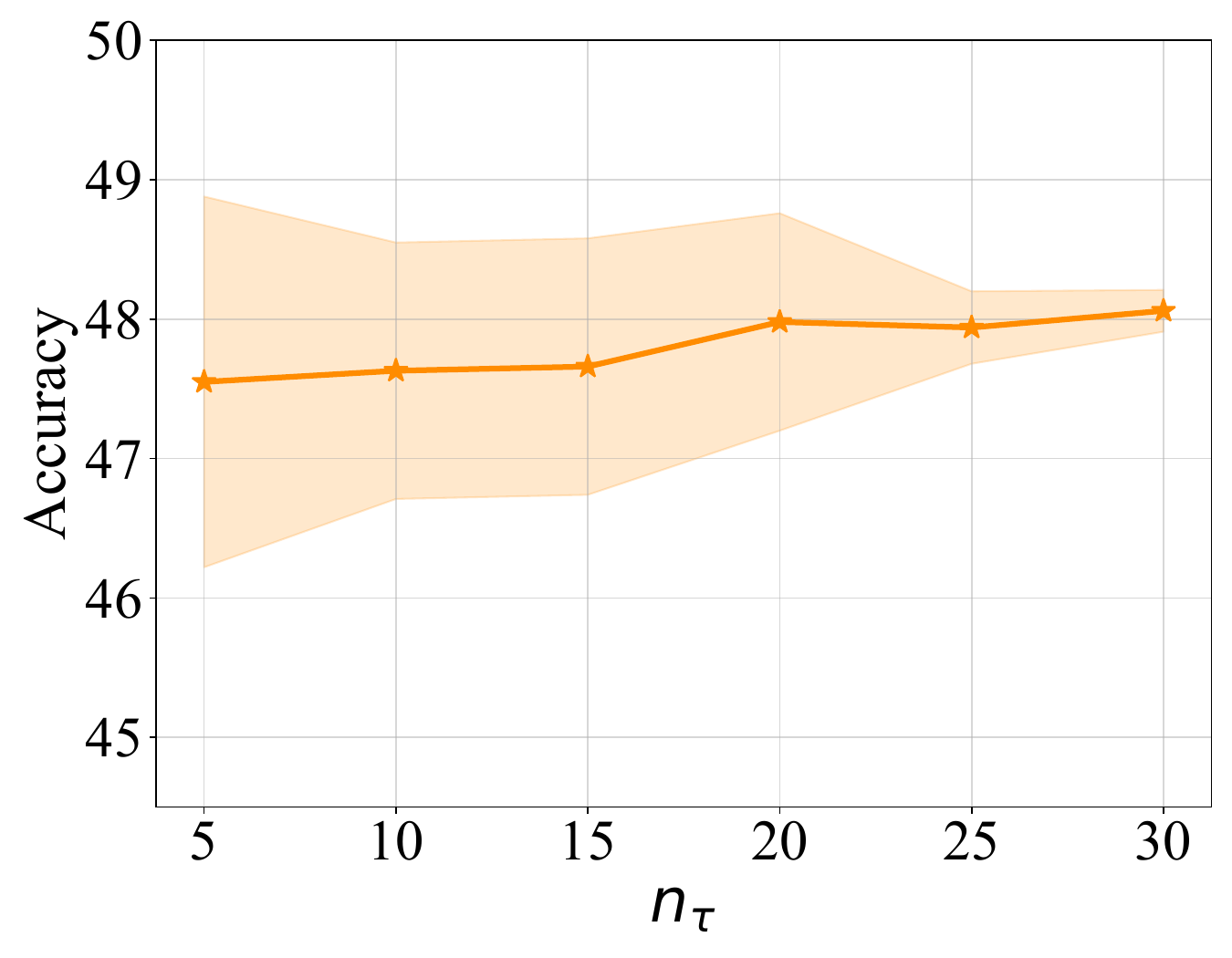}} \\
        
	\caption{The effect of hyperparameter $n_{\rho}$ on CIFAR-10 and CIFAR-100 in the Dirichlet non-IID scenario.}
	\label{expe:effect of p2}
\end{figure}
$n_{\kappa}$ and $n_{\rho}$ respectively represent the number of prompts in $p_{\kappa, i}$ and $p_{\rho, i}$ for each client. We examine the impact of these two hyperparameters on the performance of \methodname{} on CIFAR-10 and CIFAR-100 datasets. When assessing the effect of $n_{\kappa}$, we hold $n_{\rho}=20$ constant. Similarly, when evaluating the impact of $n_{\rho}$, $n_{\kappa}$ is fixed at 10. The experimental results are depicted in Figure~\ref{expe:effect of p1} and Figure~\ref{expe:effect of p2}.

\methodname{} shows considerable robustness to variations in these hyperparameters. On the CIFAR-10 dataset, changes in $n_{\kappa}$ and $n_{\rho}$ have minimal impact on performance, suggesting that the model can effectively handle simpler data distributions even with fewer prompts. In contrast, on the more complex CIFAR-100 dataset, performance is initially limited by a smaller number of prompts, which may not sufficiently cover the diverse feature space required for effective feature transformation. As the number of prompts increases, the model's ability to transform and adapt features improves, leading to enhanced performance.

\subsection{Effect of $R_f$ and $R_a$}
$R_f$ and $R_a$ are used to control the number of training epochs for the two training stages. Since we set $R_f + R_a = R$, in this experiment, we only adjust $R_f$ to examine the impact of these two hyperparameters on model performance. The experimental results are illustrated in Figure~\ref{expe:effect of e1}.
\begin{figure}[htb]
        \centering  

	\subfigure[CIFAR-10, $\alpha=0.1$]{
		\includegraphics[width=0.32\linewidth]{./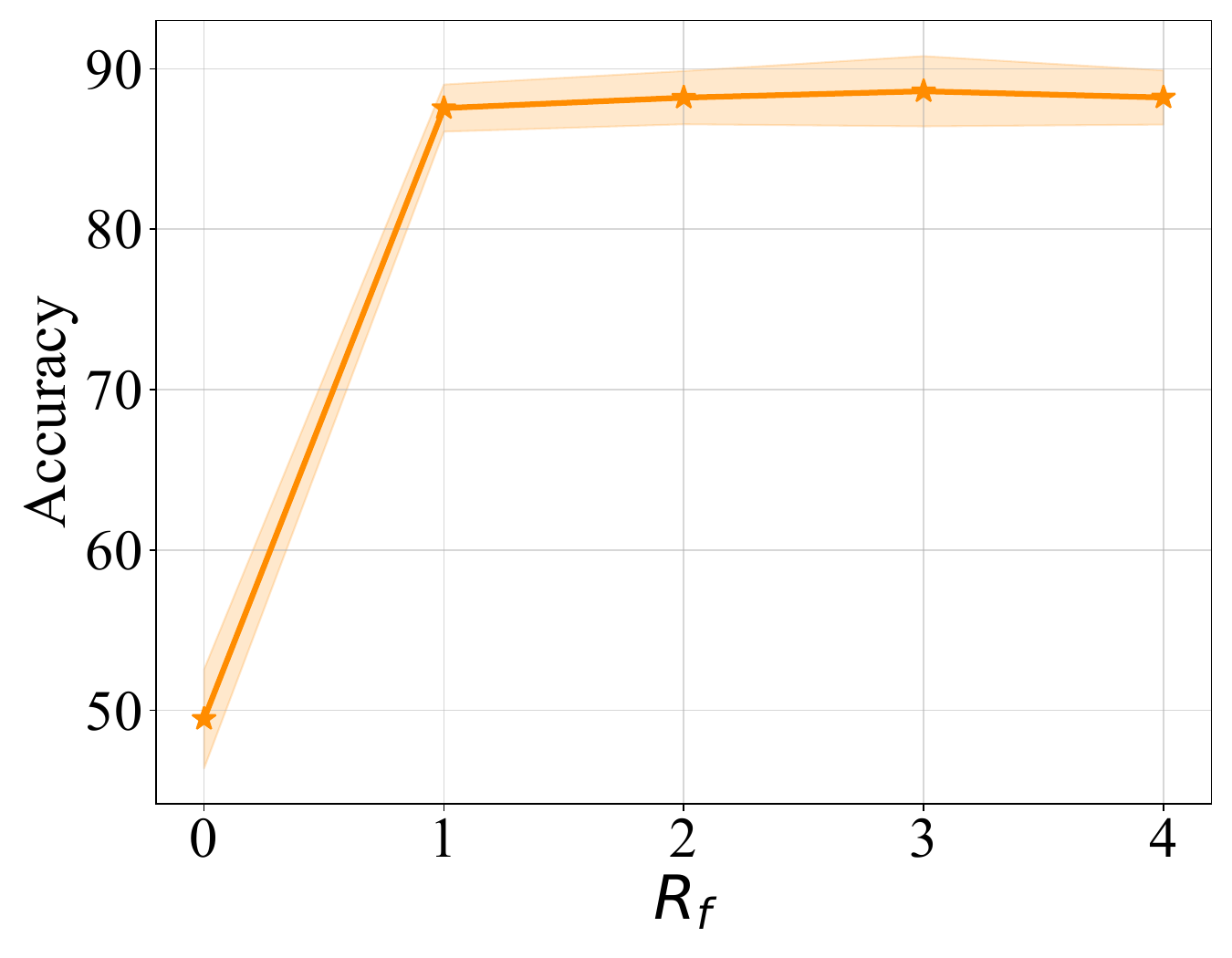}}
	\subfigure[CIFAR-100, $\alpha=0.1$]{
		\includegraphics[width=0.32\linewidth]{./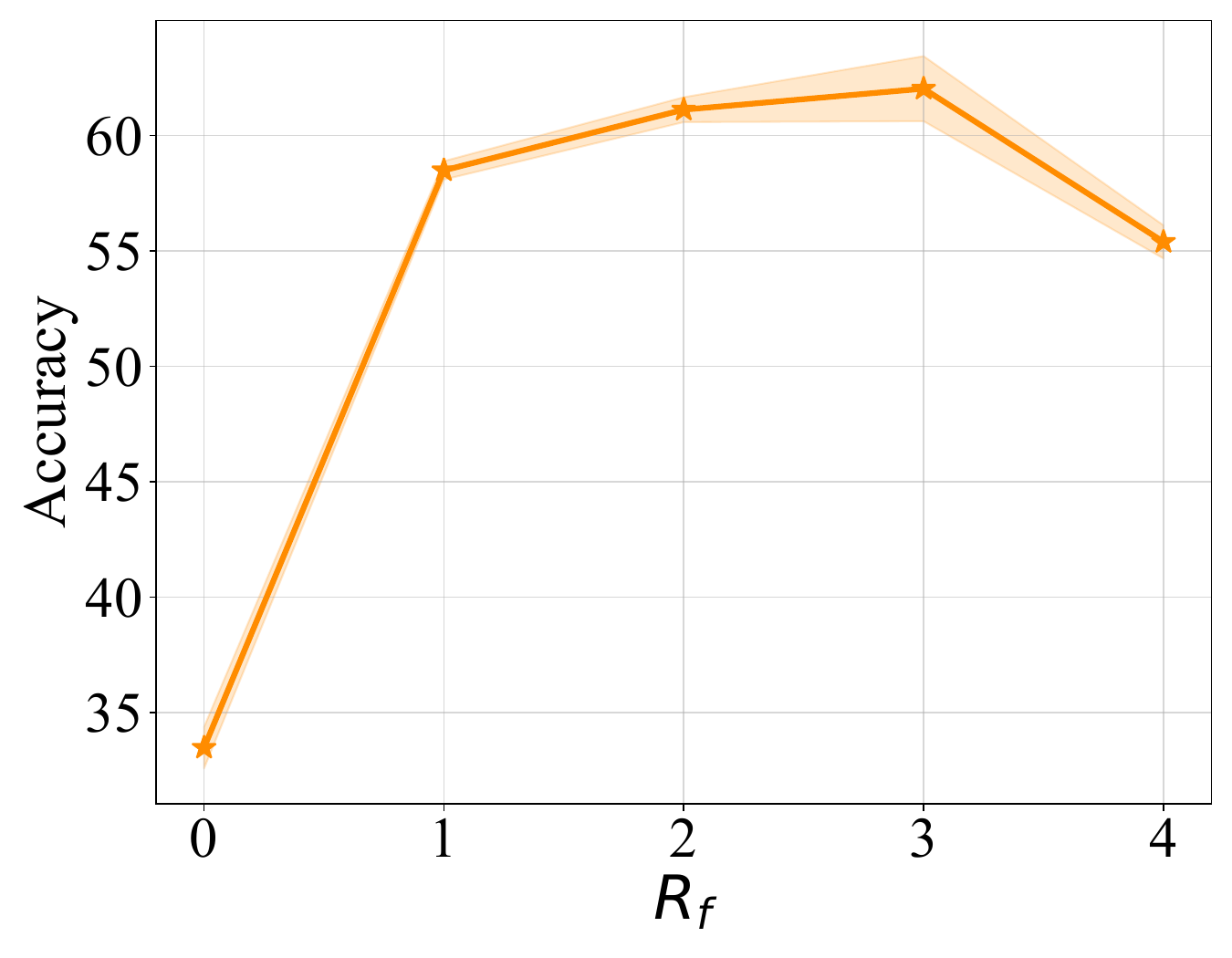}}
    \subfigure[Tiny ImageNet, $\alpha=0.1$]{
		\includegraphics[width=0.32\linewidth]{./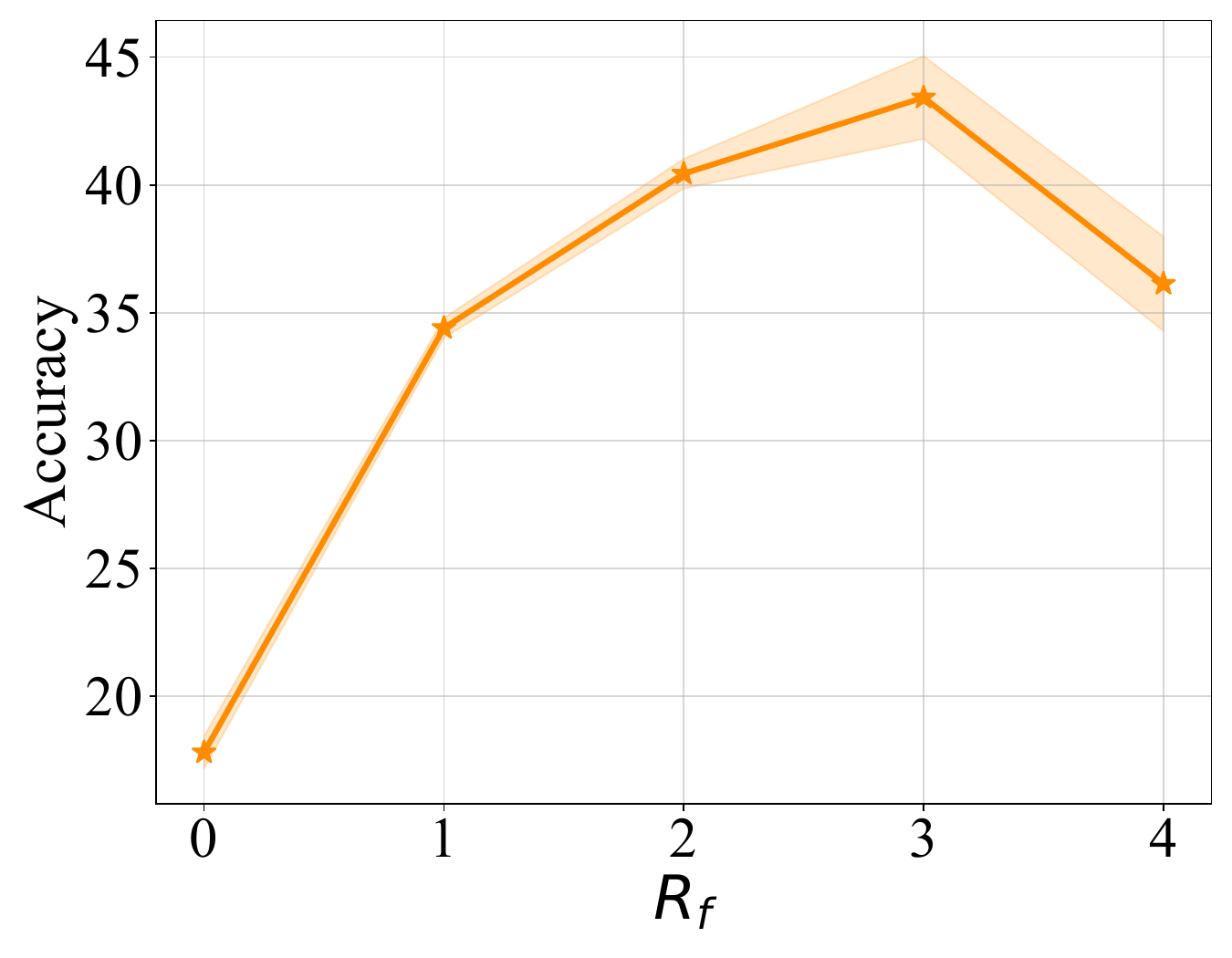}}
        
	\caption{The effect of hyperparameter $R_f$ on CIFAR-10, CIFAR-100, and Tiny ImageNet in the Dirichlet non-IID scenario with $\alpha=0.1$.}
	\label{expe:effect of e1}
\end{figure}

When $R_f = 0$, it indicates that contrastive learning is not used to train the feature extractor, and local features are not aligned with the global classifier before training model parameters. It can be observed that the model performance is very poor under this condition. As $R_f$ gradually increases, the model performance shows a trend of initially increasing and then decreasing. This suggests that $R_f$ essentially balances the trade-off between the two training stages. When $R_f$ is small, the feature extractor is predominantly trained by $L_{\text{CE}}$, and the classifier undergoes more training epochs. At this point, the model pays more attention to the local data distribution of clients, but collaboration among clients is also more susceptible to non-IID effects. Conversely, when $R_f$ is large, the feature extractor is primarily trained by $L_{\text{Con}}$, focusing more on general features, and collaboration among clients is less affected by non-IID issues. However, because the model is rarely trained with $L_{\text{CE}}$, it pays less attention to the local data distribution of clients, resulting in poorer performance on local data.

In general, $R_f$ and $R_a$ are two hyperparameters that need to be carefully adjusted, as they have a significant impact on the performance of \methodname{}. Typically, in scenarios where the local tasks of clients are simple, it may be appropriate to decrease the value of $R_f$. In other cases, we recommend using a larger value of $R_f$ to enhance the degree of collaboration among clients.

\section{Communication Cost}
In this section, we calculate the communication overhead of one client in FedAvg and \methodname{} in each communication round.

\begin{table}[htb]
\caption{The communication cost of each client in FedAvg and \methodname{} in one communication round.}
	\vskip 0in
	\begin{center}
				\begin{tabular}{ccccccccc}
					\toprule
					Model & $\phi_i$ & $\tau_i$ & $h_{\kappa, i}$ & $h_{\rho, i}$ & FedAvg & \methodname{} & Incre. Ratio \\
                    \midrule
                    ResNet-8 & 1.24M & 0.26M & 25.70K & 32.90K & 1.27M & 1.56M & 23.14\% \\
                    
                    ResNet-10 & 4.91M & 1.05M & 51.30K & 65.66K & 4.96M & 6.08M & 22.49\% \\ 
                    
                    \bottomrule
				\end{tabular}
	\end{center}
 \label{app expe:communication cost}
	\vskip -0.0in
\end{table}

In FedAvg, each communication round involves uploading the feature extractor $\phi_i$ and the classifier $h_{\kappa, i}$. \methodname{} adds the feature transformation module $\tau_i$ and the feature projection layer $h_{\rho, i}$, thereby increasing the volume of parameters transmitted per round. According to the results presented in Table~\ref{app expe:communication cost}, the communication overhead for \methodname{} using ResNet-8 and ResNet-10 architectures is increased by 23.14\% and 22.49\%, respectively, relative to FedAvg.

While \methodname{} brings additional communication cost, it is important to weigh it against the performance enhancements and scalability offered by $\tau_i$, as discussed in earlier sections of this paper. The improved model accuracy and robustness to non-IID data might justify the additional costs in scenarios where model performance is critical.

Moving forward, considering the increase in communication cost is primarily due to the additional components $\tau_i$, we aim to develop a more efficient and lightweight feature transformation module to reduce communication demands without compromising model effectiveness in our future work.

\section{Limitations and Future Work}\label{app sec:limitation}
In this paper, we primarily investigate PFL methods that derive personalized models based on a global model. We analyze the essential reasons these methods enhance performance from the perspective of mismatches between local features and classifiers. Although such methods occupy the mainstream in the current PFL field, it is necessary to admit that there are some PFL methods that are not based on global models, such as personalized-weight-aggregation-based methods, which are not explored in this study. Additionally, while this paper observes that personalizing a subset of parameters degrades the quality of the feature extractor, the underlying reasons for this phenomenon require further investigation.
\newtheorem{assumption}{Assumption}[section]
\newtheorem{theorem}{Theorem}[section]
\newtheorem{lemma}[theorem]{Lemma}
\newtheorem{definition}{Definition}[section]
\newtheorem{corollary}{Corollary}[theorem]
\newtheorem{remark}{Remark}[theorem]
\newtheorem{proposition}{Proposition}[section]

\section{Theoretical Analysis}\label{app:ca}
Since the main problem in Eq.~\eqref{equ:main_problem} is non-convex, we focus on the factors affecting convergence in the non-convex setting.
\begin{table}[ht]
    \centering
    \begin{tabular}{c|c}
        name & symbolic \\
        \hline
         Global / Local Loss
         & 
         $L$ / $L_{i}$
         \\
            Classification / Representation Loss
            &
            $L_{\kappa}$ / $L_{\rho}$
         \\
         Local Dataset on $i^{\text{th}}$ client
         &
        $\tilde{d}_{i}\in d_{i}$
         \\
        Feature Extractor

        &
        $\phi$
        \\

        Feature Transform Module

        &
        $\tau$
        \\

        Classification / Representation Prompt

        &
        $p_{\kappa}$ / $p_{\rho}$
        \\
        Classification / Representation
        Task Head
        &
        $h_{\kappa}$
        /
        $h_{\rho}$
        \\  
        \hline
    \end{tabular}
    \caption{Glossary}
    \label{tab:glossary}
\end{table}

$$
\arg\min\mathbf{E}_{i}\{L_{i}:= L_{\kappa,i}+L_{\rho,i}\}
$$
$$
L_{\kappa,i}:=\mathbf{E}_{d_{i}} L_{\text{CE}}(\phi, \tau, p_{\kappa,i},h_{\kappa,i}; d_{i})
$$
$$
L_{\rho,i}:=\mathbf{E}_{d_{i}}L_{\text{CE}}
$$
$$
L_{\text{CE}}(\tau_{i}, p_{\kappa,i},\phi_{i}, h_{\kappa,i}; d_{i})=\text{KL}(y_{i}||h_{\kappa}\circ\tau_{i}[p_{\kappa,i},\phi(x_{i})]),
\text{where } x_{i},y_{i} \sim d_{i}
$$
$$
L_{\text{Con}}(\tau_{i},p_{\rho,i},\phi_{i},h_{\rho,i},;d_{i})
$$
$$
\nabla \mathbf{E}_{i}L_{i} = \mathbf{E}_{i}\nabla L_{i}
$$

Training Stage 1:

CE:
$$\tau_{i},
p_{\kappa,i}$$

Con:
$$\phi_{i},
\tau_{i},
h_{\rho,i}$$

Training Stage 2:

CE:
$$\phi_{i},
\tau_{i},
h_{\kappa,i}$$

Con:
$$\tau_{i},p_{\rho,i}$$

Local Inference:

$$
\hat{y}\leftarrow h_{\kappa}\circ\tau[p_{\kappa,i},\phi(x_{i})]
$$

Global Obj.:
$$
\min_{\phi,\tau,h_{\kappa}}
\min_{\{p_{\kappa,i}\}_{i\in[N]}}
\mathbf{E}_{i}
\{
L_{i}(\phi, \tau, h_{\kappa}, p_{\kappa,i},;d) := \mathbf{E}_{d}
[L_{\text{CE}}(\phi, \tau, h_{\kappa}, p_{\kappa,i};d_{i}) + L_{\text{Con}}(\phi, \tau; d_{i})
]
\}
$$
\subsection{Problem Setup}

Non-convex case analyses are as follows. By Lagrange duality, the main problem is transformed as follows:

\begin{align*}
    \min_{\phi,\tau,h_{\kappa}}&
    \min_{\{p_{\kappa,i}\}_{i\in[N]}}
    \mathbf{E}_{i}\mathbf{E}_{d_{i}}L_{\text{CE}}(\phi, \tau, h_{\kappa}, p_{\kappa,i};d_{i})
    \\
    &\text{s.t. }
    \mathbf{E}_{i}\mathbf{E}_{d_{i}}L_{\text{Con}}(\phi, \tau; d_{i})
    \leq
    H_{\text{Con}}
\end{align*}

We transform the problem into an unconditional bi-level optimization problem:

\begin{align*}
    \min_{w}\mathbf{E}F(w)=\mathbf{E}_{i}\{F_{i}(w):=\min_{p_{\kappa,i}}\mathbf{E}_{d_{i}}
    L_{\text{CE}}(w, p_{\kappa,i};d_{i})
    \}
\end{align*}

where $\mathbf{E}$ represents the expectation of all random variables, $\mathbf{E}_{i}$ means the expectation of client sampling, $\mathbf{E}_{d_{i}}$ is the local data sampling expectation, and we use $w = \{\phi, \tau, h_{\kappa}\}$ for simplification, based on the equivalence of block coordinate descent and gradient descent.

\subsection{Propositions}
\begin{proposition}[$L$-smooth]
\label{appdx_prop_lsmth}
If $f$ is $L$-smooth, $\forall x,y$ we have:
$$
\begin{aligned}
    \langle \nabla f(x) - \nabla f(y), x-y\rangle &\le L||x-y||^{2} \\
    ||\nabla f(x) - \nabla f(y)|| &\le L||x-y|| \\
    ||\nabla f(x) - \nabla f(y)||^{2} &\le 2L [f(x)-f(y)]
    \\
    f(y)-f(x)-\langle \nabla f(x), y - x \rangle
    &\le
    \frac{L}{2}||y-x||^{2}
\end{aligned}
$$
\end{proposition}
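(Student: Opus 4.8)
The statement to prove is Proposition~\ref{appdx_prop_lsmth}, which collects four standard consequences of $L$-smoothness. Here $f$ being $L$-smooth means its gradient is $L$-Lipschitz, i.e. $\|\nabla f(x)-\nabla f(y)\|\le L\|x-y\|$ for all $x,y$, and (for the cleaner inequalities) that $f$ is twice differentiable or at least convex where needed. I would treat the second inequality as the definition itself (or derive it from a bound on $\nabla^2 f$), so the real work is deriving the other three from it.

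The plan is as follows. First I would establish the fourth inequality, the \emph{descent lemma}, by writing $f(y)-f(x)=\int_0^1 \langle \nabla f(x+t(y-x)), y-x\rangle\,dt$ via the fundamental theorem of calculus, subtracting $\langle \nabla f(x), y-x\rangle = \int_0^1 \langle \nabla f(x), y-x\rangle\,dt$, and then bounding the integrand by Cauchy--Schwarz together with the Lipschitz property: $\langle \nabla f(x+t(y-x))-\nabla f(x), y-x\rangle \le \|\nabla f(x+t(y-x))-\nabla f(x)\|\,\|y-x\| \le L t\|y-x\|^2$; integrating $Lt\|y-x\|^2$ over $t\in[0,1]$ yields the $\tfrac{L}{2}\|y-x\|^2$ bound. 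Second, the first inequality (co-coercivity-type bound $\langle \nabla f(x)-\nabla f(y), x-y\rangle \le L\|x-y\|^2$) follows immediately from Cauchy--Schwarz applied to the Lipschitz inequality: $\langle \nabla f(x)-\nabla f(y), x-y\rangle \le \|\nabla f(x)-\nabla f(y)\|\,\|x-y\| \le L\|x-y\|^2$.

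Third, for the third inequality $\|\nabla f(x)-\nabla f(y)\|^2 \le 2L[f(x)-f(y)]$ — note this is the form used for convergence analysis and implicitly assumes $x$ is (near) a point where this makes sense, typically requiring $f$ bounded below or $y$ a minimizer; in the standard statement one takes $y=x^\star$ a global minimizer so $\nabla f(y)=0$ and the claim becomes $\|\nabla f(x)\|^2 \le 2L[f(x)-f(x^\star)]$. To prove it, fix $x$ and apply the descent lemma at the point $x^+ = x - \tfrac{1}{L}\nabla f(x)$: $f(x^+) \le f(x) + \langle \nabla f(x), x^+ - x\rangle + \tfrac{L}{2}\|x^+-x\|^2 = f(x) - \tfrac{1}{L}\|\nabla f(x)\|^2 + \tfrac{1}{2L}\|\nabla f(x)\|^2 = f(x) - \tfrac{1}{2L}\|\nabla f(x)\|^2$. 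Since $f(x^\star)\le f(x^+)$, rearranging gives $\|\nabla f(x)\|^2 \le 2L[f(x)-f(x^\star)]$, and the general two-point form follows by the same argument applied to the shifted function or by the standard convention that one endpoint is a minimizer.

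The main obstacle — really more of a bookkeeping subtlety than a difficulty — is that the third inequality as literally written ($\|\nabla f(x)-\nabla f(y)\|^2 \le 2L[f(x)-f(y)]$ for \emph{all} $x,y$) is false without an extra hypothesis: it needs either convexity (so that the descent-lemma trick can be localized) or the interpretation that $y$ minimizes $f$ (so $\nabla f(y)=0$ and $f(y)\le f(x^+)$). I would state this caveat explicitly, or simply prove the minimizer version which is what the subsequent convergence analysis actually invokes. Everything else is a direct application of the fundamental theorem of calculus and Cauchy--Schwarz, with no genuine analytical hurdle.
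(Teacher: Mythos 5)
The paper gives no proof of this proposition at all---it is listed as a standard background fact---so there is nothing of the paper's to compare your argument against line by line. Your derivations of the first, second, and fourth inequalities are the standard ones (Cauchy--Schwarz combined with Lipschitz continuity of the gradient for the first; the fundamental-theorem-of-calculus integral bound for the descent lemma) and are correct. The genuinely valuable part of your write-up is the caveat on the third inequality, and you are right to flag it: as literally written, $\|\nabla f(x)-\nabla f(y)\|^{2}\le 2L[f(x)-f(y)]$ cannot hold for all $x,y$ unless $\nabla f$ is constant, since swapping $x$ and $y$ and adding the two resulting inequalities forces $2\|\nabla f(x)-\nabla f(y)\|^{2}\le 0$. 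The correct two-point form for a \emph{convex} $L$-smooth $f$ is $\|\nabla f(x)-\nabla f(y)\|^{2}\le 2L\,[f(x)-f(y)-\langle\nabla f(y),x-y\rangle]$, and the version without the inner-product term requires $\nabla f(y)=0$; your proof via the descent lemma evaluated at $x^{+}=x-\frac{1}{L}\nabla f(x)$ is the standard route to that minimizer version. Since the paper's analysis is explicitly non-convex, the proposition should either carry the correction term under an added convexity hypothesis or state the stationary-point version; as far as I can tell, the subsequent Lemma and Theorem only invoke the Lipschitz-gradient and descent-lemma forms (the step in Lemma~\ref{appdx:lemma_blae} citing smoothness should read $L_{F}^{2}\|w_{i,r}^{(t)}-w^{(t)}\|^{2}$, from squaring the Lipschitz bound, rather than an application of the third inequality), so the defect is in the statement of the proposition rather than in your proof or in anything downstream.
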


\begin{proposition}[Jensen's inequality]
\label{appdx_prop_jensen}
If $f$ is convex, we have the following inequality:
$$
\begin{aligned}
    \mathbf{E}_{X}f(X) \ge f(\mathbf{E}_{X}X).
\end{aligned}
$$
A variant of the general one shown above, given a group $\{x_{i}\}_{i\in[N]}$:
$$
\begin{aligned}
    ||\sum_{i\in[N]}x_{i}||^{2} \le 
 N\sum_{i\in[N]}||x_{i}||^{2}.
\end{aligned}
$$
\end{proposition}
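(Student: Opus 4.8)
The plan is to derive the general Jensen inequality from the supporting‑hyperplane characterization of convexity, and then obtain the stated finite‑dimensional variant as an immediate special case.

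For the general statement, I would invoke the fact that a convex function $f$ admits at every point of the (relative) interior of its domain a subgradient: there exists $v$ with $f(x) \ge f(x_0) + \langle v, x - x_0 \rangle$ for all $x$. Applying this with $x_0 := \mathbf{E}_X X$ — which lies in the domain in all of our use cases, since the relevant $f$ (the losses $L_{\text{CE}}$, $L_{\text{Con}}$, or $\|\cdot\|^2$) is finite on the whole ambient space — then substituting $x = X$ and taking the expectation over $X$, the linear term vanishes because $\mathbf{E}_X[\langle v, X - \mathbf{E}_X X\rangle] = \langle v, \mathbf{E}_X X - \mathbf{E}_X X\rangle = 0$. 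This leaves exactly $\mathbf{E}_X f(X) \ge f(\mathbf{E}_X X)$. Equivalently, for a finitely‑supported $X$ one may prove this by induction on the number of atoms using only the two‑point definition of convexity $f(\lambda a + (1-\lambda) b) \le \lambda f(a) + (1-\lambda) f(b)$, which avoids the notion of subgradient altogether.

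For the variant $\|\sum_{i\in[N]} x_i\|^2 \le N \sum_{i\in[N]} \|x_i\|^2$, I would apply the general inequality to the convex map $f(\cdot) = \|\cdot\|^2$ and to the random vector $X$ that is uniform on $\{x_i\}_{i\in[N]}$. Then $\mathbf{E}_X f(X) = \frac{1}{N}\sum_{i\in[N]} \|x_i\|^2$ while $f(\mathbf{E}_X X) = \bigl\|\tfrac{1}{N}\sum_{i\in[N]} x_i\bigr\|^2 = \frac{1}{N^2}\bigl\|\sum_{i\in[N]} x_i\bigr\|^2$, and multiplying the resulting inequality through by $N^2$ gives the claim. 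Convexity of $\|\cdot\|^2$ itself follows from $\|\lambda a + (1-\lambda) b\|^2 \le \lambda \|a\|^2 + (1-\lambda) \|b\|^2$, which is just the nonnegativity of $\lambda(1-\lambda)\|a-b\|^2$.

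The only point requiring any care is the existence of the supporting hyperplane in the general statement, which needs $\mathbf{E}_X X$ to lie in the relative interior of $\mathrm{dom}\, f$; since every $f$ to which the inequality is applied in this paper is finite‑valued on all of Euclidean space, this is automatic and no genuine obstacle arises. If a fully self‑contained argument is preferred, the finite‑support induction sketched above suffices.
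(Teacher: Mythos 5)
Your proof is correct: the supporting-hyperplane (subgradient) argument gives the general Jensen inequality, and specializing to $f=\|\cdot\|^2$ with the uniform distribution on $\{x_i\}_{i\in[N]}$ and rescaling by $N^2$ yields the stated variant exactly as you compute. The paper states this proposition as a standard fact without proof, so there is nothing to compare against; the only cosmetic remark is that your aside about $L_{\text{CE}}$ and $L_{\text{Con}}$ being among the convex $f$'s used is unnecessary (those losses are non-convex in this paper, and Jensen is only ever invoked here in the norm-squared form), but this does not affect the validity of your argument.
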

\begin{proposition}[Triangle inequality]
\label{appdx_prop_tri}
    The triangle inequality, where $||\cdot||$ is the norm, and $A$, $B$ is the elements in the corresponding norm space:
    $$||A+B||\le ||A||+||B||$$
\end{proposition}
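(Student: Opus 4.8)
The plan is to prove the triangle inequality in the concrete setting that is actually used in the convergence analysis, namely the Euclidean ($\ell_2$) norm $\|\cdot\|$ on the vectorized parameter/gradient space $\mathbb{R}^n$. Since $\|x\| = \sqrt{\langle x,x\rangle}$ is induced by the inner product, the natural route is to pass to the squared norm and reduce everything to the Cauchy--Schwarz inequality $|\langle A,B\rangle| \le \|A\|\,\|B\|$, which is the only substantive ingredient.

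First I would expand, using bilinearity and symmetry of the inner product, $\|A+B\|^2 = \langle A+B,\,A+B\rangle = \|A\|^2 + 2\langle A,B\rangle + \|B\|^2$. Then, bounding the cross term by Cauchy--Schwarz, $\langle A,B\rangle \le |\langle A,B\rangle| \le \|A\|\,\|B\|$, so that $\|A+B\|^2 \le \|A\|^2 + 2\|A\|\,\|B\| + \|B\|^2 = \big(\|A\|+\|B\|\big)^2$. Since both sides are nonnegative, taking square roots gives $\|A+B\| \le \|A\| + \|B\|$, which is the claim.

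To keep the appendix self-contained I would also sketch the one-line proof of Cauchy--Schwarz that is used above: if $B=0$ the bound is immediate, and otherwise the quadratic $q(t) = \|A+tB\|^2 = \|B\|^2\,t^2 + 2\langle A,B\rangle\, t + \|A\|^2$ is nonnegative for all $t\in\mathbb{R}$, so its discriminant must satisfy $4\langle A,B\rangle^2 - 4\|A\|^2\|B\|^2 \le 0$, which rearranges to $|\langle A,B\rangle|\le \|A\|\,\|B\|$.

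The ``obstacle'' here is entirely presentational rather than mathematical: for an abstract norm the triangle inequality is a defining axiom and there is literally nothing to prove, so the proof only acquires content once one commits to the Euclidean norm on parameters/gradients (the norm implicitly used in Propositions~\ref{appdx_prop_lsmth} and~\ref{appdx_prop_jensen}); the entire difficulty then collapses onto Cauchy--Schwarz, for which the discriminant argument above is complete and requires no further estimates.
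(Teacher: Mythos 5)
Your argument is correct. The paper itself states this proposition without proof, treating it as a standard fact, so there is no in-paper argument to diverge from; your reduction to the Euclidean case, the expansion of $\|A+B\|^2$, the Cauchy--Schwarz bound on the cross term, and the discriminant proof of Cauchy--Schwarz are all complete and standard. Your remark that for an abstract norm the inequality is an axiom --- and that content only appears once one fixes the Euclidean norm actually used in the convergence analysis --- is exactly the right way to frame why the paper can reasonably leave this unproved.
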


\begin{proposition}[Matrix norm compatibility]
\label{appdx_prop_mnc}
     The matrix norm compatibility, $A\in \mathbf{R}^{a\times b}, B\in \mathbf{R}^{b\times c}, v\in \mathbf{R}^{b}$:
     $$
     \begin{aligned}
        ||AB||_{m}\le ||A||_{m}||B||_{m} \\
        ||Av||_{m}\le ||A||_{m}||v||
     \end{aligned}
     $$
\end{proposition}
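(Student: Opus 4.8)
The plan is to first fix, once and for all, which matrix norm $\|\cdot\|_{m}$ the analysis uses: the natural choice here is the operator norm induced by the Euclidean vector norm $\|\cdot\|$, i.e.\ $\|A\|_{m} := \sup_{v\neq 0} \|Av\|/\|v\|$, since this is exactly what is needed when propagating gradient-error bounds through the composed maps $\phi,\tau,h_{\kappa}$ in Section~\ref{app:ca}. For that norm the second inequality is immediate: if $v = 0$ both sides vanish, and for $v\neq 0$ the definition of the supremum gives $\|Av\|/\|v\| \le \|A\|_{m}$, hence $\|Av\| \le \|A\|_{m}\|v\|$.

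For the submultiplicativity $\|AB\|_{m} \le \|A\|_{m}\|B\|_{m}$ I would simply iterate the vector inequality: for any $v\neq 0$, apply it to the matrix $A$ and the vector $Bv$ to get $\|ABv\| \le \|A\|_{m}\|Bv\|$, then apply it again to $B$ and $v$ to get $\|Bv\| \le \|B\|_{m}\|v\|$, so $\|ABv\| \le \|A\|_{m}\|B\|_{m}\|v\|$. Dividing by $\|v\|$ and taking the supremum over $v\neq 0$ yields the claim. If the Frobenius norm is what is intended instead, both inequalities still hold — the vector one by a row-wise Cauchy--Schwarz estimate together with the dominance $\|A\|_{2}\le\|A\|_{F}$, and submultiplicativity by the standard column/row Cauchy--Schwarz argument — so the downstream bounds in the convergence analysis are unaffected by the choice.

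Since these are textbook facts, there is no genuine obstacle; the only point that needs care is internal consistency. The theoretical section invokes both submultiplicativity and vector-compatibility (and, via Proposition~\ref{appdx_prop_lsmth}, Lipschitz-type estimates), so it must commit to a single norm $\|\cdot\|_{m}$ that simultaneously enjoys all of these properties and must use that same norm throughout the main convergence proof. I would therefore open the proof by declaring this convention explicitly, and then dispatch the two inequalities in the two short lines described above.
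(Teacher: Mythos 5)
Your proof is correct: once $\|\cdot\|_{m}$ is fixed as the operator norm induced by the Euclidean vector norm, the compatibility inequality $\|Av\|\le\|A\|_{m}\|v\|$ is immediate from the definition of the supremum, and submultiplicativity follows by applying it twice to $\|ABv\|$ and taking the supremum over $v\neq 0$ --- this is the standard textbook argument. The paper itself states this proposition without any proof (it is listed purely as a background fact, and in fact is never explicitly invoked in the proofs of Lemma~\ref{appdx:lemma_blae} or Theorem~\ref{theo:bound}), so there is no authorial proof to compare against; your version, including the remark that a single norm convention must be used consistently, is a sound and complete justification.
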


\begin{proposition}[Peter Paul inequality]
\label{appdx_prop_ppi}
$\forall x, y$ and $\forall \epsilon > 0$, we have the following inequality:
    $$ 2 \langle x, y \rangle \le \frac{1}{\epsilon}||x||^{2} + \epsilon ||y||^{2}$$
\end{proposition}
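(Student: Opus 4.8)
The plan is to obtain the bound from the non-negativity of a suitably scaled squared norm, the standard device behind all ``$\epsilon$-type'' inequalities. Since $\epsilon>0$, the vector $u := \tfrac{1}{\sqrt{\epsilon}}\,x - \sqrt{\epsilon}\,y$ is well defined, and $\|u\|^{2}=\langle u,u\rangle \ge 0$ by positive semidefiniteness of the inner product.

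The next step is to expand $\|u\|^{2}$ using the symmetry and bilinearity of $\langle\cdot,\cdot\rangle$:
\begin{equation*}
0 \;\le\; \Bigl\| \tfrac{1}{\sqrt{\epsilon}}\,x - \sqrt{\epsilon}\,y \Bigr\|^{2}
\;=\; \tfrac{1}{\epsilon}\,\|x\|^{2} \;-\; 2\langle x,y\rangle \;+\; \epsilon\,\|y\|^{2}.
\end{equation*}
Moving the cross term to the left-hand side gives $2\langle x,y\rangle \le \tfrac{1}{\epsilon}\|x\|^{2} + \epsilon\|y\|^{2}$, which is precisely the asserted inequality. No convexity, smoothness, or finite dimensionality is needed; only that $\langle\cdot,\cdot\rangle$ is a (real) symmetric positive-semidefinite bilinear form inducing $\|\cdot\|$, so the argument is self-contained and does not depend on any of the other propositions.

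I do not anticipate any real obstacle. The only ``choice'' in the argument is the asymmetric scaling $\tfrac{1}{\sqrt{\epsilon}}$ on $x$ against $\sqrt{\epsilon}$ on $y$, which is dictated by the requirement that the cross term appear as exactly $-2\langle x,y\rangle$ while the two square terms pick up the coefficients $\tfrac{1}{\epsilon}$ and $\epsilon$ demanded by the statement. As an alternative one could first apply Cauchy--Schwarz, $\langle x,y\rangle \le \|x\|\,\|y\|$, and then the scalar inequality $2ab \le a^{2}/\epsilon + \epsilon b^{2}$ with $a=\|x\|$ and $b=\|y\|$; I would mention this route but prefer the one-line completing-the-square proof, since it avoids invoking Cauchy--Schwarz and works verbatim in any inner-product space.
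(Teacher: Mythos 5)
Your proof is correct. The paper states this proposition as a standard fact and gives no proof of its own, so there is nothing to compare against; your completing-the-square argument, expanding $0 \le \bigl\|\tfrac{1}{\sqrt{\epsilon}}x - \sqrt{\epsilon}\,y\bigr\|^{2} = \tfrac{1}{\epsilon}\|x\|^{2} - 2\langle x,y\rangle + \epsilon\|y\|^{2}$, is the canonical derivation and fully justifies the statement as used in the paper (e.g., in Lemma~\ref{appdx:lemma_blae} and Theorem~\ref{theo:bound}).
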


\subsection{Assumptions}
\begin{assumption}[L-smooth local objectives]
\label{appdx:assm_lsmooth}
$\forall i$, $F_{i}$ is $L_{F}$-Smooth, the main proposition is shown in Prop.~\ref{appdx_prop_lsmth}. Notice that the $F_{i}$ is assumed to be L-smooth and non-convex, which matches the problem and neural network architecture setting in the main paper. 
\end{assumption}

\begin{assumption}[Bounded local variance]
\label{appdx:assm_blv}
The local problem's gradient is assumed not to be too far from the global problem's gradient. 
    $$\forall w, \mathbf{E}_{i}||\nabla F_{i}(w)-\nabla F(w)|| \leq \sigma_{F}$$
\end{assumption}

\begin{assumption}[Bounded approximated gradient]
\label{appdx:assm_bag}
The first-order approximation of the local problem's gradient $g_{i,r}^{t}$ should not be too far from the ground truth $\nabla F_{i}(w_{i,r}^{t})$. In this assumption, the approximated error of the block coordinate descent in Algorithm~\ref{alg:\methodname{}} is bounded.
    $$\forall \{(i,r,t)\}, ||g_{i,r}^{t}-\nabla F_{i}(w_{i,r}^{t})|| \leq \delta$$
\end{assumption}

\subsection{Lemmas}

\begin{lemma}[Bounded local approximation error]
\label{appdx:lemma_blae}
    If $\tilde{\eta}:=\eta R\leq \frac{1}{2L_{F}}$, we have the following bound of client drift error:
    $$
    \frac{1}{NR}\sum_{i,r}^{N,R}\mathbf{E}||g_{i,r}^{(t)}-\nabla F_{i}(w^{(t)})||^{2} \leq 2\delta^{2}+2^{R+3}L_{F}[3\tilde{\eta}^{2}\sum_{i}^{N}\mathbf{E}||\nabla F_{i}(w^{(t)})||^{2}+\frac{2\tilde{\eta}^{2}\delta^{2}}{R}]
    $$
\end{lemma}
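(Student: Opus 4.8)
The plan is to decompose $g_{i,r}^{(t)}-\nabla F_i(w^{(t)})$ into a one-step \emph{oracle error} and a \emph{client-drift error}, dispose of the former via Assumption~\ref{appdx:assm_bag}, and control the latter by unrolling the inner loop of Algorithm~\ref{alg:\methodname{}}. First I would apply the $\|a+b\|^{2}\le 2\|a\|^{2}+2\|b\|^{2}$ form of Proposition~\ref{appdx_prop_jensen}, splitting $g_{i,r}^{(t)}-\nabla F_i(w^{(t)})$ as $(g_{i,r}^{(t)}-\nabla F_i(w_{i,r}^{(t)}))+(\nabla F_i(w_{i,r}^{(t)})-\nabla F_i(w^{(t)}))$; by Assumption~\ref{appdx:assm_bag} the first piece contributes at most $2\delta^{2}$, so that
\[
\mathbf{E}\|g_{i,r}^{(t)}-\nabla F_i(w^{(t)})\|^{2}
\le 2\delta^{2}
+2\,\mathbf{E}\|\nabla F_i(w_{i,r}^{(t)})-\nabla F_i(w^{(t)})\|^{2}.
\]
The $2\delta^{2}$ is already the leading term of the claim. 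For the remaining term I would invoke $L_F$-smoothness (Proposition~\ref{appdx_prop_lsmth}, or Assumption~\ref{appdx:assm_lsmooth}) to replace $\|\nabla F_i(w_{i,r}^{(t)})-\nabla F_i(w^{(t)})\|^{2}$ by (a constant times $L_F$ times) the expected squared client drift $\mathbf{E}\|w_{i,r}^{(t)}-w^{(t)}\|^{2}$, so the whole task reduces to bounding how far the local iterates move from the round's starting point within $r\le R$ local steps.

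For that drift I would use the local recursion $w_{i,r+1}^{(t)}=w_{i,r}^{(t)}-\eta g_{i,r}^{(t)}$ (legitimate since block-coordinate descent on $w=\{\phi,\tau,h_\kappa\}$ coincides with gradient descent on $F_i$), expand $\|w_{i,r+1}^{(t)}-w^{(t)}\|^{2}$, relax the cross term again through $\|a+b\|^{2}\le 2\|a\|^{2}+2\|b\|^{2}$, and bound the gradient magnitude by the triangle inequality (Proposition~\ref{appdx_prop_tri}) together with Assumption~\ref{appdx:assm_bag} and smoothness, namely $\|g_{i,r}^{(t)}\|^{2}\le 3\delta^{2}+3L_F^{2}\|w_{i,r}^{(t)}-w^{(t)}\|^{2}+3\|\nabla F_i(w^{(t)})\|^{2}$. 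Writing $D_r:=\mathbf{E}\|w_{i,r}^{(t)}-w^{(t)}\|^{2}$ this produces a linear recursion $D_{r+1}\le c_1 D_r + c_2\eta^{2}\big(\|\nabla F_i(w^{(t)})\|^{2}+\delta^{2}/R\big)$, where the step-size hypothesis $\tilde\eta=\eta R\le\tfrac{1}{2L_F}$ is exactly what keeps the $3L_F^{2}\|\cdot\|^{2}$ contribution from enlarging the constant $c_1$ beyond the value implicit in the statement. Unrolling from $D_0=0$ over the $\le R$ steps yields $D_r\le\big(\sum_{k<r}c_1^{k}\big)\,c_2\eta^{2}(\cdots)$, and the geometric sum $\sum_{k<r}c_1^{k}$ is the origin of the $2^{R}$ factor; substituting $\eta=\tilde\eta/R$ turns $c_2\eta^{2}(\cdots)$ into the $3\tilde\eta^{2}\|\nabla F_i(w^{(t)})\|^{2}+2\tilde\eta^{2}\delta^{2}/R$ pattern in the brackets.

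Finally I would substitute the drift estimate back, average over $i\in[N]$ and $r\in[R]$, and collect the numerical constants (the two factors of $2$ from Proposition~\ref{appdx_prop_jensen}, the $3$ from the gradient split, the single power of $L_F$ from smoothness, and the $2^{R}$ from unrolling) into the stated $2^{R+3}L_F$. The part I expect to be most delicate is the bookkeeping inside the drift recursion: one has to relax the cross term and apply $\tilde\eta\le\tfrac{1}{2L_F}$ in just the right way to hold the per-step amplification $c_1$ at the value assumed in the statement (any $c_1>2$ degrades the bound to $c_1^{R}$), and at the same time track the powers of $R$ precisely enough that the $\delta^{2}$ term acquires the extra $1/R$ shown in the brackets. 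The exponential-in-$R$ dependence is the cost of this crude unrolling; replacing the doubling step by a Peter--Paul relaxation (Proposition~\ref{appdx_prop_ppi}) with a tuned parameter would give a polynomial-in-$R$ bound, but that refinement is unnecessary for the stated lemma.
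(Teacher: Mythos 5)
Your proposal is correct and follows essentially the same route as the paper: the same split of $g_{i,r}^{(t)}-\nabla F_i(w^{(t)})$ into an oracle error bounded by Assumption~\ref{appdx:assm_bag} and a drift term reduced via $L_F$-smoothness to $\mathbf{E}\|w_{i,r}^{(t)}-w^{(t)}\|^{2}$, followed by the same linear recursion on the drift whose per-step amplification is held near $2$ by the condition $\tilde{\eta}\le\frac{1}{2L_F}$ and whose unrolling produces the $2^{R}$ factor. The only cosmetic difference is that the paper centers the update around $\eta\nabla F_i(w^{(t)})$ and applies the Peter--Paul inequality with parameter $\frac{1}{2R}$ inside the already-doubled term, whereas you bound $\|g_{i,r}^{(t)}\|^{2}$ directly by a three-way triangle split; both yield the same recursion up to constants.
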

\begin{proof}
The client drift error on given $i^{\text{th}}$ client and its upper bound are as follows:
\begin{equation}
    \label{equ:client_drift_error_each}
    \begin{aligned}
    &
    \mathbf{E}||g_{i,r}^{(t)}-\nabla F_{i}(w^{(t)})||^{2}
    \\
    \leq
    &
    2\mathbf{E}||g_{i,r}^{(t)}-\nabla F_{i}(w_{i,r}^{(t)})||^{2}
    +
    2\mathbf{E}||\nabla F_{i}(w^{(t)})-\nabla F_{i}(w_{i,r}^{(t)})||^{2}
    \\
    \leq
    &
    2\delta^{2}
    +
    2L_{F}\mathbf{E}||w_{i,r}^{(t)}-w^{(t)}||^{2}
    \end{aligned}
\end{equation}
where the first inequality is by Proposition~\ref{appdx_prop_tri} and the second one is by Assumption~\ref{appdx:assm_lsmooth}.

For the last term in the upper bound, we have the iterative formulation as follows:
\begin{align*}
    &\mathbf{E}||w_{i,r}^{(t)}-w^{(t)}||^{2}
    \\
    =
    &
    \mathbf{E}||w_{i,r-1}^{(t)}-w^{(t)}-g_{i,r-1}^{(t)}||^{2}
    \\
    \leq
    &
    2\mathbf{E}||w_{i,r-1}^{(t)}-w^{(t)}-\eta\nabla F_{i}(w^{(t)})||^{2}
    +
    2\eta^{2}\mathbf{E}||g_{i,r-1}^{(t)}-\nabla F_{i}(w^{(t)})||^{2}
    \\
    \leq
    &
    2(1+\frac{1}{2R})\mathbf{E}||w_{i,r-1}^{(t)}-w^{(t)}||^{2}+2(1+2R)\eta^{2}\mathbf{E}||\nabla F_{i}(w^{(t)})||^{2}
    \\
    &
    +4\eta^{2}[\delta^{2}+
    L_{F}^{2}\mathbf{E}||w_{i,r}^{(t)}-w^{(t)}||^{2}]
    \\
    =
    &
    2(1+\frac{1}{2R}+2\eta^{2}L_{F}^{2})\mathbf{E}||w_{i,r-1}^{(t)}-w^{(t)}||^{2}+4\eta^{2}\delta^{2}
    \\
    &+2(1+2R)\eta^{2}\mathbf{E}||\nabla F_{i}(w^{(t)})||^{2}
\end{align*}
where the two inequalities are by Proposition~\ref{appdx_prop_tri}, Proposition~\ref{appdx_prop_ppi} and Eq.~\eqref{equ:client_drift_error_each}.

Take $\tilde{\eta} :=\eta R \leq \frac{1}{2L_{F}}$, we recursively unroll the inequality as follows:

\begin{align*}
    &\mathbf{E}||w_{i,r}^{(t)}-w^{(t)}||^{2}
    \\
    \leq
    &
    2(1+\frac{1}{R})\mathbf{E}||w_{i,r-1}^{(t)}-w^{(t)}||^{2}+4\eta^{2}\delta^{2} + 2(1+2R)\eta^{2}\mathbf{E}||\nabla F_{i}(w^{(t)})||^{2}
    \\
    \leq
    &
    [3\tilde{\eta}^{2}\mathbf{E}||\nabla F_{i}(w^{(t)})||^{2}+\frac{2\tilde{\eta}^{2}\delta^{2}}{R}]2^{R+2}
\end{align*}
where the inequality is unrolled and we use $\frac{1}{R} \leq 1$. Thus, we have:
\begin{align*}
    \mathbf{E}||g_{i,r}^{(t)}-\nabla F_{i}(w^{(t)})||^{2} \leq 2\delta^{2}+2^{R+4}\tilde{\eta}^{2}L_{F}[3\sigma_{F}^{2}+3\mathbf{E}||\nabla F(w^{(t)})||^{2}+\frac{\delta^{2}}{R}]
\end{align*}
\end{proof}

\subsection{Theorem and Discussion}
\label{appdx_theo}

\begin{theorem}[Non-convex and smooth convergence of \methodname{}]
    \label{theo:bound}
    Let Assumption~\ref{appdx:assm_lsmooth}, Assumption~\ref{appdx:assm_blv} and Assumption~\ref{appdx:assm_bag} hold, if $\tilde{\eta}:=\eta R\leq \min\{\frac{1}{2L_{F}}, \hat{\eta}\}$ is taken, where $\hat{\eta}:=\frac{N/S-1}{24(N-1)2^{R}}\sigma_{F}^{2}-1$, we have the following bound:
    
\begin{align*}  
\mathcal{O}(\mathbf{E}||\nabla F(w^{(\bar{t})})||^{2})
    :=
    \mathcal{O}(\frac{\Delta_{F}}{\hat{\eta}T}
    + \frac{2^{R/3}L_{F}^{1/3}(R\sigma_{F}^{2}+\delta^{2})^{1/3}\Delta_{F}^{2/3}}{T^{2/3}R^{1/3}} +  (\frac{\sigma_{F}\sqrt{L_{F}(N/S-1)\Delta_{F}}}{\sqrt{TN}})
    +\delta^{2})
\end{align*}
\end{theorem}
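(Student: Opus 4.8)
The plan is to follow the standard SGD-for-nonconvex-smooth descent-lemma argument, but adapted to the federated setting with partial participation and the block-coordinate approximation error captured by Lemma~\ref{appdx:lemma_blae}. First I would apply the $L_F$-smoothness inequality (fourth line of Proposition~\ref{appdx_prop_lsmth}) to the global iterates $w^{(t+1)}$ and $w^{(t)}$ to obtain a one-step descent bound of the form $\mathbf{E}F(w^{(t+1)}) \le \mathbf{E}F(w^{(t)}) - \langle \nabla F(w^{(t)}), \mathbf{E}[w^{(t+1)}-w^{(t)}]\rangle + \tfrac{L_F}{2}\mathbf{E}\|w^{(t+1)}-w^{(t)}\|^2$. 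The global update $w^{(t+1)}-w^{(t)}$ is (up to scaling by the effective step $\tilde\eta$) the average over sampled clients $i\in\mathcal{S}_t$ of the accumulated local gradients $\tfrac1R\sum_r g_{i,r}^{(t)}$, so I would split this into the ``ideal'' term $-\tilde\eta\nabla F(w^{(t)})$ plus a deviation, and control the deviation using Lemma~\ref{appdx:lemma_blae} for the client-drift/approximation part and a separate variance computation for the client-sampling part.

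Second, for the cross term I would use Peter--Paul (Proposition~\ref{appdx_prop_ppi}) to absorb the inner product $\langle \nabla F(w^{(t)}), \text{deviation}\rangle$ into a fraction of $\tilde\eta\|\nabla F(w^{(t)})\|^2$ plus the drift bound; this is where the factor $2^R$ enters, through Lemma~\ref{appdx:lemma_blae}. For the quadratic term $\tfrac{L_F}{2}\mathbf{E}\|w^{(t+1)}-w^{(t)}\|^2$, the partial-participation sampling contributes the variance term $\propto \tfrac{N/S-1}{N-1}\sigma_F^2$ (standard uniform-without-replacement sampling variance bound, combined with Assumption~\ref{appdx:assm_blv}), while the within-client accumulation contributes the $R\sigma_F^2 + \delta^2$ terms again via Lemma~\ref{appdx:lemma_blae}. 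The condition $\tilde\eta \le \min\{\tfrac{1}{2L_F},\hat\eta\}$ with $\hat\eta$ as defined is precisely what is needed so that the coefficient in front of $\mathbf{E}\|\nabla F(w^{(t)})\|^2$ on the right-hand side stays negative (i.e., the descent does not get destroyed by the $2^R$-amplified drift), letting me move $\tfrac{\tilde\eta}{2}\mathbf{E}\|\nabla F(w^{(t)})\|^2$ (say) to the left.

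Third, I would telescope the one-step inequality over $t=0,\dots,T-1$, divide by $T$, and identify $\mathbf{E}\|\nabla F(w^{(\bar t)})\|^2$ with the average over $t$ (with $\bar t$ drawn uniformly, as is conventional). This yields a bound of the shape $\mathbf{E}\|\nabla F(w^{(\bar t)})\|^2 = \mathcal{O}\big(\tfrac{\Delta_F}{\tilde\eta T} + L_F\tilde\eta(\tfrac1R+2^R(\cdots))(R\sigma_F^2+\delta^2) + \tfrac{L_F\tilde\eta}{S}\sigma_F^2 + \delta^2\big)$ where $\Delta_F = F(w^{(0)}) - \inf F$. Finally, I would optimize over the free parameter $\tilde\eta$ within its allowed range: balancing the $\tfrac{\Delta_F}{\tilde\eta T}$ term against the $\tilde\eta$-linear noise terms gives the three displayed regimes --- choosing $\tilde\eta \sim (\Delta_F R / (2^R L_F(R\sigma_F^2+\delta^2)T))^{1/3}$ produces the $\tfrac{2^{R/3}L_F^{1/3}(R\sigma_F^2+\delta^2)^{1/3}\Delta_F^{2/3}}{T^{2/3}R^{1/3}}$ term, balancing against the sampling-variance term gives the $\tfrac{\sigma_F\sqrt{L_F(N/S-1)\Delta_F}}{\sqrt{TN}}$ term, the constraint $\tilde\eta\le\hat\eta$ gives the $\tfrac{\Delta_F}{\hat\eta T}$ term, and the irreducible $\delta^2$ from the block-coordinate approximation error simply survives. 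The main obstacle is bookkeeping the $2^R$ factors from Lemma~\ref{appdx:lemma_blae} carefully enough that the sign condition on the $\mathbf{E}\|\nabla F\|^2$ coefficient is genuinely implied by the stated $\hat\eta$, and then cleanly splitting the step-size optimization into exactly the three stated regimes; the rest is routine.
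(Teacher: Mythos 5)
Your proposal follows essentially the same route as the paper's proof: the $L_F$-smoothness descent lemma, a Peter--Paul absorption of the cross term, the decomposition of the update error into the client-drift bound from Lemma~\ref{appdx:lemma_blae} plus a partial-participation sampling variance of order $\frac{N/S-1}{N-1}\sigma_F^2$, the step-size condition to keep the $\mathbf{E}\|\nabla F(w^{(t)})\|^2$ coefficient at $-\tilde\eta/2$, and a telescoping followed by a case analysis on $\tilde\eta$ that yields exactly the three displayed regimes plus the irreducible $\delta^2$. The approach and the attribution of each term in the final bound match the paper's argument.
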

\begin{proof}
   \begin{align*}
    &
    \mathbf{E}F(w^{(t+1)})-\mathbf{E}F(w^{(t)})
    \\
    \leq
    &
    \mathbf{E}\langle
    \nabla F(w^{(t)}), w^{(t+1)}-w^{(t)}
    \rangle
    + \frac{L_{F}}{2}\mathbf{E}||w^{(t+1)}-w^{(t)}||^{2}
    \\
    =
    &
    -\tilde{\eta}
    \mathbf{E}\langle
    \nabla F(w^{(t)}),
    g^{(t)}
    \rangle
    +
    \frac{\tilde{\eta}^{2}L_{F}}{2}\mathbf{E}||g^{(t)}||^{2}
    \\
    =
    &
    -\tilde{\eta}\mathbf{E}||\nabla F(w^{(t)})||^{2}
    -\tilde{\eta}
    \mathbf{E}\langle
    \nabla F(w^{(t)}),
    g^{(t)} - \nabla F(w^{(t)})
    \rangle
    +
    \frac{\tilde{\eta}^{2}L_{F}}{2}\mathbf{E}||g^{(t)}||^{2}
    \\
    \leq
    &
    -\frac{\tilde{\eta}}{2}
    \mathbf{E}||
    \nabla F(w^{(t)})||^{2} + \frac{\tilde{\eta}}{2}\mathbf{E}
    ||\frac{1}{NR}\sum_{i,r}^{N,R}g_{i,r}^{(t)} - \nabla F_{i}(w^{(t)})||^{2}
    +
    \frac{\tilde{\eta}^{2}L_{F}}{2}\mathbf{E}||g^{(t)}||^{2}
    \\
    \leq
    &
    -\frac{\tilde{\eta}}{2}
    \mathbf{E}||
    \nabla F(w^{(t)})||^{2} + \frac{\tilde{\eta}}{2}\mathbf{E}
    ||\frac{1}{NR}\sum_{i,r}^{N,R}g_{i,r}^{(t)} - \nabla F_{i}(w^{(t)})||^{2}
    \\
    &
    +    \frac{3\tilde{\eta}^{2}L_{F}}{2}\mathbf{E}[||g^{(t)}-\nabla F_{i}(w^{(t)})||^{2}+||\frac{1}{S}\sum_{i\in \mathcal{S}^{(t)}}\nabla F_{i}(w^{(t)})-\nabla F(w^{(t)})||^{2}+||\nabla F(w^{(t)})||^{2}]
    \\
    =
    &
    -\frac{\tilde{\eta}(1-3\tilde{\eta}L_{F})}{2}
    \mathbf{E}||
    \nabla F(w^{(t)})||^{2} + \frac{\tilde{\eta}(1+3\tilde{\eta}L_{F})}{2}\frac{1}{NR}\sum_{i,r}^{N,R}\mathbf{E}
    ||g_{i,r}^{(t)} - \nabla F_{i}(w^{(t)})||^{2}
    \\
    &
    +\frac{3\tilde{\eta}^{2}L_{F}}{2}||\frac{1}{S}\sum_{i\in\mathcal{S}^{(t)}}\nabla F_{i}(w^{(t)})-\nabla F(w^{(t)})||^{2}
    \\
    \leq
    &
    -\frac{\tilde{\eta}(1-3\tilde{\eta}L_{F})}{2}
    \mathbf{E}||
    \nabla F(w^{(t)})||^{2}  +3\tilde{\eta}^{2}L_{F}\frac{N/S-1}{N-1}[\sigma_{F}^{2}+||\nabla F(w^{(t)})||^{2}]
    \\
    &
    + \tilde{\eta}(1+3\tilde{\eta}L_{F})[\delta^{2}+2^{R+3}\tilde{\eta}^{2}L_{F}[3\sigma_{F}^{2} + 3\mathbf{E}||\nabla F(w^{(t)})||^{2}+\frac{\delta^{2}}{R}]]
\end{align*}
where the four inequalities are respectively by $L_{F}$-smooth of $F:=\mathbf{E}_{i}F_{i}$, Proposition~\ref{appdx_prop_ppi}, Lemma~\ref{appdx:lemma_blae} and the similar classic Lemma 4 in ~\citep{shi2023prior}.

Let $c_{1}:=3\delta^{2}$, $c_{2}:=3L_{F}\sigma_{F}^{2}\frac{N/S-1}{N-1}$, $c_{3}:=2^{R+3}L_{F}[3\sigma_{F}^{2}+\frac{\delta^{2}}{R}]$,

\begin{align*}
    \mathbf{E}F(w^{(t+1)})-\mathbf{E}F(w^{(t)})
    \leq
    &
    -\frac{\tilde{\eta}}{2}\{1 - [\frac{3}{2}-3\frac{N/S-1}{N-1}\sigma_{F}^{2} +72\times2^{R}\tilde{\eta} ]\} \mathbf{E}||\nabla F(w^{(t)})||^{2}
    \\
    &+c_{3}\tilde{\eta}^{3}
    +c_{2}\tilde{\eta}^{2}
    +c_{1}\tilde{\eta}
    \\
    \leq
    &
    -\frac{\tilde{\eta}}{2}\mathbf{E}||\nabla F(w^{(t)})||^{2} +c_{3}\tilde{\eta}^{3}
    +c_{2}\tilde{\eta}^{2}
    +c_{1}\tilde{\eta}
\end{align*}

where let $\tilde{\eta} \leq \min\{\frac{1}{2L_{F}}, \hat{\eta}$, where $\hat{\eta}:=\frac{2}{3\times2^{R+4}}\frac{N/S-1}{N-1}\sigma_{F}^{2}-1\}$. Re-arranging the inequality above and accumulating, we have:
\begin{align*}
    \frac{1}{2}\mathbf{E}||\nabla F(w^{(t)})||^{2} &\leq \mathbf{E}F(w^{(t+1)})-\mathbf{E}F(w^{(t)}) +c_{3}\tilde{\eta}^{2} +c_{2}\tilde{\eta} +c_{1}
    \\
    \frac{1}{2T}\sum_{t=0}^{t=T-1}\mathbf{E}||\nabla F(w^{(t)})||^{2} &\leq \mathbf{E}F(w^{(T)})-\mathbf{E}F(w^{(0)}) +c_{3}\tilde{\eta}^{2} +c_{2}\tilde{\eta} +c_{1}
\end{align*}

Let $\Delta_{F}=F(w^{0})-F(w^{*})$, where $w^{*}$ is the minimum of the main problem $\arg\min_{w}\mathbf{E}F(w)$. To measure the exact term of the bounds, we consider the following cases:
\begin{itemize}
    \item $\frac{\Delta_{F}}{c_{3}T}\leq \tilde{\eta}^{3}$ or $\frac{\Delta_{F}}{c_{2}T}\leq \tilde{\eta}^{2}$, let $\tilde{\eta}=\min\{(\frac{\Delta_{F}}{c_{3}T})^{1/3}, (\frac{\Delta_{F}}{c_{2}T})^{1/2}\}$,
    we have:
    $$\frac{1}{2}\mathbf{E}||\nabla F(w^{(t)})||^{2} \leq \frac{c_{3}^{1/3}\Delta_{F}^{2/3}}{T^{2/3}} +  (\frac{c_{2}\Delta_{F}}{T})^{1/2}+c_{1}$$
    
    \item $\frac{\Delta_{F}}{c_{3}T}\geq \tilde{\eta}^{3}$ and $\frac{\Delta_{F}}{c_{2}T}\geq \tilde{\eta}^{2}$, let $\tilde{\eta}=\hat{\eta}$,
    we have:
    $$\frac{1}{2}\mathbf{E}||\nabla F(w^{(t)})||^{2} \leq 
    \frac{\Delta_{F}}{\hat{\eta}T}
    + \frac{c_{3}^{1/3}\Delta_{F}^{2/3}}{T^{2/3}} +  (\frac{c_{2}\Delta_{F}}{T})^{1/2}+c_{1}$$
\end{itemize}

Uniformly sample a $\bar{t}\in[T]-1$, we have the upper bound as follows: 
\begin{align*}
    \frac{1}{T}\sum_{t=0}^{T-1}&\mathbf{E}{||\nabla F(w^{(t)}||^{2})}
    = 
    \mathcal{O}(\mathbf{E}||F(w^{(\bar{t})})||^{2})
    \\
    :=
    &
    \mathcal{O}(\frac{\Delta_{F}}{\hat{\eta}T}
    + \frac{2^{R/3}L_{F}^{1/3}(R\sigma_{F}^{2}+\delta^{2})^{1/3}\Delta_{F}^{2/3}}{T^{2/3}R^{1/3}} +  (\frac{\sigma_{F}\sqrt{L_{F}(N/S-1)\Delta_{F}}}{\sqrt{TN}})
    +\delta^{2})
\end{align*}
\end{proof}
\begin{remark}
    According to Theorem~\ref{theo:bound}, our proposed \methodname{} converges at a sub-linear level.  The linear term $\mathcal{O}(\frac{\Delta_{F}}{\hat{\eta}T})$ is affected by $\hat{\eta}$ and the initialization gap $\Delta_{F}$. The sub-linear term $\mathcal{O}(1/T^{2/3})$ is affected by $R$, especially when $R$ is large due to the exponential factor $2^{R}$. As the local approximation error of the gradient $\delta$ grows, both the convergence radius $\mathcal{O}(\delta)$ and the sub-linear term $\mathcal{O}(1/T^{2/3})$ are affected by the local optimizer selection significantly. Another sub-linear term $\mathcal{O}(\sqrt{T})$ is eliminated if $N/S-1=0$ when all the clients are sampled. Otherwise, the sub-linear rate is mainly affected by $\sigma_{F}$.

    \methodname{} aligns the training objectives across clients by introducing $p_{\kappa, i}$ and reduces the impact of non-IID data on the feature extractor $\phi$ through contrastive learning. Both of these designs can effectively reduce differences in local gradients among clients during training, thereby reducing $\sigma_F$ and subsequently lowering the upper bound. During training, $p_{\kappa, i}$ and $p_{\rho, i}$ incorporate information from the local datasets. By using them as part of the input, \methodname{} effectively reduces the randomness in gradient computation, thereby lowering $\delta$ and consequently reducing the upper bound.
    
\end{remark}

\end{document}